\pgfplotsset{compat=1.16}
\newcommand{\revision}[1]{{#1}}
\newcommand{\rerevision}[1]{{#1}}
\newcommand{\minrevision}[1]{{#1}}
\let\tempCaption=\caption
\renewcommand{\caption}[1]{\itshape \tempCaption{#1}}
\newcommand{\set}[1]{\left\{#1\right\}}
\newcommand{\nset}[2]{\set{#1,\dots,#2}}
\newcommand{\pa}[1]{\left(#1\right)}
\newcommand{\ang}[1]{\left<#1\right>}
\newcommand{\bra}[1]{\left[#1\right]}
\newcommand{\abs}[1]{\left|#1\right|}
\newcommand{\norm}[1]{\left\|#1\right\|}
\newcommand{\normop}[1]{{\left\vert\kern-0.25ex\left\vert\kern-0.25ex\left\vert #1
			\right\vert\kern-0.25ex\right\vert\kern-0.25ex\right\vert}}
\newcommand{\mat}[1]{\begin{matrix}#1\end{matrix}}
\newcommand{\bmat}[1]{\bra{\mat{#1}}}
\newcommand{\cas}[1]{\begin{cases}#1\end{cases}}
\newcommand{\der}[2]{\frac{\partial #1}{\partial #2}}
\newcommand{\restr}[2]{{
\left.\kern-\nulldelimiterspace
#1
\vphantom{\big|}
\right|_{#2}
}}
\newcommand{\fun}[5]{  
	\begin{array}{ccccl} 
		#1 & : & #2 & \longrightarrow    & #3 \\
		   &   & #4 & \longmapsto & #5 \\
	\end{array}}
\newcommand{%
				\def\svgwidth{\columnwidth}
			\import{./}{.pdf_tex}
}[2]{%
				\def\svgwidth{#1\columnwidth}
			\import{./}{#2.pdf_tex}
}
\DeclareMathOperator{\diag}{diag}
\DeclareMathOperator{\Ima}{Im}
\DeclareMathOperator*{\argmax}{argmax}
\DeclareMathOperator{\Span}{Span}
\DeclareMathOperator{\Ker}{Ker}
\def\LC{\mathring{\nabla}}
\def\E{\mathds{E}}
\def\M{\mathcal{M}}
\def\R{\mathbb{R}}
\def\X{\mathcal{X}}
\def\Y{\mathcal{Y}}
\def\1{\mathds{1}}
\theoremstyle{plain}%
\newtheorem{thm}{Theorem}[section]
\newtheorem{prop}[thm]{Proposition}%
\newtheorem{lem}[thm]{Lemma}
\newtheorem{cor}[thm]{Corollary}
\theoremstyle{remark}%
\newtheorem{rem}{Remark}%
\theoremstyle{definition}%
\newtheorem{definition}{Definition}%
\begin{document}

\title{\revision{Adversarial attacks on neural networks through canonical Riemannian foliations}}

\author*[1]{\fnm{Eliot} \sur{Tron}}\email{eliot.tron@enac.fr}

\author[1,2]{\fnm{Nicolas} \sur{Couellan}}\email{nicolas.couellan@recherche.enac.fr}

\author[1,2]{\fnm{St\'{e}phane} \sur{Puechmorel}}\email{stephane.puechmorel@enac.fr}

\affil*[1]{\orgname{Ecole Nationale de l'Aviation Civile}, \orgaddress{\street{7 Avenue Edouard Belin}, \city{Toulouse}, \postcode{31400}, \country{France}}}

\affil[2]{\orgname{Institut de Mathématiques de Toulouse}, \orgdiv{UMR 5219}, Universit\'{e} de Toulouse, CNRS, UPS, \orgaddress{\street{118 route de Narbonne}, \city{Toulouse}, \postcode{F-31062} Cedex 9, \country{France}}}

\abstract{%
				Deep learning models are known to be vulnerable to adversarial attacks. Adversarial learning is therefore becoming a crucial task. We propose a new vision on neural network robustness using Riemannian geometry and foliation theory. The idea is illustrated by creating a new adversarial attack that takes into account the curvature of the data space. This new adversarial attack, called the \emph{two-step spectral attack}, is a piece-wise linear approximation of a geodesic in the data space. The data space is treated as a (pseudo) Riemannian manifold equipped with the pullback of the Fisher Information Metric (FIM) of the neural network. In most cases, this metric is only semi-definite and its kernel becomes a central object to study. A canonical foliation is derived from this kernel. The curvature of transverse leaves gives the appropriate correction to get a two-step approximation of the geodesic and hence a new efficient adversarial attack. The method is first illustrated on a 2D toy example in order to visualize the neural network foliation and the corresponding attacks. \revision{Next, we report numerical results on the \texttt{MNIST} and \texttt{CIFAR10} datasets with the proposed technique and state of the art attacks presented in \cite{zhaoAdversarialAttackDetection2019} (OSSA) and \cite{croce_reliable_2020} (AutoAttack).} The results show that the proposed attack is more efficient at all levels of available budget for the attack (norm of the attack), confirming that the curvature of the transverse neural network FIM foliation plays an important role in the robustness of neural networks. \rerevision{The main objective and interest of this study is to provide a mathematical understanding of the geometrical issues at play in the data space when constructing efficient attacks on neural networks.}
}

\keywords{
Neural Networks, Robustness, Fisher Information Metric, Information geometry, Adversarial attacks.
}

\maketitle

\section{Introduction}
\label{sec:introduction}

Lately there has been a growing interest in the analysis of neural network robustness and the sensitivity of such models to input perturbations (\cite{Fawzi2018,Shaham2018,Wong2018,raghunathanCertifiedDefensesAdversarial2020}). Most of these investigations have highlighted their weakness to handle adversarial attacks (\cite{Szegedy2014}) and have proposed some means to increase their robustness. Adversarial attacks are real threats that could slow down or eventually stop the development of neural network models or their applications in contexts where robustness guarantees are needed. For example, in the specific case of aviation safety, immunization of critical systems to adversarial attacks should not only be guaranteed but also certified. Therefore, addressing the robustness of future on-board or air traffic control automated systems based on such models is a main concern.

Adversarial attacks are designed to fool classification models by introducing perturbations in the input data. These perturbations remain small and in the case of images for example, may be undetectable to the human eye. So far, most of the research effort has focused on designing such attacks in order to augment the training dataset with the constructed adversarial samples and expecting that training will be more robust. Among these methods, one can refer to the Fast Gradient Sign methods (\cite{goodfellow2015}), robust optimization methods (\cite{madry2018}), DeepFool (\cite{deepfool2016}), and others (\cite{fawzi2017}). There are major drawbacks with these approaches. Most attacks are data dependent and provide no guarantees that all relevant attacks have been considered and added to the training set. Furthermore, training in such manner does not acquire a global knowledge about the weakness of the model towards adversarial threats. It will only gain robustness for attacks that have been added to the dataset. However, crafting adversarial attacks by exploiting the properties of neural network learning is useful to understand the principles at play in the robustness or sensitivity of neural architectures. 

Many authors consider neural network attacks and robustness properties in a Euclidean input space. Yet, it is commonly admitted that to learn from high dimensional data, data must lie in a low dimensional manifold (\cite{fefferman2016}). Such manifold has in general non-zero curvature and Riemannian geometry should therefore be a more appropriate setting to analyze distances and sensitivity from an attack point of view. Furthermore, to analyze neural network model separation capabilities and its robustness, it is critical to understand not only the topology of the decision boundaries in the input space but also the topology of iso-information regions induced by the neural network. Again, there is no reason to believe that these sub-manifolds have zero curvature in general. The Fisher information metric (FIM) is a valid metric for such purpose. Indeed, the network output is seen as a discrete probability that lies on a statistical manifold. The FIM may then be used as a Riemannian metric at the output and the pullback metric of the Fisher information as a metric for the input manifold (\cite{zhaoAdversarialAttackDetection2019}). The importance of the FIM in the context of deep neural networks has already been pointed out by several authors. 
In \cite{karakida19a}, it is shown that the FIM defines the landscape of the parameter space and the maximum FIM eigenvalue defines an approximation of the appropriate learning rate for gradient methods. The FIM with respect to data (also called local data matrix) instead of network parameters has also been investigated from a geometric perspective in \cite{grementieriModelcentricDataManifold2021}. The authors have shown that training data are organized on sub-manifolds (leaves) of a foliation of the data domain. A few authors have also tried to exploit this geometric knowledge to construct adversarial attacks or get some form of immunization from them. In \cite{zhaoAdversarialAttackDetection2019}, the direction of eigenvector corresponding to the maximum eigenvalue of the pullback FIM metric is used as a direction of attack where as in \cite{shenDefendingAdversarialAttacks2019}, similar developments are proposed to robustify the model by regularizing the neural network loss function by the trace of the FIM. 
\rerevision{
In~\cite{yan2024enhance,picot2022adversarial}, the authors directly use the geodesic distance associated with the FIM as a regularisation term during the training of the neural network. They show that exploiting the geometry of the FIM makes the network more robust to adversarial attacks. In~\cite{picot2022adversarial}, they also suggest that some notion of curvature of the statistical manifold is linked to the robustness of the network to adversarial attacks, as we will investigate in this article by looking at the foliated input space.
}
More specifically, in \cite{zhaoAdversarialAttackDetection2019}, the authors have shown that the one step spectral attack (OSSA) they proposed is more efficient than the Fast Gradient Sign and the One Step Target Class methods \cite{kurakin2016}. \rerevision{This shows that the choice of the FIM metric as a measure of sensitivity is relevant and supports our decision to also use it in our study. However, unlike this previous work, we will investigate its geometrical properties directly in the data space where attacks are constructed.} \rerevision{Note that other kinds of geometry have also been investigated, such as Wasserstein geometry in~\cite{zhu2023interpolation} where the authors show that Wasserstein barycenters make good candidates for robust data augmentation.
}

In this work, we build on the work of \cite{zhaoAdversarialAttackDetection2019} and exploit further the geometrical properties of the foliation of the pullback metric of the neural network FIM. More specifically, we show that the curvature of the leaf of the transverse foliation can be utilized to construct a two-step attack procedure referred as \textit{two-step} attack. Given a budget of attack, meaning the Euclidean norm of the attack vector, we first move in the direction of the eigenvector corresponding to the maximum eigenvalue of the FIM as proposed in \cite{zhaoAdversarialAttackDetection2019} and then make another move that takes into account the curvature. The two steps could be seen as a discretized move along a geodesic curve. The interest of this procedure is not only to prove that, for a given budget of attack, it is possible to construct worse attacks than those proposed in \cite{zhaoAdversarialAttackDetection2019} but also and more importantly to emphasize the role of curvature in the sensitivity of neural network models. Mathematically, this translates into the expression of the quadratic form approaching the Kullback-Leibler divergence between the network output probability distribution at the origin of the attack and the probability distribution at the point reached by the attack. Indeed, we show that this expression makes explicit use of the Riemannian curvature tensor of the foliation in the input space. To better grasp and visualize the effect of curvature on the attacks, we provide details on the calculation of the neural network FIM and the corresponding attacks on a \texttt{XOR} toy problem. The small dimension of the problem allows also 2D illustrations of the phenomenon. To demonstrate experimentally the benefit of exploiting the foliation curvature, experiments are conducted first with the \texttt{XOR} toy problem and next with the \texttt{MNIST} (\cite{lecun1998mnist}) and \texttt{CIFAR10} (\cite{cifar-10}) dataset. \revision{A measure of attack efficiency is defined using the fooling rate that computes the percentage of predictions that are changing class on random data points. The fooling rate is then used to compare the \textit{two-step} attack to OSSA, and the state of the art technique AutoAttack \cite{croce_reliable_2020}.} The result show a substantial increase for the fooling rates when curvature is utilized.  \\
The contribution of this study is two-fold: \rerevision{First and mainly, it proposes a Riemannian geometry framework for analyzing the robustness of neural networks. Gaining mathematical understanding on the efficiency of attacks should give future perspectives on strong defense mechanisms for neural networks. Next, since we use the construction of an attack as a methodology to understand the role of the curvature of the transverse leaves in the efficiency of an attack, we also provide as a by-product a two-step procedure for crafting adversarial attacks. These attacks have proven to be more effective than state of the art attacks on small neural architectures. However, there are limitations in the generalization of our construction technique on larger instances as it involves the computation of the whole Jacobian of the neural network. This computation may turn out to be untractable for very large networks.}
\\
The paper is organized as follows. Section~\ref{sec:problem_statement} details the general mathematical framework of this study and defines precisely the adversarial attacks that are considered. In the first part of Section~\ref{sec:local_method}, the construction of the local attack by \cite{zhaoAdversarialAttackDetection2019} is recalled. \revision{Next, in Section~\ref{sec:dogleg_attack}, the main developments of this research are detailed. They consist in extending the local attack using the geometry of the problem and constructing the so-called two-step attack.} Section~\ref{sec:example} is dedicated to the illustration of the method through a simple play-test problem. It details more precisely the required calculations on a simple low dimensional problem and provides illustrations of the resulting foliation at the heart of the technique. Section~\ref{sec:num_res} provides the numerical results and Section~\ref{sec:conclusion} concludes the article. Appendices~\ref{app:intro_geo} and \ref{app:FIM} recall some important concepts of Riemannian geometry that are used throughout the article. \revision{Appendix~\ref{app:proofs} contains the proof of some stated theoretical result.}

\section{Problem statement}
\label{sec:problem_statement}

\subsection{Setup}
\label{ssec:setup}

In this paper, we are studying the behavior of a neural network $N:\X\to\M$. Its output can be considered as a parameterized probability density function $p_\theta\pa{y\mid x} \in \M$ where $\M$ is a manifold of such probability density functions, $x\in\X$ is the input,  $y\in \Y$ is the targeted label and $\theta\in \Theta$ is the parameter of the model (for instance the weights and biases in a perceptron). The geometric study of such probability distributions is part of \emph{Information Geometry} \cite{nielsenElementaryIntroductionInformation2020}.

In this case, $\M$ and $\X$ are two (pseudo) Riemannian Manifolds when equipped with the \emph{Fisher Information Metric} (FIM). Fisher Information is originally a way to measure the variance of a distribution along a parameter. It was used as a Riemannian metric by Jeffrey and Rao (1945-1946), then by Amari, which gave birth to \emph{Information Geometry} \cite{amari2016information, nielsenElementaryIntroductionInformation2020}. Measuring how the distribution of the predicted labels changes with input perturbations falls exactly in our usecase. This explains why the FIM is a good candidate to be $\X$ and $\M$'s metric.

\begin{definition}[Fisher Information Metric]
The Fisher Information Metric (FIM) on the manifold $\X$ at the point $x$ is defined by the following positive semi-definite symmetric matrix:
\[g_{i j}(x) = \E_{y \mid x,\theta}\bra{\partial_{x_i} {\ln p(y\mid x,\theta)} \partial_{x_j} {\ln p(y\mid x,\theta)}}.\]
\end{definition}

\begin{rem}
Note that this definition is not the usual definition of the Fisher Information. Indeed, the one defined by Fisher, used in the work of Amari and many others, differentiate with respect to the parameter $\theta$ where we differentiate with respect to the input $x$. The authors in~\cite{grementieriModelcentricDataManifold2021} call this new metric the \emph{local data matrix} to avoid confusion. 
\end{rem}

It is important to notice that for the probability distribution given by a neural network, the Fisher Information is only a pseudo-Riemmannian metric, meaning that the matrix $G=\pa{g_{i j}}_{i,j}$ is not full rank. This is mainly due to the fact that $\dim \X \gg \dim \M$ in most cases of neural networks classifiers.

In this pseudo-Riemmannian geometric setting, one can define the kernel of the metric at each point. This subset of the tangent space $T\X$ is called a distribution and it is integrable under some assumptions, meaning that it gives rise to a \emph{foliation} i.e. a partition of $\X$ into submanifolds called \emph{leaves}. With this canonical foliation, a dual one can be defined using the orthogonal\footnote{Orthogonal with respect to the ambiant Euclidean metric.} complement of the kernel at each point. It is called the \emph{transverse} foliation.  At each point, moving along the FIM kernel leaf will not modify the output of the neural network. The properties of the transverse foliation will thus be of great interest in the setting of adversarial attacks. The reader may find more details on these geometric objects in the appendix~\ref{app:intro_geo} and~\ref{app:FIM}.

\subsection{Adversarial attacks}

One primary goal of this article is to craft the best \emph{adversarial attack} possible, or at least approach it. An adversarial attack aims at disturbing the output of the neural network by adding noise to the original input, with the intention of changing the predicted class and thus fooling the network.
The FIM with respect to the input is a good measure of dissimilarity between outputs, given a displacement of the input. In order to change the predicted class after the attack one would want to maximize the dissimilarity between the predicted distribution of probability before and after the attack. In other words, one would like to maximize the FIM's geodesic distance\footnote{see Definition~\ref{def:geo_dist} in~\ref{app:intro_geo}} $d(x_o,x_a)$ between the input point $x_o$ and the point $x_a$ after the attack.

\begin{prop}
The geodesic distance can be expressed with the Riemannian norm and the logarithm map\footnote{see Definition~\ref{def:log_map} in~\ref{app:intro_geo}}:
\[
d(x_o,x_a)^2 = \norm{\log_{x_o} x_a}_{\X}^2 = g_{x_o}\pa{\log_{x_o} x_a, \log_{x_o} x_a}
.\]
Otherwise said, if $v=\log_{x_o} x_a$ is the initial velocity of the geodesic between the two points,
\[
d(x_o,\exp_{x_o}(v))^2 = \norm{v}_{\X}^2 
.\]
\end{prop}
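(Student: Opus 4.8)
The plan is to exhibit one explicit curve from $x_o$ to $x_a$ whose length equals $\norm{v}_\X$, where $v=\log_{x_o}x_a$, and then argue that this curve is length-minimizing, so that it realizes the geodesic distance $d(x_o,x_a)$ defined as the infimum of Riemannian lengths $L(c)=\int_0^1\norm{\dot c(t)}_\X\,\de t$ over curves $c$ joining the two points. The natural candidate is the geodesic $\gamma\colon[0,1]\to\X$, $\gamma(t)=\exp_{x_o}(tv)$, which by definition of the logarithm map satisfies $\gamma(0)=x_o$, $\dot\gamma(0)=v$, and $\gamma(1)=\exp_{x_o}(v)=x_a$.

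First I would establish that geodesics have constant speed: using that the Levi-Civita connection $\nabla$ is metric-compatible and $\nabla_{\dot\gamma}\dot\gamma=0$,
\[
\frac{\de}{\de t}\,g_{\gamma(t)}\pa{\dot\gamma(t),\dot\gamma(t)} = 2\,g_{\gamma(t)}\pa{\nabla_{\dot\gamma}\dot\gamma,\dot\gamma}=0,
\]
so $\norm{\dot\gamma(t)}_\X\equiv\norm{\dot\gamma(0)}_\X=\norm{v}_\X$ on $[0,1]$, whence $L(\gamma)=\int_0^1\norm{\dot\gamma(t)}_\X\,\de t=\norm{v}_\X$. Squaring and unfolding the definition of the Riemannian norm gives $L(\gamma)^2=\norm{v}_\X^2=g_{x_o}(v,v)$, which is the right-hand side of both claimed identities.

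Next I would argue minimality. Since $\log_{x_o}x_a$ is assumed well defined, $x_a$ lies in a normal neighborhood of $x_o$, and the Gauss lemma then shows that $\gamma$ is the unique shortest path from $x_o$ to $x_a$; hence $d(x_o,x_a)=L(\gamma)=\norm{v}_\X$. Substituting $x_a=\exp_{x_o}(v)$ turns this into the second displayed formula, so the proposition follows.

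The hard part — or at least the point requiring more care than the textbook statement — is that the FIM here is only positive semi-definite, so $d$ is really a pseudo-distance and $\exp_{x_o}$, $\log_{x_o}$ are not globally defined on $T_{x_o}\X$. I would handle this either by restricting to a normal neighborhood inside a leaf of the transverse foliation, where the induced metric is a genuine Riemannian metric and the classical argument applies verbatim, or by noting that both sides of the identity are insensitive to displacements tangent to the kernel distribution of $g$ (which contribute zero length and zero to $d$), so one may quotient out the degenerate directions and reduce to the non-degenerate case. Aside from this, the argument is entirely standard and involves no substantial computation.
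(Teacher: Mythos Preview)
The paper does not actually give a proof of this proposition: it is stated without proof in Section~\ref{sec:problem_statement}, and the footnotes point to Appendix~\ref{app:intro_geo}, where the same fact is again stated as a proposition (namely that $\gamma\colon t\mapsto\exp_p(tv)$ is a geodesic from $p$ to $q=\exp_p(v)$ with $l(\gamma)=\norm{v}$) but left unproved, as a standard result of Riemannian geometry. So there is no ``paper's own proof'' to compare against; the authors simply take the identity as known.

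Your argument is the correct textbook one: constant speed of geodesics from metric compatibility of the Levi--Civita connection gives $L(\gamma)=\norm{v}_\X$, and local minimality within a normal neighborhood (via the Gauss lemma) gives $d(x_o,x_a)=L(\gamma)$. This is exactly what the paper is implicitly relying on, and your write-up fills in the details the appendix omits. Your final paragraph on the semi-definite case is also appropriate and goes slightly beyond what the paper makes explicit; the paper's own stance (see Appendix~\ref{app:FIM} and the discussion of the transverse foliation) is essentially your first option, working on transverse leaves where the induced metric is genuinely Riemannian.
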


The optimal solution will thus be the $\hat{x}_a$ maximizing this quantity, with some constraints. Indeed, one of the characteristics of a good adversarial attack is to be as undetectable as possible by usual measurements on the input. A usual measure for that is the Euclidean distance between $x_o$ and $x_a$. The attacked point will thus be constrained to the Euclidean ball of given (small) radius and centered at $x_o$. The optimal attacked point is the result of a geodesic move from $x_o$ with initial velocity $v\in T_{x_o} \X$.

\begin{definition}[$\varepsilon$-Adversarial Attack Problem]
The optimal geodesic attack with a Euclidean budget of $\varepsilon>0$ and initial velocity $v$ verifies: 

\begin{equation}\label{eq:AdvAttack}
				\max_{v\in T_{x_o}\X} \norm{v}_\X^2 
				\text{ subject to }
				\norm{\exp_{x_o}(v) - x_o}_2^2 \leq \varepsilon^2       
				 \tag*{($\varepsilon$-AAP)} 
\end{equation}
\end{definition}
\begin{rem}
				Some other authors \cite{fawzi2018adversarial} are looking for the smallest attack in Euclidean norm that changes the predicted class. We are taking the problem in reverse and looking for an imperceptible attack with respect to a sensor with an activation level of $\varepsilon$ (e.g. the human eye, the human ear, camera or some other devices) with no guarantees that there exists a class changing attack. However, in real life context where dimension is large, it has been shown that such class changing attacks exist for small $\varepsilon$ with high probability \cite{gilmer2018adversarial}.
\end{rem}

A usual optimization algorithm could solve this problem, but the amount of calculus would be tremendous since the geodesic requires to solve an ODE with boundary conditions each time. This greedy solution is thus not feasible in practice. To get close to the optimal solution, the geodesic can be approximated by Euclidean steps.

\section{A local method}\label{sec:local_method}

The authors in \cite{zhaoAdversarialAttackDetection2019} have proposed a method to approximate the solution of a near-\ref{eq:AdvAttack} problem. \revision{Their formulation of the Adversarial Attack Problem is not exactly the same since the criteria they are maximizing is not the geodesic distance, but the Kullback-Leibler divergence.} The matter with (KL) divergences is that they are not distances, because often not symmetric. Nevertheless, if $G$ is the matrix associated to the FIM, a second order Taylor approximation gives us that:
\begin{align*}
& D_{KL}\pa{p_\theta\pa{y\mid x}\parallel p_\theta\pa{y\mid x+ v}} = \frac{1}{2} v^T G_x v + o\pa{\norm{v}^2} \\
& = \frac{1}{2} g_x\pa{v,v} + o\pa{\norm{v}^2} = \frac{1}{2} \norm{v}_\X^2 + o\pa{\norm{v}^2}.
\end{align*}
Then, the Euclidean constraint is taken directly on $v$ meaning that $x_o$ is moved by a Euclidean step in the direction of $v$. 
The problem thus solved in \cite{zhaoAdversarialAttackDetection2019} is a linear approximation of \ref{eq:AdvAttack}. Doing so amounts to approaching the geodesic by its initial velocity.
In other words, the first order approximation $\exp_{x_o}(v) \approx x_o + v$ is used for this section.

\revision{We reformulate and prove below the main result of~\cite{zhaoAdversarialAttackDetection2019} in the context of this study.}

\begin{prop}\label{prop:ossa_vector}
    The vector $\hat{v}$ maximizing the quadratic form $v \mapsto v^T G_x v$ is an eigenvector of the FIM $G_x$ corresponding to the largest eigenvalue. Its re-normalization by $\frac{\varepsilon^2}{\norm{\hat{v}}_2^2}$ gives an approximated solution to \ref{eq:AdvAttack}. This method is illustrated by~\autoref{fig:1step_attack}.
\end{prop}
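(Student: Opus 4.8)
The plan is to establish the claim in two parts: first, the characterization of the maximizer as a top eigenvector, and second, the verification that the re-normalized vector is an admissible approximate solution to \ref{eq:AdvAttack} under the first-order approximation $\exp_{x_o}(v)\approx x_o+v$.

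For the first part, I would recall that $G_x$ is a symmetric positive semi-definite matrix, so by the spectral theorem it admits an orthonormal eigenbasis $e_1,\dots,e_n$ with eigenvalues $\lambda_1\geq\dots\geq\lambda_n\geq 0$. The natural scale-invariant formulation of "maximizing the quadratic form $v\mapsto v^TG_xv$" is to maximize the Rayleigh quotient $v^TG_xv/\norm{v}_2^2$, or equivalently to maximize $v^TG_xv$ subject to $\norm{v}_2=1$; otherwise the supremum is $+\infty$ whenever $\lambda_1>0$. Expanding $v=\sum_i c_i e_i$ with $\sum_i c_i^2=1$ gives $v^TG_xv=\sum_i\lambda_i c_i^2\leq\lambda_1$, with equality exactly when $v$ is supported on the $\lambda_1$-eigenspace. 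Hence $\hat v=e_1$ (up to sign and up to choice within the top eigenspace) is the maximizer — this is the classical variational characterization of eigenvalues, and I would either cite it or give this one-line argument.

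For the second part, I would argue that under the approximation $\exp_{x_o}(\hat v)\approx x_o+\hat v$, the constraint $\norm{\exp_{x_o}(\hat v)-x_o}_2^2\leq\varepsilon^2$ becomes simply $\norm{\hat v}_2^2\leq\varepsilon^2$, so the admissible set is the Euclidean ball of radius $\varepsilon$. On this ball the objective $\norm{v}_\X^2=v^TG_xv$ is still maximized along the top eigendirection, and since the objective is increasing in $\norm{v}_2$ along that direction, the maximum is attained on the boundary sphere of radius $\varepsilon$. Therefore one rescales the unit eigenvector so that its Euclidean norm is exactly $\varepsilon$; starting from any eigenvector $\hat v$, the correctly scaled vector is $\varepsilon\,\hat v/\norm{\hat v}_2$, which the statement writes via the factor $\frac{\varepsilon^2}{\norm{\hat v}_2^2}$ at the level of the quadratic form. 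I would note that this vector solves the linearized problem exactly and hence solves \ref{eq:AdvAttack} approximately, the approximation error being exactly the $o(\norm{v}^2)$ gap between $\exp_{x_o}(v)$ and $x_o+v$ (equivalently, between the KL divergence and its quadratic Taylor term discussed just above the statement).

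The main obstacle here is not mathematical depth but rather pinning down precisely what optimization problem is being solved: the raw problem $\max_v v^TG_xv$ is unbounded, so the statement implicitly refers either to the norm-constrained problem or to the Rayleigh quotient, and the role of the normalization factor must be matched carefully to the $\varepsilon$-budget in \ref{eq:AdvAttack}. Once the problem is stated correctly, everything reduces to the spectral theorem plus the observation that the linearized feasible set is a Euclidean ball on which a positive semi-definite quadratic form is maximized at a top eigenvector scaled to the boundary; I would keep the write-up short and emphasize the reduction rather than re-deriving the variational principle in detail.
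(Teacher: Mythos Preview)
Your argument is correct. The paper's own proof reaches the same conclusion by a slightly different route: rather than expanding in an eigenbasis and invoking the variational characterization of the top eigenvalue, it applies the KKT conditions to the constrained problem $\max v^TG_xv$ subject to $\norm{v}_2^2=\varepsilon^2$, obtaining $G_x\hat v=\lambda\hat v$ directly from the stationarity condition, and then evaluates $\norm{\hat v}_\X^2=\lambda\varepsilon^2$ to pick the largest eigenvalue. Your Rayleigh-quotient argument is more self-contained and makes the implicit norm constraint explicit from the outset, which is a nice touch; the paper's KKT approach, on the other hand, has the advantage of setting up exactly the same Lagrangian machinery that is reused verbatim in Section~\ref{sec:dogleg_attack} for the second step, so it serves as a warm-up for what follows.
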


\begin{figure}[ht]
    \centering
				\def\svgwidth{0.4\columnwidth}
\begingroup%
  \makeatletter%
  \providecommand\color[2][]{%
    \errmessage{(Inkscape) Color is used for the text in Inkscape, but the package 'color.sty' is not loaded}%
    \renewcommand\color[2][]{}%
  }%
  \providecommand\transparent[1]{%
    \errmessage{(Inkscape) Transparency is used (non-zero) for the text in Inkscape, but the package 'transparent.sty' is not loaded}%
    \renewcommand\transparent[1]{}%
  }%
  \providecommand\rotatebox[2]{#2}%
  \newcommand*\fsize{\dimexpr\f@size pt\relax}%
  \newcommand*\lineheight[1]{\fontsize{\fsize}{#1\fsize}\selectfont}%
  \ifx\svgwidth\undefined%
    \setlength{\unitlength}{484.61043711bp}%
    \ifx\svgscale\undefined%
      \relax%
    \else%
      \setlength{\unitlength}{\unitlength * \real{\svgscale}}%
    \fi%
  \else%
    \setlength{\unitlength}{\svgwidth}%
  \fi%
  \global\let\svgwidth\undefined%
  \global\let\svgscale\undefined%
  \makeatother%
  \begin{picture}(1,0.7353281)%
    \lineheight{1}%
    \setlength\tabcolsep{0pt}%
    \put(0,0){\includegraphics[width=\unitlength,page=1]{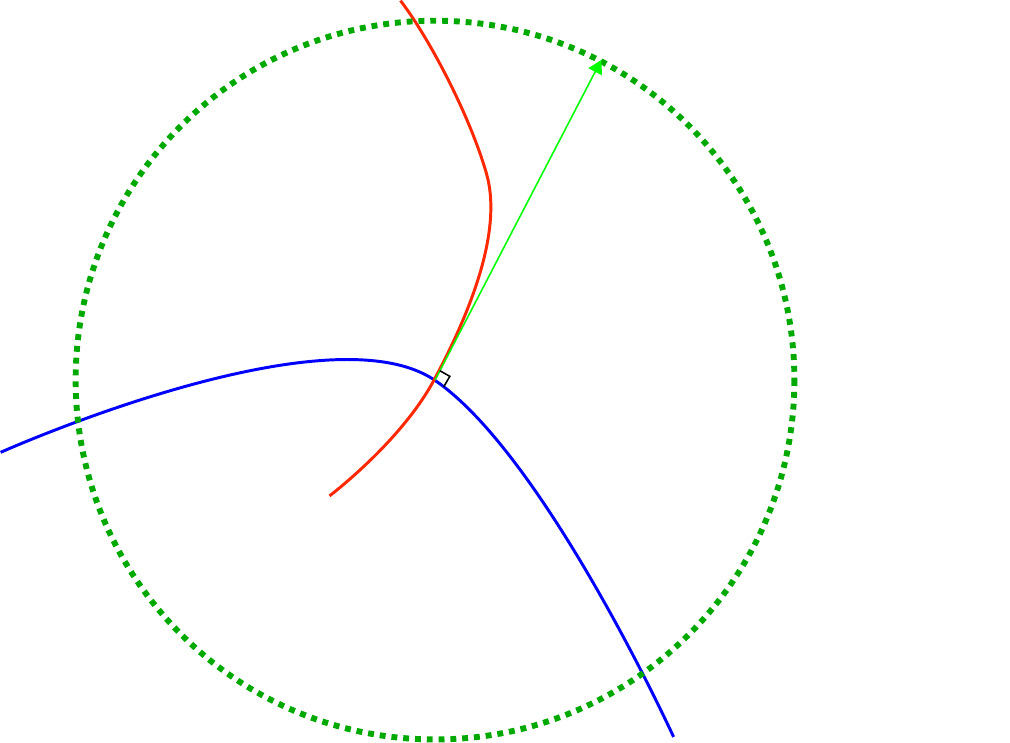}}%
    \put(0.50865986,0.46195035){\color[rgb]{0,1,0}\transparent{0.46666667}\makebox(0,0)[lt]{\lineheight{1.25}\smash{\begin{tabular}[t]{l}$v$\end{tabular}}}}%
    \put(0.18106188,0.12229883){\color[rgb]{0,0.6627451,0}\makebox(0,0)[lt]{\lineheight{1.25}\smash{\begin{tabular}[t]{l}$\varepsilon$\end{tabular}}}}%
    \put(0.5887719,0.69177964){\color[rgb]{0,0,0}\makebox(0,0)[lt]{\lineheight{1.25}\smash{\begin{tabular}[t]{l}$x_a$\end{tabular}}}}%
    \put(0.45000324,0.35498444){\color[rgb]{0,0,0}\makebox(0,0)[lt]{\lineheight{1.25}\smash{\begin{tabular}[t]{l}$x_o$\end{tabular}}}}%
  \end{picture}%
\endgroup%

    \caption{The one-step spectral attack in action. The circle represents the Euclidean budget. The blue curve represents the leaf of the kernel foliation and the red curve represents the transverse foliation.}
    \label{fig:1step_attack}
\end{figure}

\revision{The proof of Proposition~\ref{prop:ossa_vector} can be found in \autoref{app:proofs}.\\}

\begin{rem}
				The KKT optimality conditions lead to two valid and yet very different solutions: $\hat{v}$ and $-\hat{v}$. To choose between the two, we select the solution decreasing the probability of the original class the most, meaning that we select the attack vector $v\in\set{\hat{v},-\hat{v}}$ satisfying:
    \begin{equation}
                 p_\theta\pa{y_{x_o} \mid x_o - v} > p_\theta\pa{y_{x_o} \mid x_o + v}
    .\end{equation}
    With this choice, we increase the probability of fooling the network on this point.
\end{rem}

This first method is local and does not take into account the curvature of the data. Hence, \revision{we propose a new method to improve the performances, especially in regions of $\X$ where this curvature is high.}

\section{A two-step attack}%
\label{sec:dogleg_attack}

The idea here is to take a first local step $v$ as in Section~\ref{sec:local_method}, and then take a second step to refine the linear approximation. The two-step nature of this attack allows the second step to take into account the curvature of the area around the input point. This method is able to exploit the geometry of the problem to construct a better approximation of the original problem \ref{eq:AdvAttack}. This attack will be named the Two Step Spectral Attack (TSSA) .

Let $v$ be the approximated solution of \ref{eq:AdvAttack} using the method of Section~\ref{sec:local_method} but with a budget of $\mu^2 < \varepsilon^2$.

The new problem to solve is the following:
\begin{equation}
\label{eq:step_2}
				\max_w \norm{w}_\X^2  \text{ subject to }
				\cas{\norm{w+v}_2^2 \leq \varepsilon^2 \\ 
								\norm{v}_2^2 = \mu^2 < \varepsilon^2 \\ 
								v \text{ eigenvector of }G_x } \tag{S2P}
\end{equation}

\rerevision{In practice,} one could solve this problem in the same way as in Section~\ref{sec:local_method} by taking $w$ the eigenvector of  $G_{x+v}$ with the largest eigenvalue once again (see \autoref{fig:2step_attack}). By doing that, we linearly approach the geodesic between $x_o$ and $x_a$ by \revision{two line segments in the Euclidean sense}. Taking more steps would approximate the geodesic better. Using only two steps as proposed is a simple procedure from a computational point of view and still achieves significantly better results than taking just one step as we will see in Section~\ref{sec:example}.

\begin{figure}[ht]
    \centering
				\def\svgwidth{0.5\columnwidth}
\begingroup%
  \makeatletter%
  \providecommand\color[2][]{%
    \errmessage{(Inkscape) Color is used for the text in Inkscape, but the package 'color.sty' is not loaded}%
    \renewcommand\color[2][]{}%
  }%
  \providecommand\transparent[1]{%
    \errmessage{(Inkscape) Transparency is used (non-zero) for the text in Inkscape, but the package 'transparent.sty' is not loaded}%
    \renewcommand\transparent[1]{}%
  }%
  \providecommand\rotatebox[2]{#2}%
  \newcommand*\fsize{\dimexpr\f@size pt\relax}%
  \newcommand*\lineheight[1]{\fontsize{\fsize}{#1\fsize}\selectfont}%
  \ifx\svgwidth\undefined%
    \setlength{\unitlength}{484.61043711bp}%
    \ifx\svgscale\undefined%
      \relax%
    \else%
      \setlength{\unitlength}{\unitlength * \real{\svgscale}}%
    \fi%
  \else%
    \setlength{\unitlength}{\svgwidth}%
  \fi%
  \global\let\svgwidth\undefined%
  \global\let\svgscale\undefined%
  \makeatother%
  \begin{picture}(1,0.7353281)%
    \lineheight{1}%
    \setlength\tabcolsep{0pt}%
    \put(0,0){\includegraphics[width=\unitlength,page=1]{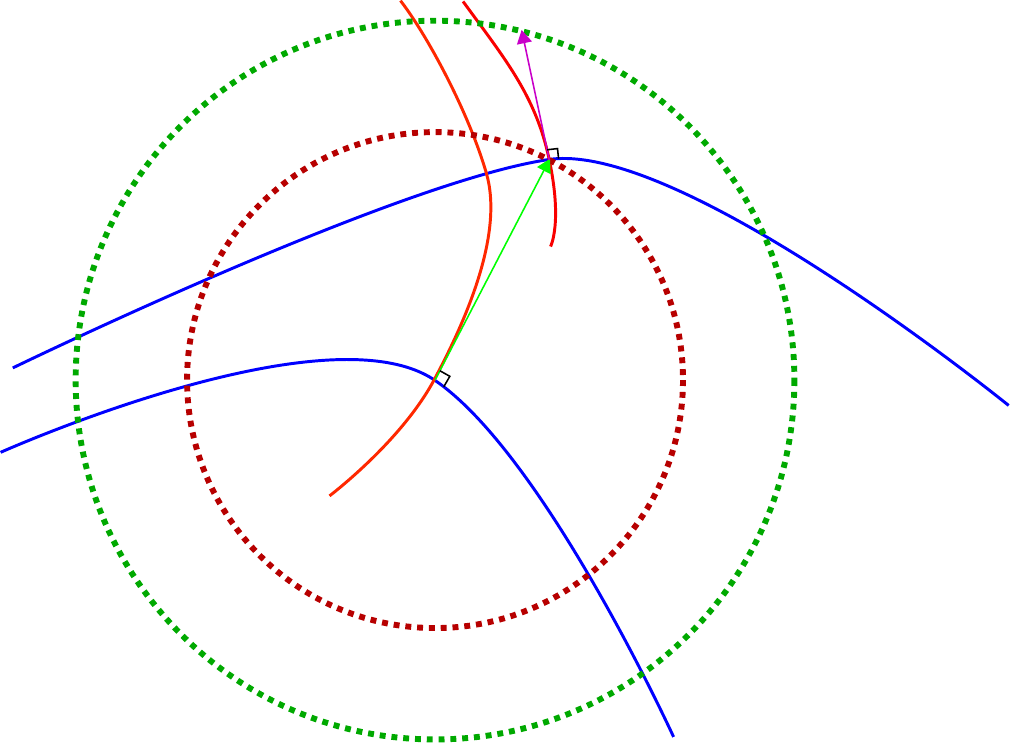}}%
    \put(0.50865986,0.46195035){\color[rgb]{0,1,0}\transparent{0.46666667}\makebox(0,0)[lt]{\lineheight{1.25}\smash{\begin{tabular}[t]{l}$v$\end{tabular}}}}%
    \put(0.544857,0.64112927){\color[rgb]{0.78431373,0,0.78431373}\makebox(0,0)[lt]{\lineheight{1.25}\smash{\begin{tabular}[t]{l}$w$\end{tabular}}}}%
    \put(0.23522912,0.22621148){\color[rgb]{0.71764706,0,0}\makebox(0,0)[lt]{\lineheight{1.25}\smash{\begin{tabular}[t]{l}$\mu$\end{tabular}}}}%
    \put(0.18106188,0.12229883){\color[rgb]{0,0.6627451,0}\makebox(0,0)[lt]{\lineheight{1.25}\smash{\begin{tabular}[t]{l}$\varepsilon$\end{tabular}}}}%
    \put(0.50740312,0.71965024){\color[rgb]{0,0,0}\makebox(0,0)[lt]{\lineheight{1.25}\smash{\begin{tabular}[t]{l}$x_a$\end{tabular}}}}%
    \put(0.45000324,0.35498444){\color[rgb]{0,0,0}\makebox(0,0)[lt]{\lineheight{1.25}\smash{\begin{tabular}[t]{l}$x_o$\end{tabular}}}}%
  \end{picture}%
\endgroup%

    \caption{The two-step attack in action. The two circles represent the Euclidean budget. The blue curves represent the leaves of the kernel foliation and the red curves represent the transverse foliation.}
    \label{fig:2step_attack}
\end{figure}

To explicit the action of the curvature at $x$ of the input space on the trajectory of this multi-step attack, we will approximate $G_{x+v}$ by its value at $x$ using normal coordinates\footnote{see Definition~\ref{def:normal_coord} in~\ref{app:intro_geo}}.
\begin{rem}
In the sequel of the article, for any vector (or tensor) $x$, $\bar{x}$ will denote $x$ expressed in terms of normal coordinates. Besides, we are going to use the Einstein summation notation omitting the symbol $\sum$ whenever the index over which the sum should apply repeats.
\end{rem}

\begin{prop}
If $\overline{x}$ are the normal coordinates at $x_o$ and if $R$ is the Riemannian curvature tensor, then
\begin{equation}
\label{eq:approx_normal_metric}
w^T G_{x+v} w = \der{x^m}{{\overline{x}}^i} \der{x^n}{{\overline{x}}^j} \pa{\delta_{mn} + \frac{1}{3} \overline{R}_{m k l n}(x)\overline{v}^k\overline{v}^l} w^iw^j + o\pa{\norm{{v}}^2}
\end{equation} 

\end{prop}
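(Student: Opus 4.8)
The plan is to reduce the statement to the classical second-order Taylor expansion of a Riemannian metric about the origin of a normal coordinate chart and then to undo the change of coordinates. Since the pullback FIM is only positive semi-definite, the first thing to pin down is where the normal coordinates live: I would work on the transverse leaf through $x_o$, on which the pullback FIM restricts to a genuine (non-degenerate) Riemannian metric, and take $R$ to be the Riemannian curvature tensor of that leaf. The first step of the attack moves $x_o$ to $x_o+v$ along a Euclidean segment with $v$ an eigenvector of $G_{x_o}$ for a positive eigenvalue, hence $v\in\pa{\Ker G_{x_o}}^\perp$, i.e. $v$ is tangent to that leaf; consequently $x_o+v$ lies at Euclidean distance $O\pa{\norm{v}^2}$ from the leaf, and this off-leaf discrepancy — like every other approximation below — will only affect the result at order $o\pa{\norm{v}^2}$.

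The core ingredient is the well-known expansion of the metric in normal coordinates (see Appendix~\ref{app:intro_geo}, or a standard reference): in normal coordinates $\overline{x}$ centred at $x_o$ one has $\overline{g}_{mn}(x_o)=\delta_{mn}$ and $\partial_k\overline{g}_{mn}(x_o)=0$ (the Christoffel symbols vanish at the centre), while the second derivatives of the metric at $x_o$ are governed by the curvature, so that
\[
\overline{g}_{mn}\pa{\overline{x}}=\delta_{mn}+\frac{1}{3}\overline{R}_{mkln}(x_o)\,\overline{x}^k\overline{x}^l+o\pa{\norm{\overline{x}}^2}.
\]
I would either cite this or derive it from the Jacobi-field expansion of $\exp_{x_o}$. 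Evaluating at the endpoint of the first step: in the chart $\overline{x}$ this point has coordinates $\overline{v}+o\pa{\norm{v}}$, where $\overline{v}$ is $v$ read in normal coordinates; since the curvature term above is already quadratic, replacing the exact coordinates of $x_o+v$ by $\overline{v}$ costs only $o\pa{\norm{v}^2}$, leaving $\overline{g}_{mn}=\delta_{mn}+\tfrac{1}{3}\overline{R}_{mkln}(x_o)\overline{v}^k\overline{v}^l+o\pa{\norm{v}^2}$ at that point.

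It then remains to transform back to the original coordinates. Writing $G_{x+v}$ in the $x$-coordinates by the tensor change-of-variables rule and contracting with $w$ gives $w^TG_{x+v}w=\der{x^m}{\overline{x}^i}\der{x^n}{\overline{x}^j}\,\overline{g}_{mn}\,w^iw^j$; substituting the expansion just obtained and collecting terms yields exactly~\eqref{eq:approx_normal_metric}, with the Jacobian factors carrying the flat part $\delta_{mn}$ and the term $\tfrac{1}{3}\overline{R}_{mkln}(x_o)\overline{v}^k\overline{v}^l$ isolating the contribution of the leaf curvature. The only genuine computation is the curvature identity for the second derivatives of the metric in normal coordinates — where the constant $1/3$ and the index placement in $\overline{R}_{mkln}$ must be matched to the paper's curvature convention (this is the step I would expect to be most delicate, or simply defer to the appendix); everything else is remainder bookkeeping, the point being that each approximation made (the off-leaf displacement of $x_o+v$, the replacement of its normal coordinates by $\overline{v}$, and the truncation of the metric expansion) is $o\pa{\norm{v}^2}$ and can be absorbed into the stated error.
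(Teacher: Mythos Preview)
Your proposal is correct and follows essentially the same route as the paper: pass to normal coordinates via the tensor change-of-variables rule, then invoke the classical second-order Taylor expansion of the metric in normal coordinates (the paper simply cites Willmore, Section~3.5, Corollary~7, for this step). Your extra care about the degeneracy of the pullback FIM (restricting to the transverse leaf) and the explicit remainder bookkeeping go beyond what the paper spells out, but the core argument is the same.
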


\begin{proof}
\begin{align*}
				  w^T G_{x+v} w &=  \pa{G_{x+v}}_{ij} w^i w^j \\
																	 &=  \der{x^m}{{\overline{x}}^i} \der{x^n}{{\overline{x}}^j} \pa{\overline{G}_{x+v}}_{nm} w^iw^j \\
																	 &=   \der{x^m}{{\overline{x}}^i} \der{x^n}{{\overline{x}}^j} \pa{\delta_{mn} + \frac{1}{3} \overline{R}_{m k l n}(x)\overline{v}^k\overline{v}^l+ o\pa{\norm{{v}}^2}} w^iw^j
.\end{align*}
The last line is obtained by the second-order Taylor expansion of $g$ in normal coordinates centered at $x$. The reader can find details of the computations in~\cite{willmore2013introduction}, Section 3.5 Corollary 7.
\end{proof}

\begin{prop}
By denoting $\overline{w} = \pa{\der{x^m}{{\overline{x}}^i}w^i}_m = Pw$ and $R_v = \overline{R}_{m k l n}(x)\overline{v}^k \overline{v}^l$, \autoref{eq:approx_normal_metric} can be rewritten with matrix notation by the following:
\begin{equation}
\label{eq:approx_norm}
\norm{w}_\X^2 = \norm{\overline{w}}_2^2 + \frac{1}{3} {\overline{w}}^T R_v \overline{w} + o\pa{\norm{v}^2} 
.
\end{equation}
\end{prop}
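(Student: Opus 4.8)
The plan is to obtain \eqref{eq:approx_norm} directly from the preceding proposition, the only work being to regroup the Einstein-summed indices of \eqref{eq:approx_normal_metric} into matrix products. Write $P = \pa{\der{x^m}{\overline{x}^i}}_{m,i}$ for the matrix relating a coordinate vector to its expression in normal coordinates, so that $\overline{w} = Pw$ is exactly the vector with components $\overline{w}^m = \der{x^m}{\overline{x}^i}w^i$, and recall that by definition of the Riemannian norm at the point $x+v$ one has $\norm{w}_\X^2 = w^T G_{x+v} w$, which is the quantity expanded on the left-hand side of \eqref{eq:approx_normal_metric}.

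First I would split the right-hand side of \eqref{eq:approx_normal_metric} into the contribution coming from $\delta_{mn}$ and the contribution coming from $\frac{1}{3}\overline{R}_{mkln}(x)\overline{v}^k\overline{v}^l$, leaving the $o\pa{\norm{v}^2}$ remainder untouched. Using $\overline{w}^m = \der{x^m}{\overline{x}^i}w^i$, the first piece collapses to $\delta_{mn}\,\overline{w}^m\overline{w}^n = \sum_m \pa{\overline{w}^m}^2 = \norm{\overline{w}}_2^2$, and the second piece becomes $\frac{1}{3}\,\overline{R}_{mkln}(x)\,\overline{v}^k\overline{v}^l\,\overline{w}^m\overline{w}^n$. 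Introducing the matrix $R_v$ with entries $\pa{R_v}_{mn} = \overline{R}_{mkln}(x)\,\overline{v}^k\overline{v}^l$, which is precisely the $R_v$ of the statement, this second piece is $\frac{1}{3}\,\overline{w}^T R_v\,\overline{w}$. Summing the two pieces with the remainder gives \eqref{eq:approx_norm}.

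There is no genuine obstacle here; the argument is pure index bookkeeping. The one point deserving a remark is that $\overline{w}^T R_v\,\overline{w}$ is an unambiguous quadratic form: even though $R_v$ as written carries the outer curvature indices $m,n$ and the contracted ones $k,l$ in mixed positions, relabelling the dummy indices $k,l$ and combining the pair symmetry $\overline{R}_{abcd}=\overline{R}_{cdab}$ with the skew symmetries $\overline{R}_{abcd}=-\overline{R}_{bacd}=-\overline{R}_{abdc}$ yields $\pa{R_v}_{mn}=\pa{R_v}_{nm}$, so $R_v$ is symmetric. I would state this in a single line right after the matrix $R_v$ is introduced, so that the notation $\overline{w}^T R_v\,\overline{w}$ is justified.
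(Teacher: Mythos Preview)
Your proposal is correct. The paper gives no separate proof for this proposition---it is stated as an immediate matrix-notation rewriting of \eqref{eq:approx_normal_metric}---and your argument carries out exactly that index bookkeeping, with the added (and welcome) remark on the symmetry of $R_v$.
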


\begin{rem}
				In what follows, $\norm{w}_\X^2$ will always be taken at $x+v$, and be approximated by the right hand of \autoref{eq:approx_norm}. Additionally, we compare $w$ and $v$ without taking into account parallel transport since the Christoffel symbols vanish around the origin in normal coordinates.
\end{rem}

The transition matrix $P$ is equal to $\pa{\der{x^i}{{\overline{x}}^j}}_{i,j}$ and its inverse to $\pa{\der{\overline{x}^i}{{x}^j}}_{i,j}$. We should have for instance: \[G_{x} = P^T \overline{G}_x P = P^T I_n P \text{ and } {P^{-1}}^T G_x P^{-1} = I_n\]

The matrix $P^{-1} = \bmat{\frac{v_1}{\sqrt{\lambda_1}} & \cdots & \frac{v_n}{\sqrt{\lambda_n}}}$ with $v_i$ the eigenvector of $G_x$ associated with the eigenvalue $\lambda_i$ satisfies this equation (the family is chosen to be orthonormal for the ambient Euclidean metric: $v_i^T v_j = \delta_{i,j}$).
Note that this gives us \[P = \bmat{\sqrt{\lambda_1} v_1^T \\ \vdots \\ \sqrt{\lambda_n} v_n^T }.\]

\begin{rem}
				The pullback metric $g_x$ is always degenerate in this problem. Indeed, the dimension of its image is strictly bounded by the number of classes of the given task\footnote{See \cite{grementieriModelcentricDataManifold2021} for the proof.}.

				To take this into account, if $d= \dim\Ima G_x$, one can rewrite the metric in normal coordinates by:
				\[
								\overline{G}_x = \bmat{\textbf{I}_{d} &  \textbf{0}_{d,n-d} \\ \textbf{0}_{n-d,d} & \textbf{0}_{n-d,n-d}}
				.\]
								The transition matrix $P^{-1}$ is equal to  $\bmat{\frac{v_1}{\sqrt{\lambda_1}} & \cdots & \frac{v_d}{\sqrt{\lambda_d}} & v_{d+1} & \cdots & v_n}$.
\end{rem}

\begin{prop}
    If $w$ is a solution to \ref{eq:step_2}, then there exists a scalar $\lambda\geq0$ such that
    \begin{equation}
    \label{eq:step_2_sol}
    \pa{P^TBP - \lambda I_n}w = \lambda v
    \end{equation}
    with $B = I_n + \frac{1}{3}R_v$.
\end{prop}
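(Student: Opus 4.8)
The plan is to apply the Karush–Kuhn–Tucker optimality conditions to the maximization problem \ref{eq:step_2}, exactly as was done for the one-step attack in Section~\ref{sec:local_method}, but now using the curvature-corrected expression for the Riemannian norm from \autoref{eq:approx_norm}. First I would rewrite the objective and the active constraint entirely in terms of $w$ (with $x_o$, $v$, $\mu$ treated as fixed data): the objective is $\norm{w}_\X^2 = \norm{Pw}_2^2 + \frac13 (Pw)^T R_v (Pw) = w^T P^T B P w$ with $B = I_n + \frac13 R_v$, up to the $o(\norm{v}^2)$ term which I would carry along and drop at the end. The binding constraint at the optimum is the budget constraint $\norm{w+v}_2^2 = \varepsilon^2$ (the other two constraints in \ref{eq:step_2} only fix $v$, which is not a decision variable here), so the Lagrangian stationarity condition reads $\nabla_w\pa{w^T P^T B P w} - \lambda \nabla_w\pa{\norm{w+v}_2^2 - \varepsilon^2} = 0$ for some multiplier $\lambda$.

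Next I would compute the two gradients. Since $P^T B P$ is symmetric, $\nabla_w\pa{w^T P^T B P w} = 2 P^T B P w$, and $\nabla_w\norm{w+v}_2^2 = 2(w+v)$. Setting these equal gives $2 P^T B P w = 2\lambda (w+v)$, i.e. $P^T B P w - \lambda w = \lambda v$, which upon grouping the $w$ terms is exactly $\pa{P^T B P - \lambda I_n} w = \lambda v$, the claimed \autoref{eq:step_2_sol}. The sign/nonnegativity claim $\lambda \geq 0$ I would justify from the KKT complementary slackness and the geometry of the problem: at a maximizer the gradient of the objective must point "outward" relative to the feasible ball centered at $-v$, which forces the multiplier associated with $\norm{w+v}_2^2 \le \varepsilon^2$ to be nonnegative; alternatively, contracting \autoref{eq:step_2_sol} with $w$ gives $\lambda\,(\norm{w}_2^2 + w^Tv) = w^T P^T B P w = \norm{w}_\X^2 \ge 0$, and one checks that the bracketed factor $\norm{w}_2^2 + w^T v = \tfrac12(\norm{w+v}_2^2 + \norm{w}_2^2 - \norm{v}_2^2) = \tfrac12(\varepsilon^2 + \norm{w}_2^2 - \mu^2) > 0$ whenever $\varepsilon^2 > \mu^2$, so $\lambda \ge 0$.

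The one subtlety I would be careful about — and which I expect to be the main obstacle rather than the calculus — is the status of the $o(\norm{v}^2)$ remainder and the fact that $P^T B P$ need not be positive semidefinite (the curvature correction $\tfrac13 R_v$ can be indefinite, and $G_x$ is itself degenerate, so $P$ as written via $P^{-1} = \bmat{\frac{v_1}{\sqrt{\lambda_1}} & \cdots}$ mixes genuinely scaled eigendirections with the kernel directions $v_{d+1},\dots,v_n$). I would therefore state the proposition as a necessary first-order condition for a (local) maximizer — which is all the KKT theorem delivers without a constraint qualification and convexity — and note that existence of a maximizer on the compact constraint set $\set{w : \norm{w+v}_2 \le \varepsilon}$ is automatic since the objective is continuous. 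The remainder term is uniform in $w$ over this bounded set, so it does not affect the stationarity equation to the stated order; I would simply absorb it and write $\pa{P^T B P - \lambda I_n} w = \lambda v$ as the equation characterizing the approximate solution, consistent with the approximations already in force since \autoref{eq:approx_norm}.
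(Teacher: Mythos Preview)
Your proposal is correct and follows essentially the same approach as the paper: apply the KKT stationarity condition to the quadratic objective $w^T P^T B P w$ subject to the ball constraint $\norm{w+v}_2^2 \le \varepsilon^2$, compute the two gradients, and rearrange. The only cosmetic difference is that the paper differentiates in normal coordinates with respect to $\overline{w}$ (writing the constraint as $\norm{v + P^{-1}\overline{w}}_2^2 \le \varepsilon^2$) and then multiplies through by $P^T$ to return to original coordinates, whereas you substitute $\overline{w}=Pw$ first and differentiate directly in $w$; both routes yield $P^TBPw = \lambda(w+v)$ immediately. Your added justification for $\lambda \ge 0$ via contraction with $w$ and the identity $\norm{w}_2^2 + w^Tv = \tfrac12(\varepsilon^2 + \norm{w}_2^2 - \mu^2)$ is a nice touch that the paper omits.
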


\begin{proof}
Using again the KKT conditions but with the constraint $\norm{v+w}_2^2 = \norm{v+ P^{-1} \overline{w}}_2^2 \leq  \varepsilon^2$, it implies that there exists a scalar $\lambda\geq 0$ such that:
\begin{align*}
				& \nabla_{\overline{w}} \pa{\norm{\overline{w}}_2^2 + \frac{1}{3}\overline{w}^T R_v \overline{w}}\\
				& - \lambda \nabla_{\overline{w}} \pa{\norm{v + P^{-1}\overline{w}}_2^2 - \varepsilon^2} =0 \\
				\implies & \overline{w} + \frac{1}{3}R_v \overline{w} - \lambda\pa{{P^{-1}}^T\pa{P^{-1} \overline{w} + v}} =0
.\end{align*} 

Thus

\begin{align*}
				& P^T\pa{I_n + \frac{1}{3}R_v}Pw =  \lambda \pa{w+v} \\
				\iff & P^T\pa{B - \lambda {P^{-1}}^TP^{-1}}Pw = \lambda v \\
				\iff & \pa{P^TBP - \lambda I_n}w = \lambda v
\end{align*} 
where $B = I_n + \frac{1}{3}R_v$.
\end{proof}

\begin{cor}
				In normal coordinates, \autoref{eq:step_2_sol} rewrites as:
				\[
				PP^TB\overline{w} = \lambda \pa{\overline{w} + \overline{v}} \text{ ie }\pa{P P^TB-\lambda I_n} \overline{w} = \lambda \overline{v}
				.\] 
\end{cor}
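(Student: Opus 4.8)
The plan is to prove this corollary by a direct change of coordinates in Equation~\eqref{eq:step_2_sol}, with no new geometric input needed. Recall from the earlier propositions that the normal-coordinate representations are related to the original ones through the transition matrix $P$: we have $\overline{w} = Pw$ and, consistently with the convention in the preceding remark (overbar $=$ expression in normal coordinates at $x_o$, and $\overline{x} = P x$ at the linear level), $\overline{v} = Pv$. Since $P$ is invertible with inverse $P^{-1} = \bmat{\frac{v_1}{\sqrt{\lambda_1}} & \cdots & \frac{v_n}{\sqrt{\lambda_n}}}$ as recorded above, this gives $w = P^{-1}\overline{w}$ and $v = P^{-1}\overline{v}$.

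First I would substitute these into \eqref{eq:step_2_sol}: starting from $\pa{P^TBP - \lambda I_n}w = \lambda v$, replacing $w$ and $v$ yields $\pa{P^TBP - \lambda I_n}P^{-1}\overline{w} = \lambda P^{-1}\overline{v}$. Next I would left-multiply both sides by $P$ and distribute, using $P P^{-1} = I_n$ to collapse the middle factors. The $P^T B P$ term becomes $P P^T B P P^{-1}\overline{w} = P P^T B \,\overline{w}$, and the $\lambda I_n$ term becomes $\lambda P P^{-1}\overline{w} = \lambda \overline{w}$, while the right-hand side becomes $\lambda P P^{-1}\overline{v} = \lambda \overline{v}$. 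Collecting terms gives $P P^T B\, \overline{w} - \lambda \overline{w} = \lambda \overline{v}$, i.e. $P P^T B\, \overline{w} = \lambda\pa{\overline{w} + \overline{v}}$, which rearranges to $\pa{P P^T B - \lambda I_n}\overline{w} = \lambda \overline{v}$, exactly the claimed identity.

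There is essentially no obstacle here: the only thing to be careful about is the bookkeeping of which objects carry an overbar and the direction of conjugation by $P$ versus $P^{-1}$ — one should check that the algebra produces $P P^T$ (not $P^T P$ or $P^{-1} {P^{-1}}^T$) multiplying $B$, which it does because we left-multiplied by $P$ and cancelled $P P^{-1}$ on the right of $P^T B P$. It may also be worth a one-line remark that $P P^T$ is exactly the matrix $\overline{G}_x^{-1}$ would be were $G_x$ nondegenerate, so this form of the equation is the ``raised-index'' version of \eqref{eq:step_2_sol}; in the degenerate case one simply reads it on the image of $G_x$, consistent with the earlier remark on $d = \dim \Ima G_x$.
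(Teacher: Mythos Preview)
Your derivation is correct and is exactly the computation the paper has in mind: the corollary is stated without proof because it follows by left-multiplying \eqref{eq:step_2_sol} by $P$ after substituting $w = P^{-1}\overline{w}$, $v = P^{-1}\overline{v}$, precisely as you do. One small caveat: your closing parenthetical that ``$PP^T$ is exactly the matrix $\overline{G}_x^{-1}$ would be'' is not right, since in normal coordinates $\overline{G}_x = I_n$; rather $PP^T = \diag(\lambda_i)$, as the paper remarks immediately after the corollary, so it carries the eigenvalues of $G_x$, not the inverse of the normal-coordinate metric.
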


\begin{rem}
Note that $P P^T = \diag\pa{\lambda_i}$.
\end{rem}

\begin{cor}
Whenever $\lambda$ is not an eigenvalue of $P^TBP$,  $\pa{P^TBP-\lambda I_n}$ is non-singular and 
\begin{equation}
w = \lambda  \pa{P^TBP-\lambda I_n}^{-1}  v.
\end{equation}
\end{cor}

It remains to find a $\lambda$ such that the constraint $\norm{w+v}_2^2 \leq \varepsilon^2$ is satisfied. There are two cases:

\begin{enumerate}
				\item either $\norm{w+v}_2^2 = \varepsilon^2$ and $\lambda > 0$,
				\item or $\norm{w+v}_2^2 < \varepsilon^2$ and $\lambda = 0$.
\end{enumerate}

\subsection{The case \texorpdfstring{$\lambda>0$}{λ>0}}
\label{sub:lambda>0}

We will use the constraint $\norm{w+v}_2^2 = \varepsilon^2$ to get $\lambda$. We suppose in what follows that $2\lambda$ is not an eigenvalue of $B$, ie  $(2\lambda-1)$ is not an eigenvalue of $\frac{1}{3}R_v$. The constraint may be written as:
\begin{align*}
				\norm{w+v}_2^2 &= \norm{\lambda\pa{P^TBP-\lambda I_n}^{-1} v + v}_2^2 \\
			    & = \norm{ \pa{\lambda \pa{P^TBP-\lambda I_n}^{-1} + I_n}v}_2^2 
.\end{align*}

\begin{lem}
				 \[
								\lambda\pa{P^TBP-\lambda I_n}^{-1} + I_n = \pa{P^TBP - \lambda I_n}^{-1}P^TBP
				.\] 
\end{lem}

\begin{proof}
				\begin{align*}
								&\pa{P^TBP - \lambda I_n} \pa{\lambda\pa{P^TBP-\lambda I_n}^{-1} + I_n }\\ =& \lambda \pa{P^TBP - \lambda I_n}\pa{P^TBP-\lambda I_n}^{-1} + \pa{P^TBP - \lambda I_n} \\
								=& \cancel{\lambda I_n} + P^TBP - \cancel{ \lambda I_n }
								= P^TBP
				.\end{align*}
\end{proof}

Thus,

\begin{equation}
\label{eq:eps_lambda}
				\varepsilon^2 = \norm{w+v}_2^2 = \norm{P^TBP\pa{P^TBP-\lambda I_n}^{-1}v}_2^2 
.\end{equation}

To find $\lambda$ that satisfies \autoref{eq:eps_lambda}, we will study the vanishing points of: 
\[
				\fun{\varphi}{[0,\infty[}{[0,\infty[}{\lambda}{ \norm{P^TBP\pa{P^TBP-\lambda I_n}^{-1}v}_2^2 - \varepsilon^2}
\] 

However, finding the vanishing points of such a function is not an easy task. Several methods may be used. A numerical method such as the Newton's method \cite{dennis1996numerical} could be applied.

Alternatively, observe that problem \ref{eq:step_2} can be simplified by using the triangular inequality to get back to an easier eigenvalue problem. Indeed, consider the following problem:

\begin{equation}
\label{eq:step_2_strong}
				\max_w \norm{w}_\X^2  \text{ subject to }
				\cas{\norm{w}_2 \leq \varepsilon - \mu\\ 
								\norm{v}_2^2 = \mu^2 < \varepsilon^2 \\ 
								v \text{ eigenvector of }G_x } \tag{S2.2P}
\end{equation}

This new problem is illustrated on \autoref{fig:2step_attack_tri}. The green circle is the true budget and the two other smaller circles represent the triangular inequality approximation. One can see that the second step $w$ does not reach the green circle but stop before due to the approximation.

\begin{figure}[ht]
    \centering
				\def\svgwidth{0.5\columnwidth}
			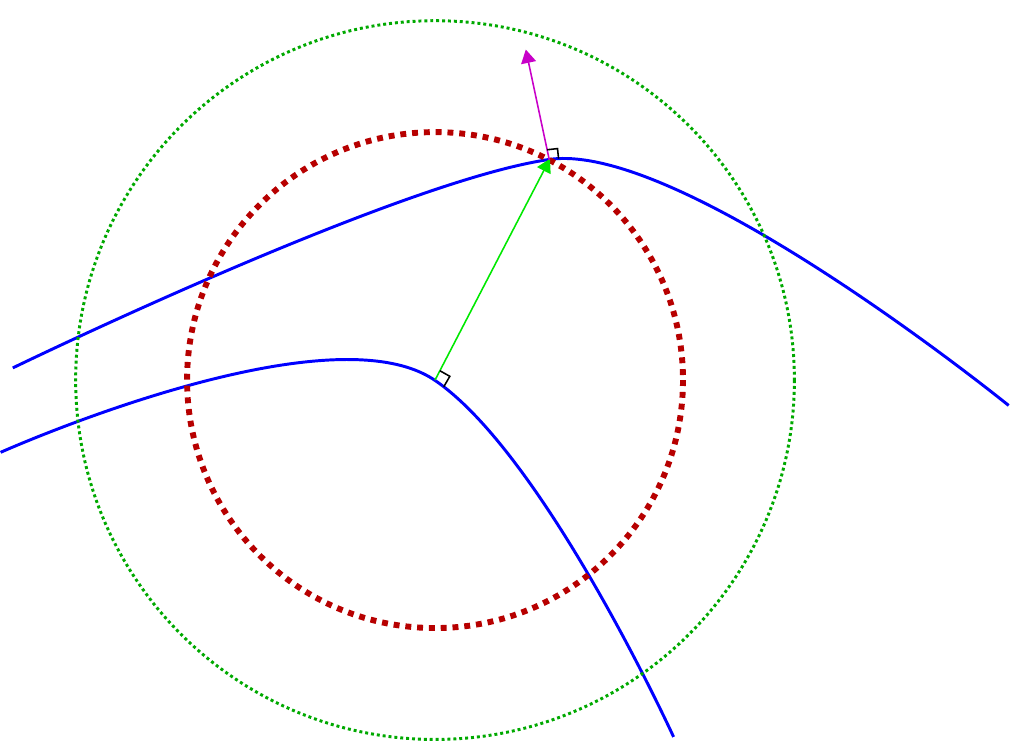

    \caption{The two-step attack in action with the triangular inequality simplification. The three circles represent the Euclidean budget. The blue curves represent the leaves of the kernel foliation.}
    \label{fig:2step_attack_tri}
\end{figure}

\begin{prop}
Any solution of \ref{eq:step_2_strong} problem will satisfy the constraint of \ref{eq:step_2} problem. 
\end{prop}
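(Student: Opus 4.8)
The plan is to observe that two of the three constraints defining \ref{eq:step_2} and \ref{eq:step_2_strong} are literally identical — namely $\norm{v}_2^2 = \mu^2 < \varepsilon^2$ and the requirement that $v$ be an eigenvector of $G_x$ — so there is nothing to verify for those. The only constraint that differs is the budget constraint on the second step: \ref{eq:step_2} asks for $\norm{w+v}_2^2 \leq \varepsilon^2$, whereas \ref{eq:step_2_strong} imposes the stronger-looking $\norm{w}_2 \leq \varepsilon - \mu$. Hence the entire content of the proposition is that the latter implies the former.

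First I would record that the hypothesis $\mu^2 < \varepsilon^2$, together with $\mu,\varepsilon > 0$, gives $\varepsilon - \mu \geq 0$, so the bound $\norm{w}_2 \leq \varepsilon - \mu$ is non-vacuous and all quantities below are nonnegative. Then I would apply the triangle inequality for the ambient Euclidean norm: for any $w$ feasible for \ref{eq:step_2_strong},
\[
\norm{w+v}_2 \;\leq\; \norm{w}_2 + \norm{v}_2 \;\leq\; (\varepsilon-\mu) + \mu \;=\; \varepsilon .
\]
Squaring both sides, which is legitimate since both are nonnegative, yields $\norm{w+v}_2^2 \leq \varepsilon^2$, exactly the remaining constraint of \ref{eq:step_2}. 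Therefore every point feasible for \ref{eq:step_2_strong} is feasible for \ref{eq:step_2}, and in particular any solution of \ref{eq:step_2_strong} satisfies the constraints of \ref{eq:step_2}.

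There is essentially no obstacle here: the argument is a single application of the triangle inequality, and the only point worth stating explicitly is that $\varepsilon - \mu \geq 0$ so that passing to squares is valid. The substantive remark — already made in the surrounding discussion and illustrated in \autoref{fig:2step_attack_tri} — is that this replacement is a genuine restriction: the optimal value of \ref{eq:step_2_strong} is in general strictly smaller than that of \ref{eq:step_2}, because the second step is forced to stop short of the true budget sphere. Thus the proposition only guarantees that solving \ref{eq:step_2_strong} produces a feasible (generally sub-optimal) attack for \ref{eq:step_2}; it does not assert equivalence of the two problems.
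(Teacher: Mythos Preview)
Your proof is correct and follows exactly the paper's approach: a single application of the triangle inequality $\norm{w+v}_2 \leq \norm{w}_2 + \norm{v}_2 \leq (\varepsilon-\mu)+\mu = \varepsilon$. The additional remarks about $\varepsilon-\mu\geq 0$ and the sub-optimality of \ref{eq:step_2_strong} are accurate but go slightly beyond what the paper records.
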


\begin{proof}
Indeed, by the triangular inequality, $\norm{w + v}_2 \leq \norm{w}_2 + \norm{v}_2 \leq \pa{\varepsilon - \mu} + \mu = \varepsilon$.
\end{proof}

With this new constraint, KKT's conditions boil down to

\[
P^T B P w = \lambda w
\] 
or in normal coordinates:
\[
PP^TB \overline{w} = \lambda \overline{w}
.\] 

\begin{prop}\label{prop:solution_TSSA}
A solution to \ref{eq:step_2_strong} is to choose $\overline{w}$ to be the eigenvector of $PP^TB$ with the highest eigenvalue $\lambda$ and with the appropriate Euclidean norm of $\varepsilon - \mu$.
\end{prop}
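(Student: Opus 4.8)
The plan is to treat \ref{eq:step_2_strong} as a constrained maximization of a quadratic form on a Euclidean ball, apply the spectral theorem, and then translate the answer into normal coordinates. By \autoref{eq:approx_norm}, if we drop the $o\pa{\norm{v}^2}$ remainder, the objective is $\norm{w}_\X^2 = \overline{w}^T B \overline{w} = w^T P^T B P w$ with $B = I_n + \frac{1}{3} R_v$. First I would record that $B$, and hence $P^T B P$, is symmetric: $R_v = \overline{R}_{m k l n}(x)\overline{v}^k \overline{v}^l$ is symmetric in its free indices $m,n$ by the antisymmetry of $\overline{R}$ in each index pair together with its pair-exchange symmetry. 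So, to this order, \ref{eq:step_2_strong} is the problem of maximizing $w \mapsto w^T P^T B P w$ over $\norm{w}_2 \le \varepsilon - \mu$.

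Next I would show the budget is saturated at any maximizer. Since $\norm{v}_2 = \mu$, the tensor $R_v$ has operator norm $O(\mu^2)$, so for $\mu$ small enough $B$ is positive definite, and then so is $P^T B P$ because $P$ is invertible. All its eigenvalues are therefore strictly positive, so on an eigenvector $w$ with $\norm{w}_2 = r$ the objective value $w^T P^T B P w = \lambda r^2$ is strictly increasing in $r$; hence a maximizer has $\norm{w}_2 = \varepsilon - \mu$. Combined with the stationarity condition already derived for \ref{eq:step_2_strong}, namely $P^T B P w = \lambda w$ with $\lambda \ge 0$, this says a maximizer is an eigenvector of $P^T B P$ of Euclidean norm $\varepsilon - \mu$, and its objective value $\lambda\pa{\varepsilon - \mu}^2$ is largest exactly when $\lambda$ is the largest eigenvalue of $P^T B P$.

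It remains to pass to normal coordinates. From the identity $PP^T B = P\,\pa{P^T B P}\,P^{-1}$, the matrices $P^T B P$ and $PP^T B$ are similar; they share the same spectrum, and $w$ is a top eigenvector of $P^T B P$ if and only if $\overline{w} = P w$ is a top eigenvector of $PP^T B$ with the same eigenvalue $\lambda$. So one may take $\overline{w}$ to be the eigenvector of $PP^T B$ with the highest eigenvalue, scaled so that $w = P^{-1}\overline{w}$ has Euclidean norm $\varepsilon - \mu$ (the ``appropriate'' normalization, chosen to make the budget constraint tight), which is the claimed solution.

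The step I expect to need the most care is the definiteness argument in the presence of the degenerate metric: one must check that working in the normal coordinates where $\overline{G}_x$ is block diagonal with blocks $\mathbf{I}_d$ and $\mathbf{0}$ does not destroy positive definiteness of $P^T B P$. It does not, because the full transition matrix $P$ built from all $n$ Euclidean-orthonormal eigenvectors of $G_x$ is invertible and $B$ is only an $O(\mu^2)$ perturbation of $I_n$. One should also state explicitly that replacing $\norm{w}_\X^2$ by the truncated quadratic form perturbs the optimal value and the optimizer only by $o\pa{\norm{v}^2}$, in line with the local approximation used throughout the section.
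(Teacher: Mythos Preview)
Your argument is correct and follows the same KKT-to-eigenvalue reduction the paper uses; the paper itself gives no proof beyond recording that the KKT condition is $P^TBPw=\lambda w$ (equivalently $PP^TB\,\overline{w}=\lambda\overline{w}$) and then asserting the proposition. You fill in exactly the details the paper omits: symmetry of $B$, saturation of the constraint via positive definiteness of $B$ for small $\mu$, the similarity $PP^TB = P(P^TBP)P^{-1}$ that transfers the eigenproblem to normal coordinates, and the correct normalization $\norm{P^{-1}\overline{w}}_2=\varepsilon-\mu$ (not $\norm{\overline{w}}_2=\varepsilon-\mu$), which the paper's wording leaves ambiguous.
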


\rerevision{
An algorithm that finds a solution to \ref{eq:step_2_strong} can then be created using Proposition~\ref{prop:solution_TSSA}. It is presented in Algorithm~\ref{algo:TSSA}. It is then implemented\footnote{\minrevision{The implementation of TSSA is available at \url{https://github.com/eliot-tron/CurvNetAttack}.}} and tested in Section~\ref{sec:num_res}.

\begin{algorithm}
\caption{Two Step Spectral Attack (TSSA)}\label{algo:TSSA}
\DontPrintSemicolon
\KwData{$x_o$ initial point, $\varepsilon > \mu > 0$ euclidean budgets.}
\KwResult{$x_a$ attacked point}
$v \gets $ highest eigenvector of $G_{x_o}$\;
$v \gets \mu \times \frac{v}{\norm{v}_2}$\;
$y_{x_o} \gets \argmax_y p_\theta\pa{y\mid x_o}$\;
\If{$p_\theta\pa{y_{x_o} \mid x_o+v} > p_\theta\pa{y_{x_o} \mid x_o-v}$}{
    $v\gets -v$\;
}
$w \gets $ highest eigenvector of $G_{x_o + v}$\;
$w \gets (\varepsilon - \mu) \times \frac{w}{\norm{w}_2}$\;
\If{$v^T w < 0$}{
    $w\gets -w$\;
}
$x_a \gets x_o + v + w$\;
\KwRet{$x_a$}
\end{algorithm}
}

\minrevision{
\subsection*{Complexity analysis of TSSA}
Let $d$ be the dimension of the input space of the neural network, and $C$ the number of classes of the classification problem.
The runtime analysis of Algorithm~\ref{algo:TSSA} can be understood as follow:
\begin{itemize}
    \item[-] Line 1: 
        \begin{enumerate}
            \item Computing $G_{x_o}$: It requires the computation of the the network's Jacobian matrix with respect to the input, as well as inverting the output of the neural network. Computing such Jacobian matrix requires $O(C)$ calls to the Autograd procedure, PyTorch automatic differentiation package \cite{baydin2018AutomaticDiff}, where $C$ is the dimension of the output. The complexity of the Autograd algorithm depends on the neural network architecture, and scales with the number of weights, but also with the dimension of the input. After that, $2$ matrix multiplications are needed to get $G_{x_o} = J_N(x)^T {\diag\pa{N(x)}}^{-1} J_N(x)$.
            \item Computing the highest eigenvector of this $d\times d$ matrix: It can be solved with a power method\footnote{See Section~7.3, p.~365 of \cite{golub2013matrix}.} in $O\pa{d^2}$.
        \end{enumerate}
    \item[-] Line 2: $O(d)$.
    \item[-] Line 3: Evaluation time of the neural network, and then finding the maximum in $O(C)$.
    \item[-] Line 4: Two evaluations of the neural network.
    \item[-] Line 5: $O(d)$
    \item[-] Line 7: Same complexity as Line~1.
    \item[-] Lines 8-12: $O(d)$.
\end{itemize}

In practice, Line 1 and 7 are the most expensive ones. This algorithm requires, in total, $O(C)$ calls to Autograd with respect to the input of the network for each given input point $x_o$, which limits the tests on large datasets requiring large networks. Possible runtime improvements can be developed in future work such as the use of faster power iteration methods (e.g. Lanczos), or the use of the alias method when $C$ is high. See \cite{zhaoAdversarialAttackDetection2019} for some details on these aspects. Some other improvements can be developed on the implementation side in PyTorch with batched gradient computation. We leave this for future work and focus here on the effect of curvature on adversarial attacks.
}

\subsection{The case \texorpdfstring{$\lambda = 0$}{λ=O}}

In that case, $w$ is in the interior of the boundary of the problem. The problem reduces to
 \begin{equation*}
				B \overline{w} = 0
				 \iff R_v \overline{w} = -3 \overline{w}
.\end{equation*}

This means that $w$ is the optimal second step if $\overline{w}\in\Ker B$, ie if $w$ is eigenvector of $R_v$ with eigenvalue $-3$. However, this case does not produce any interesting KKT admissible point. Indeed, 
\begin{align*}
				\norm{w}_\X^2 & = \norm{\overline{w}}_2^2 + \frac{1}{3}\overline{w}^TR_v \overline{w}
																				= \norm{\overline{w}}_2^2 - \overline{w}^T \overline{w} 
																				= 0
.\end{align*}

Therefore, the case $\lambda=0$ does not lead to useful adversarial attacks when the previous approximations are applied to the problem. The study of this singularity is left as future work.

\section{An enlightening play-test example}%
\label{sec:example}

\subsection{Setup}
In this section, we focus on a low dimensional example in order to visualize more easily the effect of curvature on the efficiency of the adversarial attack. A simple neural network $N_\theta$ with one hidden layer of $k$ neurons and sigmoids as activation functions is used. Its architecture is depicted on \autoref{fig:XorNetSchema}.

    \begin{figure}[ht]
				\centering
				\def\layersep{2.5cm}

\begin{tikzpicture}[shorten >=1pt,->,draw=black!50, node distance=\layersep]
				\tikzstyle{every pin edge}=[<-,shorten <=1pt]
				\tikzstyle{neuron}=[circle,fill=black!25,minimum size=17pt,inner sep=0pt]
				\tikzstyle{input neuron}=[neuron, fill=green!50];
				\tikzstyle{output neuron}=[neuron, fill=red!50];
				\tikzstyle{hidden neuron}=[neuron, fill=blue!50];
				\tikzstyle{annot} = [text width=4em, text centered]

				\foreach \name / \y in {1,...,2}
				\node[input neuron, pin=left:$x_\y$] (I-\name) at (0,-\y) {};

				\foreach \name / \y in {1,...,3}
				\path[yshift=0.5cm]
				node[hidden neuron] (H-\name) at (\layersep,-\y cm) {};

				\foreach \name / \y in {1}
				\node[output neuron,pin={[pin edge={->}]right:$N_\theta(x)$}] (O-\name) at (2*\layersep, -0.5-\y) {};

				\foreach \source in {1,...,2}
				\foreach \dest in {1,...,3}
				\path (I-\source) edge (H-\dest);

				\foreach \source in {1,...,3}
				\foreach \dest in {1}
				\path (H-\source) edge (O-\dest);

				\node[annot,above of=H-1, node distance=1cm] (hl) {Hidden layer};
				\node[annot,left of=hl] {Input layer};
				\node[annot,right of=hl] {Output layer};
				\draw [decorate,decoration={brace,amplitude=5pt,mirror,raise=10pt},-, thick] (0.3,-2.5) -- (-0.3cm + \layersep, -2.5) node [black,midway,yshift=-0.8cm] {$W_1$};
				\draw [decorate,decoration={brace,amplitude=5pt,mirror,raise=10pt},-, thick] (0.3cm + \layersep,-2.5) -- (-0.3cm + 2*\layersep, -2.5) node [black,midway,yshift=-0.8cm] {$W_2$};
\end{tikzpicture}
				\caption{XorNet $N_\theta$ with 3 hidden neurons.}
				\label{fig:XorNetSchema}
    \end{figure}

To be more precise, if $\sigma:a\in\R^d \mapsto \pa{\frac{1}{1+e^{-a_i}}}_{i=1}^d \in \R^d$ is the sigmoid function and if $x\in\R^2$, $W_1\in\M_{2,k}(\R)$, $W_2\in\M_{k,1}(\R)$, we have 
\begin{equation}
				N_\theta(x) = \sigma\pa{W_2\sigma\pa{W_1x+b_1}+b_2}
.
\end{equation}

This neural network is then trained to approximate the very simple function \texttt{Xor}$:\set{0,1}^2\to\set{0,1}$ defined in \autoref{tab:Xor}.

    \begin{table}[ht]
				\centering
				\caption{\texttt{Xor} function.}
				\label{tab:Xor}
				\begin{tabular}{c|cc}
								\texttt{Xor} & 0 & 1 \\
								\hline
								0 & 0 & 1 \\
								1 & 1 & 0
				\end{tabular}
    \end{table}
The output $N_\theta(x)$ is seen as the parameter $p$ of a Bernoulli law and associates to the network the following probability distribution:  $p(y\mid x,\theta)$ where  $\theta$ is the vector containing the weights and biases $\pa{W_i,~b_i}$,  $x$ is the input and  $y$ is the true label.

\begin{prop}
The random variable $Y\mid X,\theta$ follows Bernoulli's law of parameter $p = N_\theta(x)$.
\end{prop}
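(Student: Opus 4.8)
The statement simply records the modeling assumption made just above it: the network output $N_\theta(x)\in[0,1]$ is declared to be the parameter $p$ of a Bernoulli distribution over the label $y\in\set{0,1}$. So the plan is to unwind the definitions rather than to perform any substantive computation. First I would recall that, by construction, $N_\theta$ is a composition of affine maps with sigmoid activations, and that the final sigmoid $\sigma(a)=\frac{1}{1+e^{-a}}$ takes values in the open interval $(0,1)$; hence $p := N_\theta(x)\in(0,1)$ is a legitimate Bernoulli parameter for every $x\in\R^2$ and every choice of weights $\theta$.

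Next I would state the conditional law explicitly: conditionally on $X=x$ and on the parameter $\theta$, the label $Y$ is modeled as taking the value $1$ with probability $N_\theta(x)$ and the value $0$ with probability $1-N_\theta(x)$. Equivalently, in the compact Bernoulli form,
\[
p(y\mid x,\theta) = N_\theta(x)^{y}\pa{1-N_\theta(x)}^{1-y}, \qquad y\in\set{0,1}.
\]
By definition this is precisely the probability mass function of the Bernoulli law with parameter $p=N_\theta(x)$, so $Y\mid X,\theta \sim \mathrm{Bernoulli}\pa{N_\theta(x)}$, which is the claim.

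There is essentially no obstacle here: the only point that requires a word is that the output genuinely lies in $[0,1]$ so that the Bernoulli family is well defined, and this follows immediately from the range of the sigmoid. The ``proof'' is therefore a one-line consequence of the setup, and its role in the paper is to fix notation — identifying $\M$ in this example as the one-parameter Bernoulli statistical manifold — so that the Fisher Information Metric and its pullback can be computed concretely in the subsequent subsections.
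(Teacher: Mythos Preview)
Your proposal is correct and matches the paper's treatment: the paper offers no proof at all for this proposition, since it is just a restatement of the modeling choice announced in the preceding sentence (the output $N_\theta(x)$ is \emph{declared} to be the Bernoulli parameter). Your unpacking of the definitions, together with the observation that the final sigmoid guarantees $N_\theta(x)\in(0,1)$ so that the Bernoulli family is well defined, is exactly the right justification and is all that is needed.
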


\subsection{Computing the output FIM}

The output of the network is the manifold of Bernoulli probability densities parameterized by the open segment $]0,1[$. 

\begin{prop}
Let $p\in]0,1[$ and $G_p$ the Fisher Information Metric at the point $p$.
\begin{equation}
G_p = \frac{1}{p} \frac{1}{1-p}
\end{equation}
\end{prop}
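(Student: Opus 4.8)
The plan is to compute the Fisher Information Metric directly from its definition in the one-dimensional parameter $p$, which coincides with the classical (parameter-side) Fisher information since here the parameter of the Bernoulli family is exactly the output coordinate $p$. First I would write down the log-likelihood of a Bernoulli random variable $Y$ with parameter $p$, namely $\ln p(y\mid p) = y\ln p + (1-y)\ln(1-p)$ for $y\in\{0,1\}$. Then I would differentiate once with respect to $p$ to get the score $\partial_p \ln p(y\mid p) = \frac{y}{p} - \frac{1-y}{1-p}$.

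Next I would form the Fisher information $G_p = \E_{y\mid p}\bra{\pa{\partial_p \ln p(y\mid p)}^2}$. There are two equally short routes: either square the score and take the expectation over $y\in\{0,1\}$ with weights $p$ and $1-p$, using $\E[Y]=p$ and $\E[Y^2]=p$, so that the cross term cancels and the two square terms contribute $\frac{p}{p^2} + \frac{1-p}{(1-p)^2} = \frac1p + \frac1{1-p} = \frac{1}{p(1-p)}$; or use the second-derivative form $G_p = -\E\bra{\partial_p^2 \ln p(y\mid p)} = -\E\bra{-\frac{y}{p^2} - \frac{1-y}{(1-p)^2}} = \frac{p}{p^2} + \frac{1-p}{(1-p)^2} = \frac{1}{p(1-p)}$, which recovers the stated expression $G_p = \frac{1}{p}\frac{1}{1-p}$.

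There is really no hard step here — this is a textbook computation — so the only thing to be careful about is bookkeeping: making sure the expectation is taken over the two atoms $y=0$ and $y=1$ with the correct probabilities, and that one is computing the parameter-side Fisher information on the output manifold $\M=\,]0,1[$ (as this proposition does), not yet the pullback to the input space $\X$. I would present the squared-score computation as the main argument and perhaps remark that the second-derivative identity gives the same answer as a check.
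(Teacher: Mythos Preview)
Your proposal is correct and matches the paper's own proof: the paper uses precisely the second-derivative route you describe, computing $-\E\bra{\partial_p^2\pa{y\ln p + (1-y)\ln(1-p)}} = \frac{1}{p}+\frac{1}{1-p}$. Your squared-score variant is an equally valid and essentially equivalent computation.
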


\begin{proof}
\begin{align*}
				G_p &= - \E_{y \mid x,\theta}\bra{\partial_p^2 \pa{\ln P(y\mid x,\theta)}} \\
						& = - \E\bra{\partial_p^2 \pa{y \ln p + \pa{1-y} \ln \pa{1-p}}} \\
						&= - \E\bra{\partial_p \pa{\frac{y}{p} - \frac{\pa{1-y}}{1-p}}} \\
						& = - \E\bra{-\frac{y}{p^2} - \frac{\pa{1-y}}{\pa{1-p}^2}} \\
						&= \frac{1}{p} + \frac{1}{1-p} 
						= \frac{1}{p} \frac{1}{1-p}.\quad
\end{align*}
\end{proof}

\subsection{Computing the pullback metric}

Let $x\in\M$ be a point associated with $p$ by the network.

\begin{lem}
The Fisher Information Metric $G_x$ on $\X$ is the pullback metric of $G_p$ by the neural network $N_\theta$.
\end{lem}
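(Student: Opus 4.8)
The plan is to unwind the definition of the FIM on $\X$ using the fact that the network factors the dependence of the output law on the input: by the preceding proposition, $p(y\mid x,\theta)$ depends on $x$ only through $p=N_\theta(x)$, so $p(y\mid x,\theta)=p\pa{y\mid N_\theta(x)}$ where the right-hand side denotes the Bernoulli family parametrized by $]0,1[$. In particular the conditional expectation $\E_{y\mid x,\theta}$ is the expectation under the Bernoulli law of parameter $N_\theta(x)$, and it acts only on the variable $y$.

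First I would apply the chain rule to $\ln p(y\mid x,\theta)=\ln p\pa{y\mid N_\theta(x)}$, giving
\[
\partial_{x_i}\ln p(y\mid x,\theta)=\pa{\partial_p\ln p\pa{y\mid N_\theta(x)}}\,\partial_{x_i}N_\theta(x).
\]
Substituting this into the definition
\[
g_{ij}(x)=\E_{y\mid x,\theta}\bra{\partial_{x_i}\ln p(y\mid x,\theta)\,\partial_{x_j}\ln p(y\mid x,\theta)}
\]
and pulling the $y$-independent factors $\partial_{x_i}N_\theta(x)$ and $\partial_{x_j}N_\theta(x)$ out of the expectation yields
\[
g_{ij}(x)=\partial_{x_i}N_\theta(x)\,\partial_{x_j}N_\theta(x)\;\E_{y\mid x,\theta}\bra{\pa{\partial_p\ln p\pa{y\mid N_\theta(x)}}^2}=\partial_{x_i}N_\theta(x)\,\partial_{x_j}N_\theta(x)\;G_p,
\]
i.e. $G_x=\pa{DN_\theta}^T G_p\,\pa{DN_\theta}$, which is precisely the coordinate expression of the pullback of $G_p$ by $N_\theta$ (here $DN_\theta$ is the $1\times 2$ Jacobian of $N_\theta$). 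This also makes transparent the degeneracy mentioned earlier: $\rank G_x\le\rank DN_\theta\le\dim\M=1$.

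The only point requiring a little care — and the closest thing to an obstacle — is recognizing that $\E_{y\mid x,\theta}\bra{\pa{\partial_p\ln p\pa{y\mid N_\theta(x)}}^2}$ is genuinely $G_p$ evaluated at $p=N_\theta(x)$, that is, that the output metric is built from the score with respect to the Bernoulli parameter $p$, not with respect to $x$; this is immediate from the definition of $G_p$ in the previous proposition (and one uses the standard regularity identity equating the squared-score and the negative-Hessian forms of $G_p$, already invoked there). Everything else is the elementary chain-rule computation above.
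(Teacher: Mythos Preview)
Your argument is correct: the chain rule applied to $\ln p(y\mid x,\theta)=\ln p\pa{y\mid N_\theta(x)}$ and the linearity of the expectation give exactly $G_x=(DN_\theta)^T G_p\,(DN_\theta)$, which is the coordinate form of the pullback. In the paper this lemma is actually stated without proof and immediately followed by the corollary $G_x=J^T G_p J$; your computation is precisely the standard justification the authors leave implicit, so there is nothing to contrast.
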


\begin{cor}
If $J = \bra{\der{p}{x_j}}_{j=1,2} = \bmat{\der{N_\theta(x)}{x_1} & \der{N_\theta(x)}{x_2}}$, then
\begin{equation}
				G_x = J^T G_p J
.
\end{equation}
\end{cor}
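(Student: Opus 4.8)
The claim is simply the chain rule for the pullback of a bilinear form, specialized to the one-dimensional output manifold. Here is the plan.

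First I would recall the definition of the pullback metric (already stated in the previous lemma): for tangent vectors $u, w \in T_x\X$,
\[
G_x(u,w) = G_{p}\pa{\de N_\theta(x)\cdot u,\ \de N_\theta(x)\cdot w},
\]
where $\de N_\theta(x)$ is the differential of the network at $x$. Since $\M\subset\R$ is parameterized by the single coordinate $p$, the differential $\de N_\theta(x)$ is represented by the Jacobian row vector $J = \bra{\der{N_\theta(x)}{x_1}\ \ \der{N_\theta(x)}{x_2}}$, i.e. $\de N_\theta(x)\cdot u = Ju$ (a scalar).

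Then I would just expand in the standard basis. Writing $u = \sum_i u^i\partial_{x_i}$ and $w = \sum_j w^j\partial_{x_j}$, bilinearity gives
\[
G_x(u,w) = \sum_{i,j} u^i w^j\, G_p\pa{\der{N_\theta}{x_i},\der{N_\theta}{x_j}} = \sum_{i,j} u^i w^j\, \der{N_\theta}{x_i}\,G_p\,\der{N_\theta}{x_j},
\]
using that $G_p$ is a scalar ($1\times1$ matrix) and that $\de N_\theta$ acts by multiplication by $\der{N_\theta}{x_i}$ in each slot. The coefficient of $u^iw^j$ is exactly $\pa{J^T G_p J}_{ij}$, which identifies the matrix of $G_x$ in the standard basis as $J^T G_p J$. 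One should also note that $J^T G_p J$ is automatically symmetric positive semi-definite, consistent with it being a (degenerate) metric, and that its rank is at most $1 = \dim\M$, which is the source of the degeneracy emphasized earlier in the paper.

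There is essentially no obstacle here; the only thing to be careful about is bookkeeping with the dimensions — that $G_p$ is $1\times1$, $J$ is $1\times2$, so $J^TG_pJ$ is the correct $2\times2$ object — and the (harmless) identification of a linear map to $\R$ with its Jacobian row vector. So the proof is a two-line invocation of the definition of pullback plus the chain rule, and I would present it at that level of brevity rather than belaboring the index computation.
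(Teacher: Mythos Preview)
Your proposal is correct and matches the paper's treatment: the paper states this as an immediate corollary of the preceding lemma (that $G_x$ is the pullback of $G_p$ by $N_\theta$) and gives no separate proof, and your argument is exactly the standard coordinate expansion of the pullback via the chain rule. Your additional observations about symmetry, positive semi-definiteness, and the rank bound are consistent with the paper's later discussion but are extras rather than part of the proof itself.
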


\begin{rem}
    The Jacobian matrix of a neural network $J$ is not difficult to compute thanks to automatic differentiation available in most neural network training software's packages.
    Besides, this approach for computing $J$ numerically allows the sequel of the article to stay quite general regarding the architecture of the neural network.
\end{rem}

Knowing how to compute the metric at any point unlocks the computation of the local method seen in Section~\ref{sec:local_method} and the two-step method seen in Section~\ref{sec:dogleg_attack} when the FIM is recomputed at the intermediary point.
Additionally, studying the curvature of the input space is insightful to understand the behavior of the correction step in the two-step attack, and more generally to understand why adversarial attacks are, in some cases, so efficient.

Nonetheless, the pullback metric being almost always only semi-definite for machine learning tasks, it makes sense to consider its kernel. The curvature will have a decomposition term on the kernel of $g$ and a decomposition term on its orthogonal\footnote{Here, the Euclidean orthogonal is considered.}. To craft the attack, the orthogonal term will be the only curvature component of interest as stated at the end of Section~\ref{ssec:setup}. Thus in the following subsection, we compute the metric kernel for this neural network.

\subsection{The metric kernel foliation}

\begin{definition}
The kernel of a metric $G$ at the point $x$ is defined by
 \[
				\ker_x G = \set{X\in T_p\M \mid X^T G_x Y = 0,~ \forall Y \in T_p \M}
.\]
\end{definition}

This kernel defines an integrable distribution when Frœbenius' condition\footnote{For more details, see Chapter~1 of \cite{1988riemannian}} is satisfied, and with this distribution emerges a Riemannian foliation on the input manifold, foliation defined by the action of the neural network.

\begin{lem}
	$\ker_x G = \ker J$.
\end{lem}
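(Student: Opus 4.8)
The plan is to exploit the factorization $G_x = J^T G_p J$ from the previous corollary together with the fact that the output Fisher metric $G_p = \frac{1}{p}\frac{1}{1-p}$ is a strictly positive scalar for every $p\in\,]0,1[$. First I would record the elementary observation that, because $G_x$ is symmetric positive semi-definite, the kernel as defined (the radical of the bilinear form) coincides with the ordinary nullspace: $\ker_x G = \set{X \mid G_x X = 0}$. Indeed, $G_x X = 0$ trivially gives $X^T G_x Y = 0$ for all $Y$; conversely, if $X^T G_x Y = 0$ for all $Y$, then in particular $X^T G_x X = 0$, and writing $X^T G_x X = G_p\,(JX)^T(JX) = G_p\norm{JX}_2^2$ forces $JX = 0$ since $G_p > 0$, hence $G_x X = J^T G_p (JX) = 0$.

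The two inclusions then follow immediately. For $\ker J \subseteq \ker_x G$: if $JX = 0$, then $G_x X = J^T G_p J X = J^T G_p \cdot 0 = 0$, so $X\in\ker_x G$. For $\ker_x G \subseteq \ker J$: if $X\in\ker_x G$, then by the identity above $0 = X^T G_x X = G_p\norm{JX}_2^2$, and strict positivity of the scalar $G_p$ yields $JX = 0$, i.e. $X\in\ker J$. Combining the two inclusions gives $\ker_x G = \ker J$.

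There is really no serious obstacle here: the whole argument rests on $G_p$ being an invertible (here $1\times1$ and positive) matrix, which makes the pullback kernel equal to the Jacobian kernel. The only point worth stating carefully is the equivalence between the radical-of-the-form definition of $\ker_x G$ used in the text and the nullspace characterization, since that equivalence is what lets one pass cleanly between $X^T G_x Y = 0\ \forall Y$ and $JX = 0$. (One could also note that the same argument shows $\dim\ker_x G = \dim\ker J = n - \rank J$, which ties in with the earlier remark that $\dim\Ima G_x$ is bounded by the number of classes, here just the rank of the $1\times n$ Jacobian.)
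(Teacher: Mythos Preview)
Your argument is correct and essentially identical to the paper's: both prove the two inclusions by using the factorization $G_x = J^T G_p J$ with the strictly positive scalar $G_p = \frac{1}{p(1-p)}$, reducing $X^T G_x X = 0$ to $\norm{JX}_2^2 = 0$. Your explicit remark that the radical-of-the-form definition of $\ker_x G$ coincides with the ordinary nullspace is a helpful clarification the paper leaves implicit, but it does not change the route.
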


\begin{proof}
				We omit $\frac{1}{p}\frac{1}{1-p}$ during the proof because it is always non-zero.
				\begin{itemize}
								\item Let us prove first that $\ker_x G \subset \ker J$. If $X\in \ker_x G$, then $X^T J^T J X = 0$. Hence $\pa{JX}^T\pa{JX} = 0$, or written otherwise:  $\norm{JX} = 0$. Hence  $JX = 0$ and  $X\in\ker J$.
								\item Then we can prove that $\ker J \subset \ker_x G$. This inclusion is simply due to the fact that if  $X\in\ker J$, we have for all $Y \in T_p\M$ that  $X^TJ^TJY = \underbrace{\cancel{\pa{JX}^T}}_{=0} JX = 0$.\quad
				\end{itemize}
\end{proof}

\begin{prop}
\label{prop:distribution}
If at least one of the two components of $J$ is non-zero, the distribution at $x$ is one dimensional and given by $P_x = \Span\pa{J_2(x)\partial_1 - J_1(x)\partial_2}$.
\end{prop}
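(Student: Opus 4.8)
The plan is to compute $\ker J$ explicitly, since by the previous lemma $\ker_x G = \ker J$ and the distribution at $x$ is exactly this kernel. Recall that $J = \bmat{J_1(x) & J_2(x)}$ is a $1\times 2$ matrix, where $J_1(x) = \der{N_\theta(x)}{x_1}$ and $J_2(x) = \der{N_\theta(x)}{x_2}$. A tangent vector $X = X^1\partial_1 + X^2 \partial_2 \in T_x\X$ lies in $\ker J$ if and only if $JX = J_1(x) X^1 + J_2(x) X^2 = 0$.

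First I would observe that under the hypothesis that $J \neq 0$ (i.e.\ at least one of $J_1(x), J_2(x)$ is non-zero), the matrix $J$ has rank $1$, so by the rank–nullity theorem $\dim \ker J = 2 - 1 = 1$; this establishes that the distribution is one-dimensional. Next, to identify the line, I would simply exhibit a non-zero vector in the kernel: the vector $J_2(x)\partial_1 - J_1(x)\partial_2$ satisfies $J\bigl(J_2(x)\partial_1 - J_1(x)\partial_2\bigr) = J_1(x)J_2(x) - J_2(x)J_1(x) = 0$, and it is non-zero precisely because not both components vanish. Since it is a non-zero element of a one-dimensional space, it spans $\ker J = \ker_x G = P_x$, giving $P_x = \Span\pa{J_2(x)\partial_1 - J_1(x)\partial_2}$.

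There is essentially no obstacle here: the statement is a routine linear-algebra fact about the kernel of a $1\times 2$ matrix, combined with the already-established identification $\ker_x G = \ker J$. The only point requiring a word of care is that the spanning vector is genuinely non-zero, which is exactly what the hypothesis "at least one of the two components of $J$ is non-zero" guarantees; without it the "distribution" would be all of $T_x\X$ and the claimed formula would be vacuous.
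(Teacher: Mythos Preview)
Your proof is correct. The paper does not actually give a proof of this proposition; it states it immediately after the lemma $\ker_x G = \ker J$ and treats it as evident. Your argument supplies exactly the routine linear-algebra details the paper omits, using the same identification $\ker_x G = \ker J$ from the preceding lemma.
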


\begin{rem}
				If $J = 0$, the leaf at $x$ is singular and is of dimension 2.
\end{rem}

With the \texttt{Xor} function, the dimension of the leaves is $1$, thus the condition of Frœbenius is trivially satisfied and $P$ is integrable. 
\begin{prop}
If $\gamma : t\in I \mapsto \gamma(t)\in\M$ is an integral curve for the distribution $P$, it satisfies the following ODE:
 \begin{align*}
	\gamma'(t)  & = J_2\pa{\gamma(t)} \partial_1 - J_1 \pa{\gamma(t)}\partial_2 \\
	     	    & = J_{\gamma(t)} \bmat{0 & -1 \\ 1 & 0}
.\end{align*} 

\end{prop}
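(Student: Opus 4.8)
## Proof plan

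The statement asserts that an integral curve $\gamma$ of the one-dimensional distribution $P_x = \Span(J_2(x)\partial_1 - J_1(x)\partial_2)$ satisfies the displayed ODE. The plan is essentially to unwind the definition of an integral curve and then rewrite the resulting vector field in matrix form.

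First I would recall that, by definition, a curve $\gamma : I \to \M$ is an integral curve of the distribution $P$ precisely when $\gamma'(t) \in P_{\gamma(t)}$ for every $t \in I$; since $P_{\gamma(t)}$ is the one-dimensional span of the vector $J_2(\gamma(t))\partial_1 - J_1(\gamma(t))\partial_2$, this means $\gamma'(t)$ is a scalar multiple of that vector. To pin down the scalar as $1$ (i.e. to obtain the exact ODE rather than one up to reparametrization), I would invoke the standard convention — implicit in Proposition~\ref{prop:distribution}, which exhibits a distinguished spanning vector field rather than just the line — that the integral curve is taken with respect to that chosen generator, so that $\gamma'(t) = J_2(\gamma(t))\partial_1 - J_1(\gamma(t))\partial_2$ exactly. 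This is the content of the first displayed line.

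Second, I would rewrite this in the matrix form of the second line. Writing $\gamma'(t)$ and the basis vectors as column vectors, the vector $J_2(\gamma(t))\partial_1 - J_1(\gamma(t))\partial_2$ has components $\begin{psmallmatrix} J_2(\gamma(t)) \\ -J_1(\gamma(t)) \end{psmallmatrix}$. On the other hand, with $J_{\gamma(t)} = \begin{psmallmatrix} J_1(\gamma(t)) & J_2(\gamma(t)) \end{psmallmatrix}$ viewed as a $1\times 2$ row, one computes directly $J_{\gamma(t)} \begin{psmallmatrix} 0 & -1 \\ 1 & 0 \end{psmallmatrix} = \begin{psmallmatrix} J_2(\gamma(t)) & -J_1(\gamma(t)) \end{psmallmatrix}$, which is the transpose of the component vector above. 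So the identity $\gamma'(t) = J_{\gamma(t)}\begin{psmallmatrix} 0 & -1 \\ 1 & 0 \end{psmallmatrix}$ holds once one agrees on the (harmless) row-vs-column identification, and this is what the proposition records.

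There is no real obstacle here: the argument is just the definition of an integral curve together with a one-line matrix manipulation. The only subtlety worth flagging is the normalization of the parameter — "the" integral curve is only well-defined up to reparametrization unless one fixes the spanning vector field, which the paper has effectively done in Proposition~\ref{prop:distribution} — and the mild abuse of notation in writing a $1\times 2$ row times a $2\times 2$ matrix to denote a tangent vector whose components are really given by the transpose. I would state these conventions once and then present the two lines of the proposition as an immediate consequence.
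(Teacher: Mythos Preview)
Your proposal is correct and matches the paper's treatment: the paper states this proposition without proof, treating it as an immediate consequence of the definition of an integral curve together with the explicit generator given in Proposition~\ref{prop:distribution}, which is exactly what you unwind. Your remarks about the parametrization convention and the row-versus-column identification are accurate and make explicit the (harmless) conventions the paper leaves implicit.
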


It can be solved numerically quite easily with a finite difference method\footnote{See for example \cite{strikwerda2004finite}.}. \autoref{fig:fol} provides illustrations of the computed neural network kernel foliation with such a method for the \texttt{Xor} function (\autoref{fig:fol_xor}), and for the \texttt{Or} function (\autoref{fig:fol_or}).

\begin{figure}[ht]
     \centering
     \begin{subfigure}{0.49\textwidth}
         \centering
         \includegraphics[width=\textwidth]{./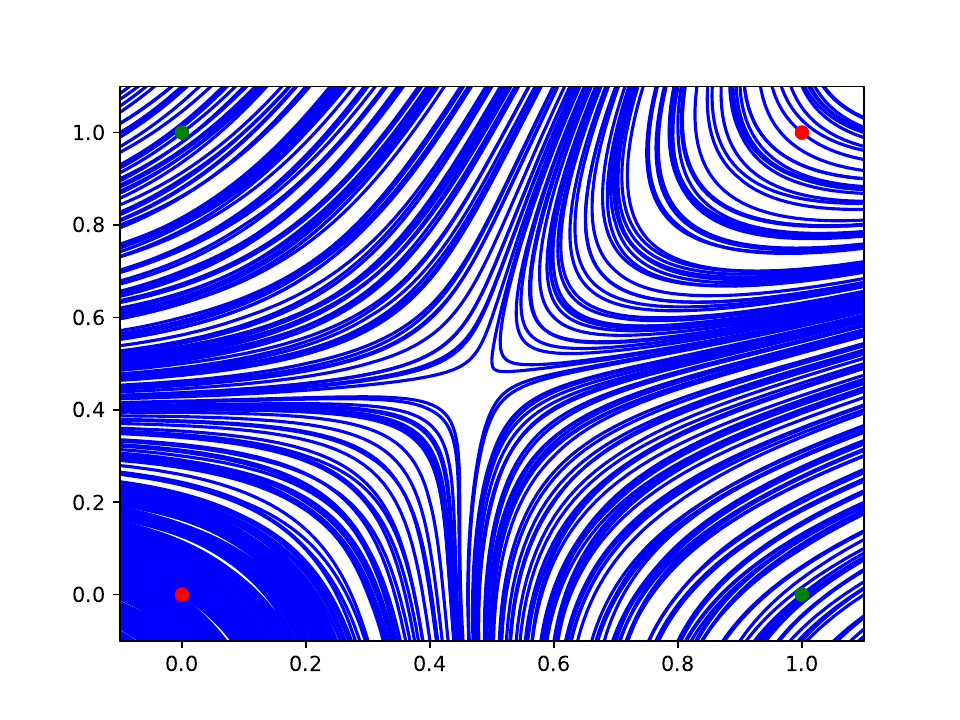}
         \caption{Task: \texttt{Xor}}
         \label{fig:fol_xor}
     \end{subfigure}
     \hfill
     \begin{subfigure}{0.49\textwidth}
         \centering
         \includegraphics[width=\textwidth]{./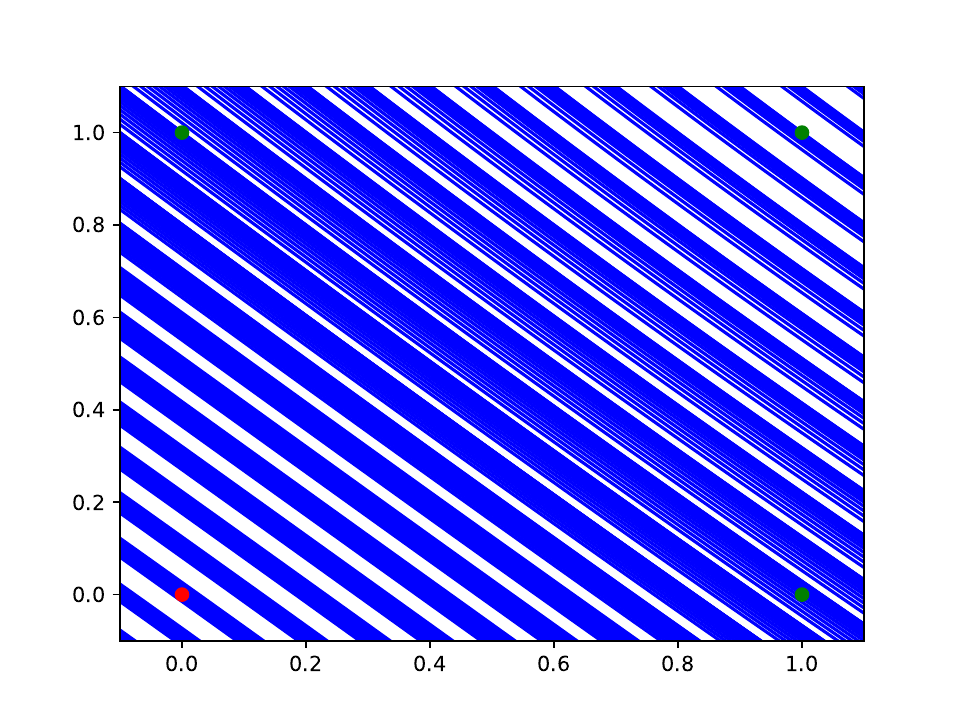}
         \caption{Task: \texttt{or}}
         \label{fig:fol_or}
     \end{subfigure}
        \caption{Kernel foliation: the leaves are represented by the blue lines, the red dots are the $0$ result and the green dots are the $1$ results.}
        \label{fig:fol}
\end{figure}

\begin{rem}
For readability, the transverse leaves are not represented but can be easily deduced from figure \ref{fig:fol} as Euclidean orthogonal curves to the blue ones.
\end{rem}

These results are quite interesting. First of all, we notice that a linearly separable problem such as the \texttt{or} function seems to have hyperplanes as leaves. In fact, it is easy to show that if the neural network is replaced by a linear form $x\mapsto \ang{n,x} + b$, then the leaves are hyperplanes defined by the normal vector $n$.

Second of all, one can see that there is a singular point close to the middle point $(0.5, 0.5)$ for the \texttt{Xor} task. Clearly, it is also easy to see that around that central point, the curvature of the leaves is the highest. Since the two-step adversarial attack makes use of the curvature, we can conjecture that it is in the central region that the second step will have the most impact. This phenomenon will be confirmed in Section~\ref{sec:num_res}.

\subsection{Visualizing the curvature}
\label{ssec:computing_the_curvature}

In this section, we are computing the extrinsic curvature of a transverse leaf to see where the two-step attack will differ the most from the one-step attack. But first, we compute the Levi-Civita connection. In the following of the article, we will use $i,j,k,l\in \nset{1}{\dim \X}$ as indices and we will use the Einstein summation notation.

\begin{definition}[Levi-Civita connection]
In coordinates $e_i$, if $\Gamma_{ij}^k$ are Chrystoffel's symbols, then the Levi-Civita connection is defined by:
\[\LC_{e_j} e_i = \Gamma_{ij}^ke_k\quad.\]
\end{definition}

\begin{definition}[Riemannian curvature]
The \emph{Riemannian Curvature Tensor} $R$ is defined such that for all three tensor fields $X,Y,Z$,
 \[
				R(X,Y)Z = \nabla_X \nabla_Y Z - \nabla_Y \nabla_X Z - \nabla_{\bra{X,Y}} Z
.\]

In local coordinates, this gives:
\begin{align*}
				R_{ijk}^l &= dx^l\pa{R\pa{\partial_i, \partial_j}\partial_k}\\
									&= dx^l \pa{\nabla_{\partial_i}\nabla_{\partial_j} \partial_k - \nabla_{\partial_j}\nabla_{\partial_i}\partial_k - \nabla_{\bra{\partial_i,\partial_j}}\partial_k} \\
									&= \der{\Gamma_{kj}^l}{x^i} - \der{\Gamma_{ki}^l}{x^j} + \Gamma_{kj}^{\alpha} \Gamma_{\alpha i}^l - \Gamma_{ki}^{\alpha}\Gamma_{\alpha j}^l
.\end{align*} 
\end{definition}

\begin{rem}
				In normal coordinates, at the origin point, one has:
				\[
								R_{ijkl} = \frac{1}{2} \pa{\partial_i \partial_l g_{jk} + \partial_j\partial_k g_{il} - \partial_i\partial_k g_{jl} - \partial_j \partial_l g_{ik}}
				.\] 
\end{rem}

Finding an explicit form for $R$ is unreasonable, but since we are in the simpler case where the dimension of the transverse leaves is 1, a quick approximation can do the trick. In fact, leaves are embedded submanifolds of the Euclidean space and the extrinsic curvature, i.e. the second fundamental form, can be computed as the rate of rotation of the normal vector. 

To approximate the second step $w$ of the two-step attack, we look at the rotation speed of the unit normal to the kernel leaf $\vec{n}$ when moved by an infinitesimal step $dx$. This infinitesimal step is taken in the direction of the transverse leaf, and is Euclidean. Since the rotation rate is approximated by finite difference, one can expect the angle variation to be very small. To ensure numerical stability, the usual procedure based on its cosine computation using inner product is replaced by one using the cross product. The next lemma gives the expression of the sine of the angle variation between two close positions on curve transverse to the $\ker g$ foliation.

\begin{lem}
If $\vec{n}_y$ is the normal to the kernel leaf at $y\in\X$, and if $\cdot \times \cdot $ is the cross product, then the infinitesimal variation of angle is
\[\abs{d\theta} = \arcsin\pa{\norm{\vec{n}_x \times \vec{n}_{x+dx}}_2}.\]
\end{lem}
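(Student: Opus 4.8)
The plan is to work directly with the two unit normal vectors $\vec{n}_x$ and $\vec{n}_{x+dx}$ as vectors in the plane $\R^2$, and to recover the angle between them from the magnitude of their cross product rather than from their dot product. Since both $\vec{n}_x$ and $\vec{n}_{x+dx}$ are unit vectors, write $d\theta$ for the (signed or unsigned) angle between them. The elementary identity for the cross product of two vectors $u,v\in\R^2$, namely $\norm{u\times v}_2 = \norm{u}_2\,\norm{v}_2\,\abs{\sin\angle(u,v)}$, specializes here to $\norm{\vec{n}_x \times \vec{n}_{x+dx}}_2 = \abs{\sin d\theta}$, because both norms are $1$. Inverting this through $\arcsin$ then gives $\abs{d\theta} = \arcsin\pa{\norm{\vec{n}_x \times \vec{n}_{x+dx}}_2}$, provided $\abs{d\theta}\le \pi/2$, which is guaranteed in the infinitesimal regime since $dx$ is taken small and $\vec n$ varies continuously (indeed $\vec n_{x+dx}\to\vec n_x$ as $dx\to 0$, so $d\theta\to 0$).

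The key steps, in order, are: (i) recall from Proposition~\ref{prop:distribution} that the kernel leaf direction at $y$ is $\Span\pa{J_2(y)\partial_1 - J_1(y)\partial_2}$, so its Euclidean unit normal $\vec{n}_y$ is (up to sign) the normalization of $\pa{J_1(y), J_2(y)}$, i.e. a well-defined unit vector wherever $J(y)\neq 0$; (ii) observe that $\vec n_x$ and $\vec n_{x+dx}$ are two unit vectors in $\R^2$, so their cross product (interpreted as the scalar $n_x^{(1)} n_{x+dx}^{(2)} - n_x^{(2)} n_{x+dx}^{(1)}$, or equivalently the $z$-component of the cross product of their embeddings in $\R^3$) has absolute value equal to $\abs{\sin d\theta}$; (iii) since $dx$ is infinitesimal the angle satisfies $\abs{d\theta}\in[0,\pi/2]$, a range on which $\arcsin\circ\sin$ is the identity, so applying $\arcsin$ to $\norm{\vec n_x\times\vec n_{x+dx}}_2 = \abs{\sin d\theta}$ recovers $\abs{d\theta}$.

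I do not expect a genuine obstacle here: this is a routine trigonometric identity dressed up for numerical stability, and the only point requiring a word of care is the branch of $\arcsin$, handled by the smallness of $dx$. The one modelling assumption worth flagging explicitly is that $\vec n_y$ is only defined up to sign; since $\norm{\vec n_x \times \vec n_{x+dx}}_2$ is insensitive to flipping the sign of either normal, the formula for $\abs{d\theta}$ is unambiguous, which is exactly why the unsigned version is the natural quantity to compute. This is the mild subtlety that makes the cross-product formulation preferable to the cosine one in the presence of a sign ambiguity and near-collinear vectors.
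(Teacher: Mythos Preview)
Your argument is correct and is exactly the elementary trigonometric computation the paper has in mind: the paper does not supply a proof of this lemma, treating the identity $\norm{\vec n_x\times\vec n_{x+dx}}_2=\abs{\sin d\theta}$ for unit normals as immediate and only remarking afterward that in the small-angle regime the sine may be replaced by the angle itself. Your steps (i)--(iii) fill this in cleanly, and your comments on the branch of $\arcsin$ and on insensitivity to the sign of $\vec n_y$ are apt refinements that the paper leaves implicit.
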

Please note that in the small angles approximation, the sine can be replaced by the angle itself, thus recovering the usual infinitesimal rotation representation as a cross product (this is in fact a Lie algebra representation in the usual sense). 

To approximate the effect of the curvature during the Euclidean step $v$, one has to compute the rotation matrix $R$ of angle $d\theta$:
\[R = \bmat{\cos\pa{d\theta} & - \sin\pa{d\theta} \\ \sin\pa{d\theta} & \cos\pa{d\theta}}\]

\begin{prop}
If $v$ is the first step, the approximated second step is given by $w = R v$ and then re-normalized to get $\norm{w}^2 + \norm{v}^2 =  \varepsilon^2$.
\end{prop}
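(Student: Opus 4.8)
The plan is to justify the formula $w = Rv$ (followed by renormalization) as a concrete, low-dimensional implementation of the correction step derived abstractly in Section~\ref{sec:dogleg_attack}. The starting point is the observation that in the $2$D \texttt{Xor} setting the transverse foliation has one-dimensional leaves, so the curvature correction to the metric along the first step $v$ is entirely encoded in the rotation of the unit normal $\vec n$ to the kernel leaf. First I would recall that the first step $v$ is taken, by Section~\ref{sec:local_method}, in the direction of the top eigenvector of $G_{x_o}$, which (by Proposition~\ref{prop:distribution} and the Lemma $\ker_x G = \ker J$) is exactly the transverse direction, i.e. spanned by $\vec n$ itself up to sign. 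Hence moving from $x_o$ to $x_o + v$ is, to first order, a step of Euclidean length $\mu = \norm v_2$ along the transverse leaf, and the preceding Lemma gives the associated angular variation $d\theta = \arcsin\pa{\norm{\vec n_{x_o}\times \vec n_{x_o+v}}_2}$ between the normals at the two endpoints.

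The core of the argument is then to identify the optimal second step of \ref{eq:step_2_strong} with $Rv$. By the Proposition at the end of Section~\ref{sub:lambda>0}, an admissible optimal $w$ is the top eigenvector of $PP^TB$ in normal coordinates (equivalently $P^TBP$ in the original chart), with $B = I_n + \frac13 R_v$ and $R_v = \overline R_{mkln}\,\overline v^k\overline v^l$. The key geometric fact in dimension $2$ with $1$-dimensional transverse leaves is that the curvature contribution to the metric, expressed extrinsically, acts on a vector carried along $v$ precisely by the infinitesimal rotation that rotates $\vec n_{x_o}$ into $\vec n_{x_o+v}$; this is the content of the remark that, in the small-angle regime, the cross-product expression $d\theta$ is the standard Lie-algebra (i.e. infinitesimal $SO(2)$) representative of that rotation. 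So applying the finite rotation $R$ of angle $d\theta$ to $v$ realizes, to the order of approximation used throughout Section~\ref{sec:dogleg_attack} ($o(\norm v^2)$), the effect of the curvature term $B$ on the direction that maximizes the Riemannian norm at $x_o+v$. Concretely: $R$ is orthogonal, $Rv$ still lies along the transverse direction at the displaced point, and it is exactly the direction in which the metric $G_{x_o+v}$ — corrected by curvature — is largest, which is the optimality condition $P^TBP\,w = \lambda w$ specialized to this $1$D transverse situation.

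Finally I would close by imposing the budget. Using the triangular-inequality simplification \ref{eq:step_2_strong}, any $w$ with $\norm w_2 \le \varepsilon - \mu$ is feasible for \ref{eq:step_2}; but since the paper elsewhere prefers to saturate the budget, the stated renormalization is instead chosen so that $\norm w_2^2 + \norm v_2^2 = \varepsilon^2$ (the two steps being compared without parallel transport, as noted, because the Christoffel symbols vanish at the origin in normal coordinates, so the Pythagorean bookkeeping is legitimate to this order). Then $x_a = x_o + v + w$ lands on (or inside) the Euclidean ball of radius $\varepsilon$, and the construction $w = Rv$ followed by this renormalization yields the claimed approximate solution.

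The main obstacle I anticipate is making rigorous the identification "curvature term $B$ acting on $v$ $=$ rotation by $d\theta$ applied to $v$": this requires relating the second derivatives of $g$ in normal coordinates (which define $R_{mkln}$, hence $R_v$ and $B$) to the extrinsic rotation rate of the leaf normal, and controlling that the top eigenvector of $P^TBP$ stays aligned with the rotated transverse direction up to $o(\norm v^2)$. In the genuinely $1$-dimensional-transverse case this should reduce to a short computation with the Frenet/second-fundamental-form data of the leaf, but spelling out that the two approximations (the normal-coordinate Taylor expansion of Section~\ref{sec:dogleg_attack} and the finite-difference rotation estimate of this subsection) agree to the stated order is where the care is needed.
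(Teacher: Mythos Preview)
The paper states this proposition without proof; it is presented as a heuristic recipe, not as a theorem derived from the machinery of Section~\ref{sec:dogleg_attack}.

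Your argument is on the right track in its core observation: since $G_x$ has rank one in this example, its top eigenvector is exactly the unit normal $\vec n_{x_o}$ to the kernel leaf, so $v \parallel \vec n_{x_o}$; the optimal second-step direction at $x_o+v$ is likewise the top eigenvector of $G_{x_o+v}$, i.e.\ $\vec n_{x_o+v}$; and the preceding lemma says $\vec n_{x_o+v}$ is obtained from $\vec n_{x_o}$ by the rotation $R$ of angle $d\theta$. Hence $w \propto R v$, then renormalized. That is all the proposition is really claiming.

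Where you overshoot is in trying to route this through the intrinsic formalism $B = I_n + \tfrac13 R_v$ and the eigenproblem $P^TBP\,w = \lambda w$. The paper explicitly abandons that route here: the sentence just before the lemma says that computing the Riemann tensor is ``unreasonable'' and that the extrinsic rotation of the normal is a \emph{replacement} approximation, not a specialization of the normal-coordinate expansion. So the identification you flag as your ``main obstacle'' (matching $B$ acting on $v$ with rotation by $d\theta$) is neither proved nor needed in the paper; the proposition rests solely on the rank-one structure of $G$ and the finite-difference estimate of the normal's rotation. Your proof would be cleaner and closer to the intended level if you dropped the $B$/$R_v$ detour entirely and argued directly from ``top eigenvector of $G$ $=$ transverse unit normal'' at both $x_o$ and $x_o+v$.
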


The signed curvature ${d\theta}$ for the \texttt{Xor} problem can then be seen on \autoref{fig:curvature_xor}. One can see that the curvature is the highest in the middle, around the point $(0.5,0.5)$, and also on the diagonals.

\begin{figure}[ht]
    \centering
    \includegraphics[width=0.8\textwidth]{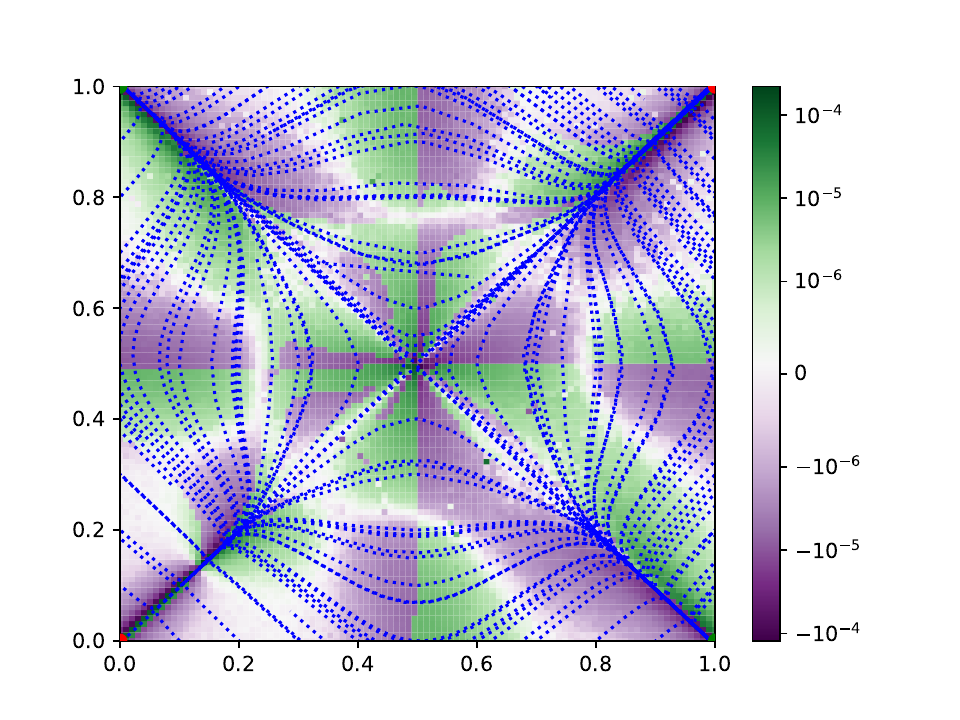}
		\caption{Extrinsic (signed) curvature of the transverse leaves (${d\theta}$) for the task \texttt{Xor}, computed with a $dx$ of $10^{-6}$, with the transverse leaves in blue.}
    \label{fig:curvature_xor}
\end{figure}

\section{Numerical results}
\label{sec:num_res}
\revision{In the following subsections, we first provide an illustrative example, and next we report experimental results on two public datasets: MNIST \cite{lecun1998mnist} and CIFAR10 \cite{cifar-10}.
}
All codes used to produce the following results can be found in~\cite{Tron_CurvNetAttack}. 

\minrevision{
Note that the first step in the Two Step Spectral Attack is set to have a Euclidean budget $\mu$ of $60\%$ of the total budget $\varepsilon$ in all following experiments.
}

\subsection{Xor dataset}

To compare the different attacks, we train a neural network with 8 hidden neurons on random points taken in the square $\bra{0,1}^2$ until convergence. We then compute the two different attacks which are the One Step Spectral Attack presented in Section~\ref{sec:local_method} and the Two Step Spectral Attack presented \rerevision{in Section~\ref{sec:dogleg_attack} with Algorithm~\ref{algo:TSSA}} where we compute the FIM at $x_o$ and $x_o+v$. These attacks are computed on 5000 random points selected in a square $I$ of variable length. The fooling rate is then computed as the quotient of the number of fooled prediction by the total number of points. The budget is selected between $0$ and $0.5$.  We plot the results on \autoref{fig:XOR_fooling_rates}. 

\begin{figure}[ht]
				\centering
				\begin{subfigure}{0.49\textwidth}
								\centering
								\includegraphics[width=\textwidth]{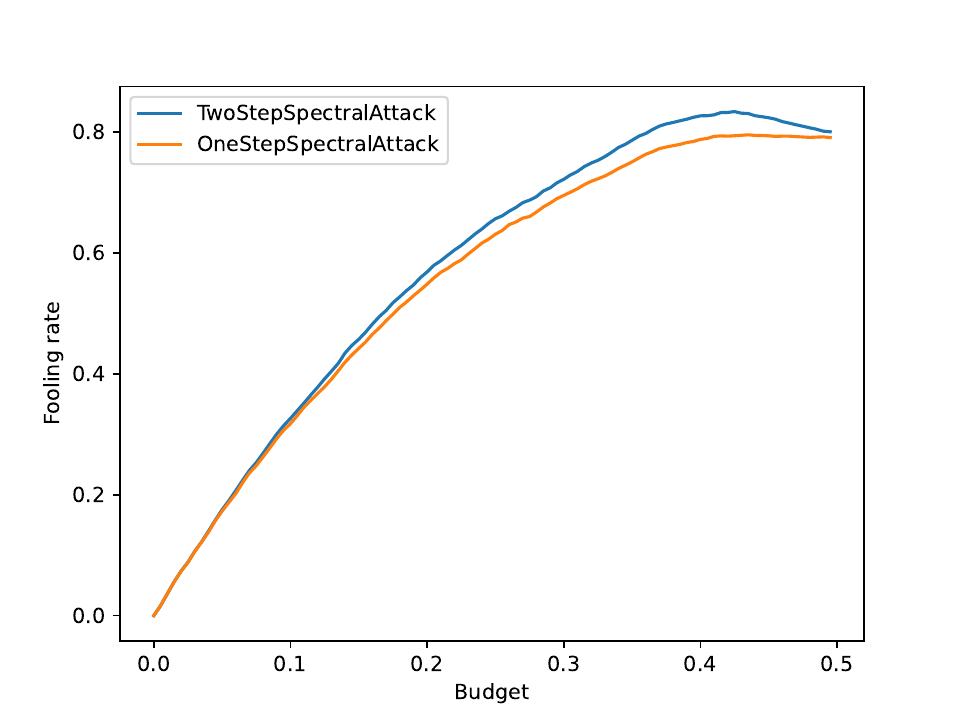}
								\caption{$I=\bra{0,1}^2$}
								\label{sfig:XOR_fr_full}
				\end{subfigure}
				\begin{subfigure}{0.49\textwidth}
								\centering
								\includegraphics[width=\textwidth]{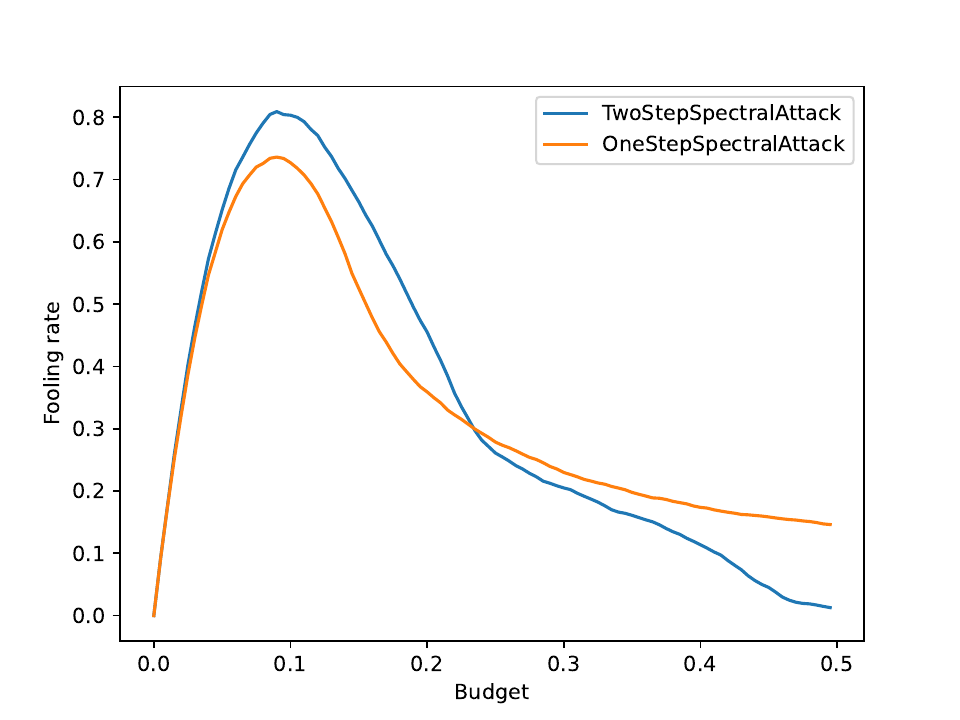}
								\caption{$I=\bra{0.4,0.6}^2$}
								\label{sfig:XOR_fr_center}
				\end{subfigure}
				\caption{Fooling rates with respect to the Euclidean budget with random points taken in $I$ for the task \texttt{Xor}.}
				\label{fig:XOR_fooling_rates}
\end{figure}

On each figure, the two-step attack beats the one step attack (for reasonable budgets). The two-step attack is especially strong on the area where the curvature is the strongest: almost one point better for the TSSA at the peak in high curvature zones (see Figure~\ref{sfig:XOR_fr_center}) compared to only half a point for the full space (see Figure~\ref{sfig:XOR_fr_full}). Indeed, as seen on Figure~\ref{fig:curvature_xor}, the curvature is really strong at the center of the square $[0,1]^2$. The eigenvector of the FIM associated with the greatest eigenvalue is always orthogonal to the leaf kernel. Therefore, striking close to the middle region without taking into account the curvature usually results in not changing the output label: $(0,0) \leftrightarrow (1,1)$ or $(1,0) \leftrightarrow (0,1)$. This is why the TSSA gets better results by taking the curvature into account.
\begin{rem}
Note that on Figure~\ref{sfig:XOR_fr_center}, the fooling rates collapse shortly after $\text{budget}=0.1$ because such high budgets makes every points leave the initial square $I$. These high budgets are too big for this area which is very close to the decision boundaries and the second step is not enough to compensate these instabilities, thus leading to such strange results at first glance.
\end{rem}

The illustration of the two different attacks can be found on \autoref{fig:Xor_plot_attacks}.

\begin{figure}
	\centering
	\includegraphics[width=0.8\textwidth]{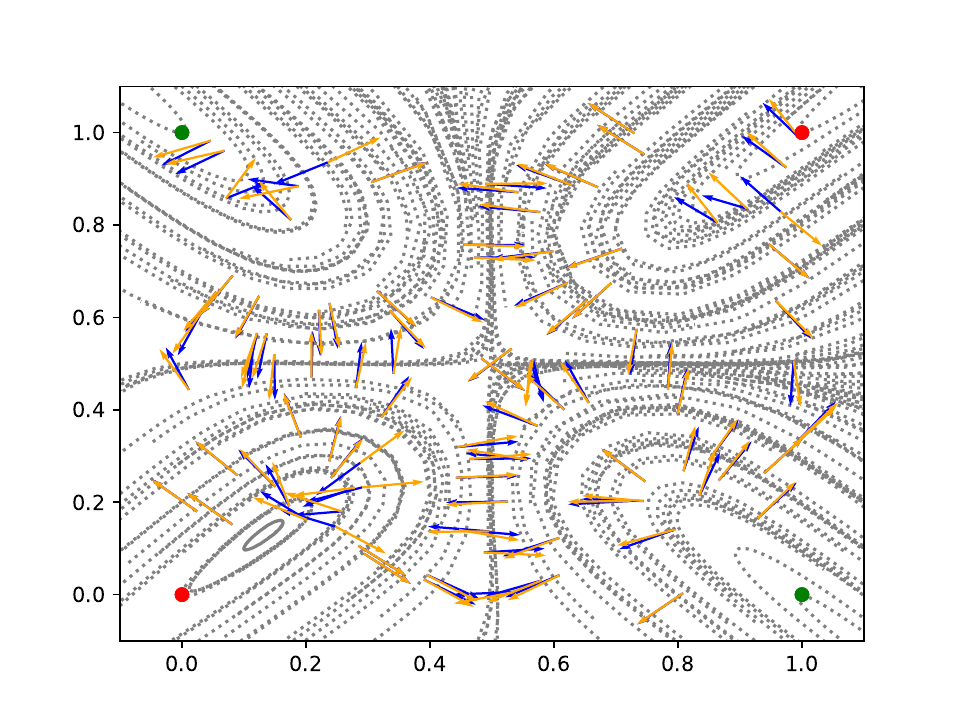}
\caption{TSSA (in \textcolor{blue}{blue}) compared to OSSA (in \textcolor{orange}{orange}) with $\varepsilon = 0.1$. The kernel foliation is depicted with the grey lines.}
	\label{fig:Xor_plot_attacks}
\end{figure}

These results confirm our intuition that using the information of local curvature to craft an adversarial attack is meaningful. It also highlights the role of the leaves of the kernel foliation in the sensitivity of neural networks to attacks.

\subsection{MNIST dataset}\label{ssec:MNIST}

Next, we train a medium convolutional neural network with 4 layers (2 convolutional and 2 linear), \texttt{ReLU} as activation function and a \texttt{Softmax} function at the output. The fooling rates for the One Step Spectral Attack and the Two Step Spectral Attack are represented on Figure~\ref{fig:fr_compared_MNIST} for different Euclidean budget ranging from 0 to 10. The TSSA performs better than the OSSA, proving that curvature of the network is of interest in the construction of attacks or defenses to these attacks. The difference between the fooling rate of the TSSA and the fooling rate of the OSSA is plotted on Figure~\ref{fig:fr_diff}. It represents the advantage given by the curvature. As one can see, there are 3 different regimes:
\begin{enumerate}
    \item The first one in blue is the region where the budget is small enough for the approximation $\exp_{x_o}(v) \approx x_o + v$ to hold. Taking into account the curvature does not modify enough the attack to fool the network on a lot of additional images compared to the straight line attack (OSSA).
    \item The second regime in green is the region where the budget is high enough for the correction due to the curvature to be essential, and yet small enough for this same correction to hold. That is why the TSSA is much better than the OSSA for these budgets.
    \item The third and last regime in red corresponds to budgets to big so that the approximations no longer make sense. At these horizons, the manifold is too non-linear.
\end{enumerate}

\revision{To evaluate our technique against State Of The Art (SOTA) attacks, we consider the AutoAttack (AA) proposed in~\cite{croce_reliable_2020}. The authors propose a method to create adversarial attacks combining and improving 4 SOTA algorithms, making the AutoAttack algorithm an appropriate candidate to compare performances to. Indeed, it has been designed to benchmark adversarial robustness, enabling the community to have a unified way to compare the performances of their adversarial defenses with others on SOTA adversarial attacks. To implement the method, we used the code available at \url{https://github.com/fra31/auto-attack.git}.
}
\minrevision{The settings used for the AutoAttack are the $L_2$ norm, a varying budget of $\varepsilon$, and the \emph{standard} version of the attack.}

\revision{
				The difference between the fooling rate of the TSSA and the fooling rate of the AutoAttack is plotted with respect to the Euclidean budget on Figure~\ref{fig:fr_diff_MNIST_TSSA-AA} for both \texttt{ReLU} and \texttt{Sigmoid} activation function cases. From the plots, one can see that the AutoAttack achieves better results than our Two Step attack for mid-range budgets. However, by design, the TSSA algorithm only performs two steps when the AutoAttack procedure carries out many moves (for example, 100 iterations for the APGD or FABAttack subroutines used within AutoAttack). To have a fair comparison with TSSA, it would require a 100 steps attack or equivalently solve the geodesic equation with 100 discretization steps. This is computationally expensive at each step due to the calculation of the FIM and its eigenvectors. Nevertheless, the TSSA achieves comparable performances for large budgets, and even better performances for small budgets. Note that the range of small budgets is the one considered in~\cite{croce_reliable_2020}. The results confirm that good adversarial attacks can be efficiently computed (meaning in very few steps) with the use of geometry, which was the primary objective of this study.
}

\revision{
Note that with equal Euclidean budget, the infinity norm of the attack is higher for the AutoAttack than for the TSSA or the OSSA most of the time as shown in Figure~\ref{fig:MNIST-inf_norm_compared_TSSA-OSSA-AA}. It means that, in the case of MNIST images, the absolute value of the variation of pixel intensity is higher in the case of AutoAttack for equal Euclidean budget. Therefore, attacks constructed with AutoAttack might be more noticeable to the human eye on the average.
}

Some examples of attacked images are also included for different budgets in order for the reader to realize that such small budgets are enough to fool the network while us - humans - still recognize very well the true digit. Other examples of attacks produced by the TSSA procedure are included in \autoref{fig:TSSA_bugdet_3} and in \autoref{fig:TSSA_bugdet_7} for different budgets.

\begin{figure}[ht]
    \centering
    \includegraphics[width=0.7\textwidth]{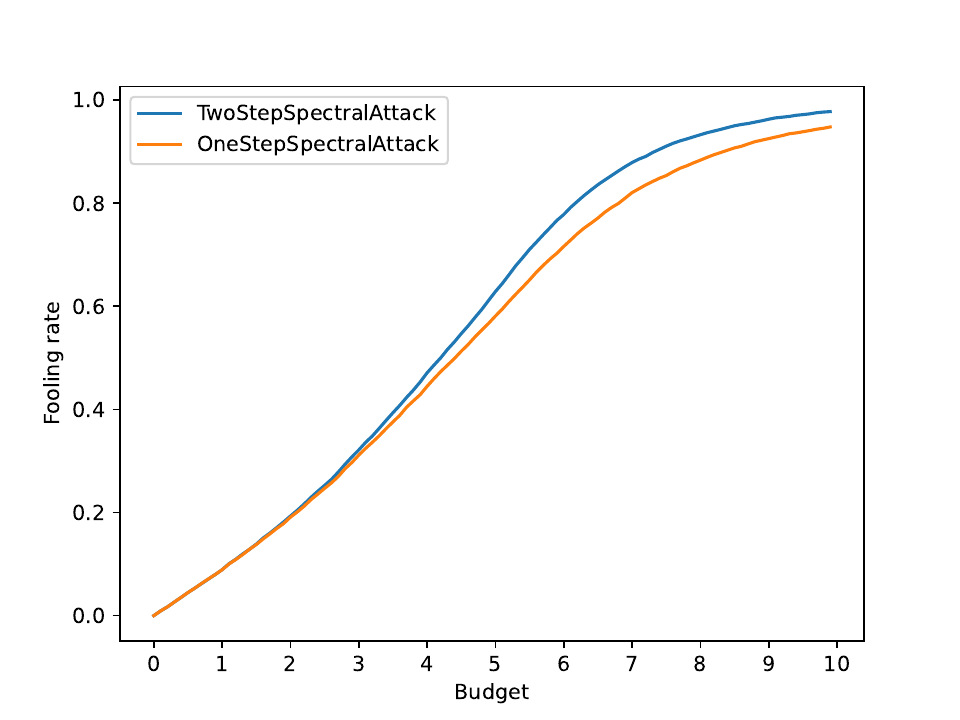}
    \caption{Fooling rates with respect to the Euclidean budget on all MNIST test-set.}
    \label{fig:fr_compared_MNIST}
\end{figure}

\begin{figure}[ht]
    \centering
    \includegraphics[width=\textwidth]{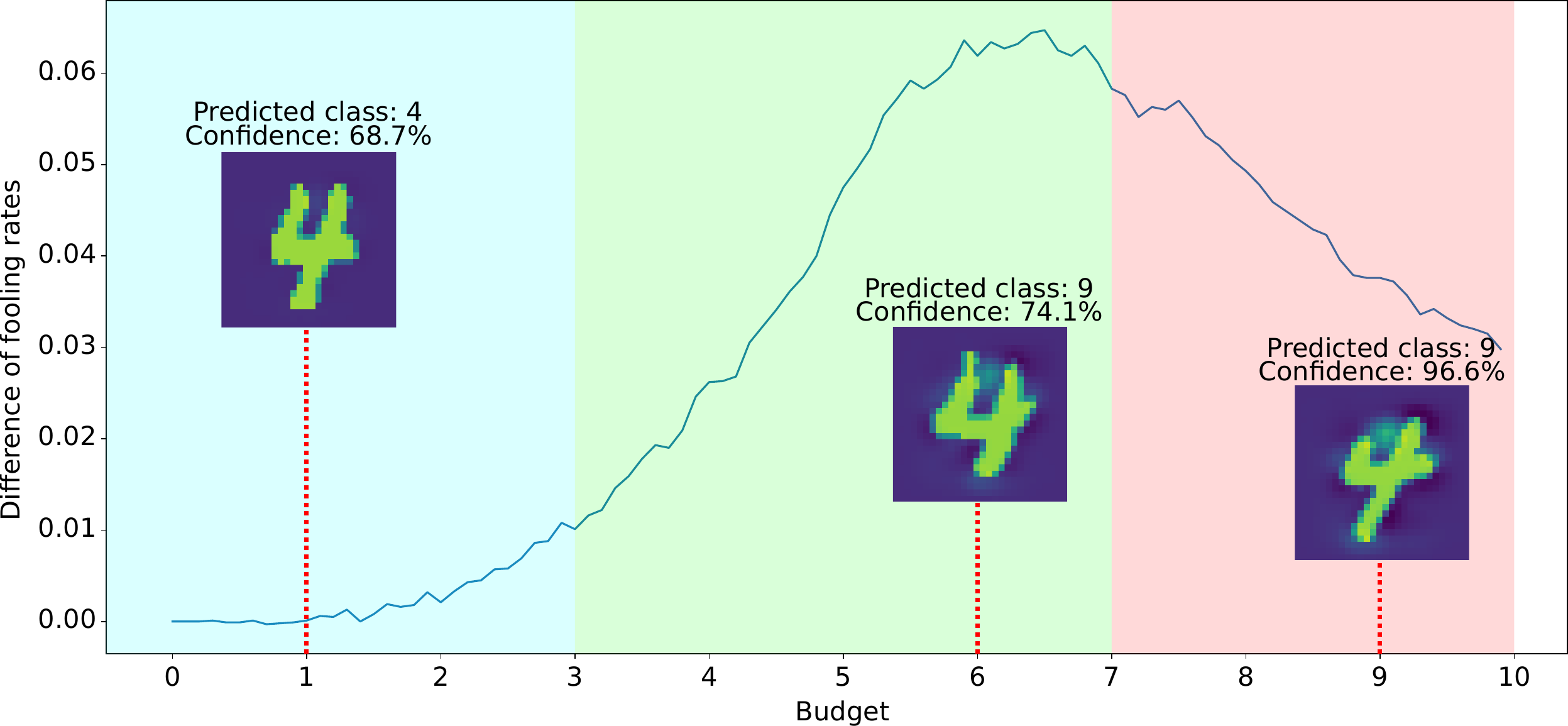}
    \caption{Difference between the fooling rate of the TSSA ($\text{fr}_{\text{TSSA}}$) and the one of the OSSA ($\text{fr}_{\text{OSSA}}$) with respect to the Euclidean budget ($\text{fr}_{\text{TSSA}} - \text{fr}_\text{OSSA}$), and some examples of the attacks it produces.}
    \label{fig:fr_diff}
\end{figure}

\begin{figure}
	\centering
	\begin{subfigure}{\textwidth}
				\centering
				\includegraphics[width=\textwidth]{./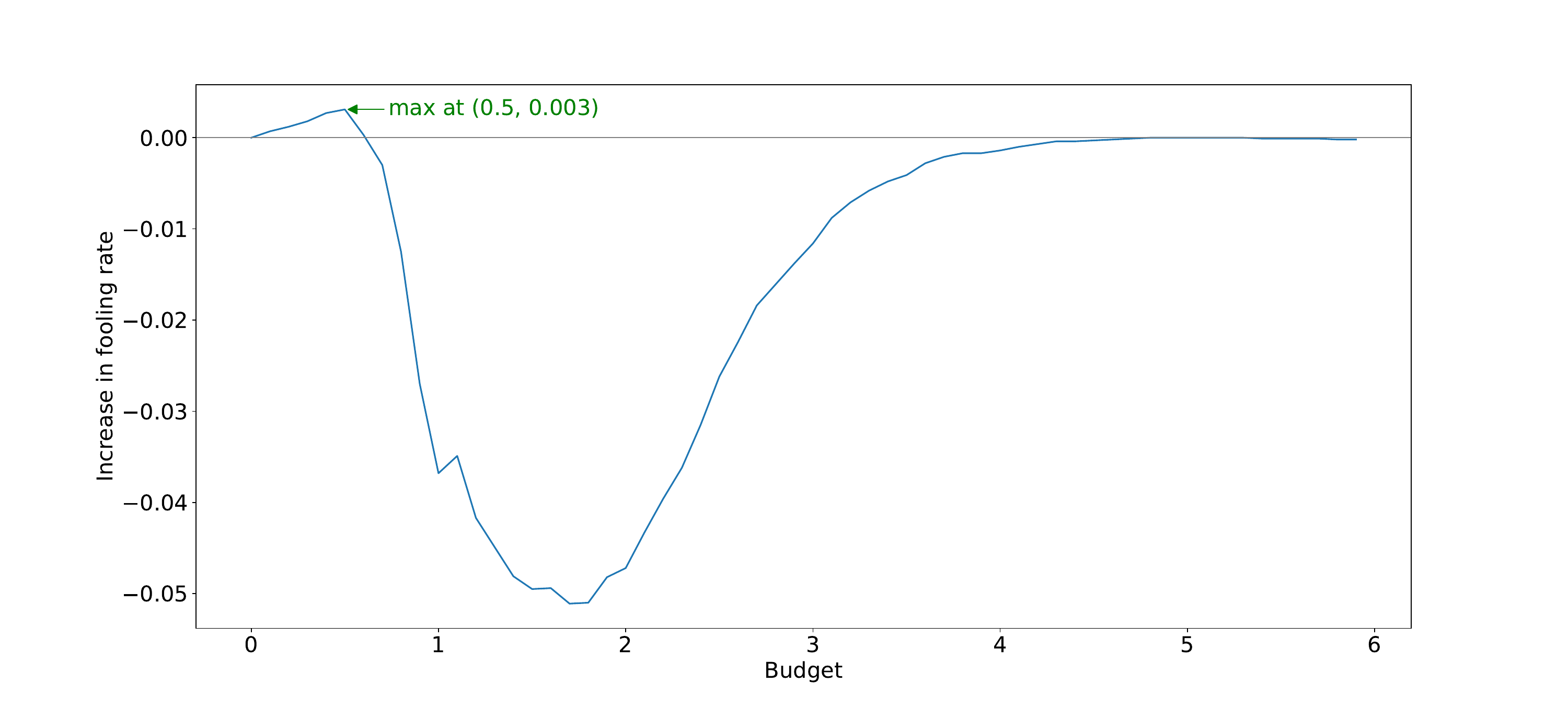}
				\caption{Activation function: ReLU}
	\end{subfigure}
	\begin{subfigure}{\textwidth}
				\centering
				\includegraphics[width=\textwidth]{./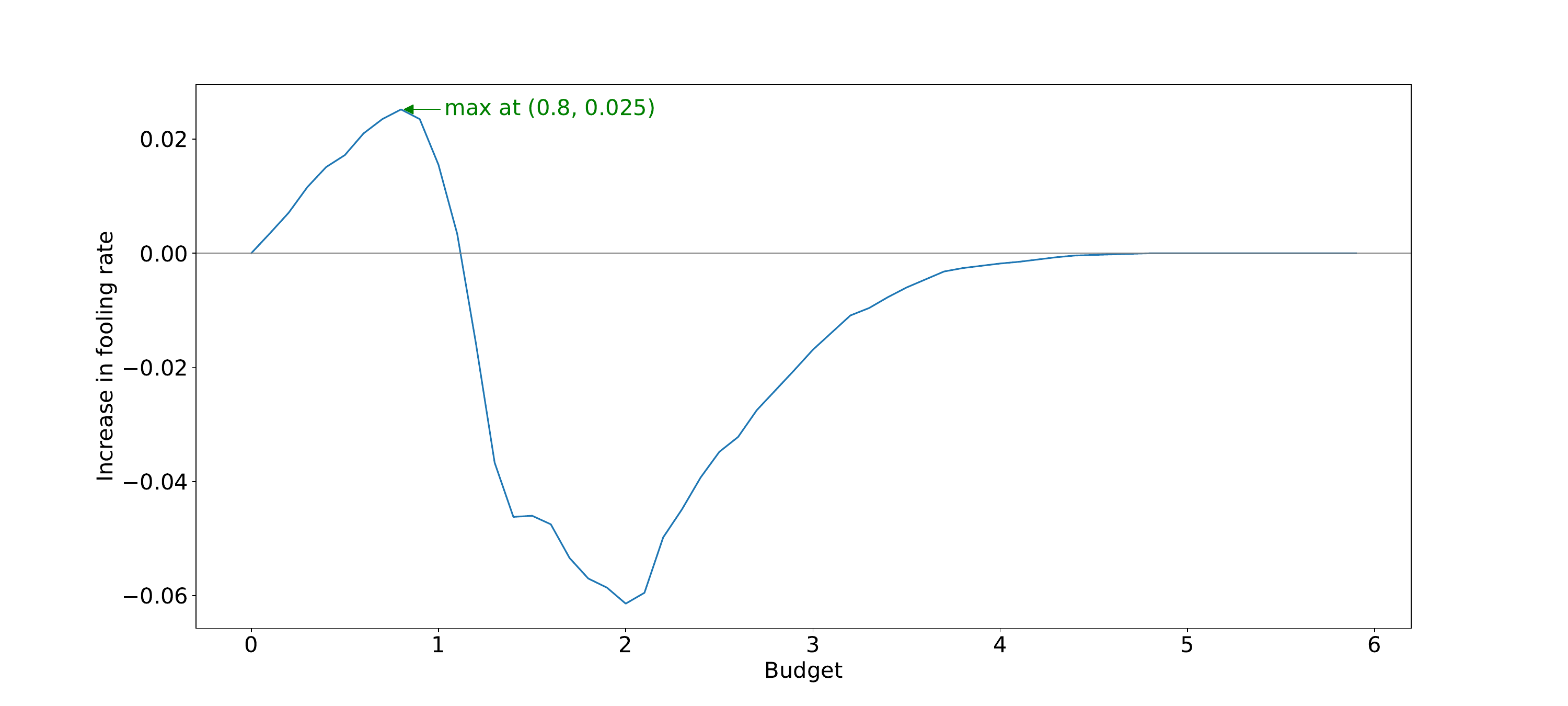}
				\caption{Activation function: Sigmoid}
	\end{subfigure}
	\caption{Difference between the fooling rate of the TSSA ($\text{fr}_{\text{TSSA}}$) and the one of AutoAttack ($\text{fr}_{\text{AA}}$) with respect to the Euclidean budget ($\text{fr}_{\text{TSSA}} - \text{fr}_\text{AA}$) on MNIST.}
	\label{fig:fr_diff_MNIST_TSSA-AA}
\end{figure}

\begin{figure}
	\centering
	\includegraphics[width=\textwidth]{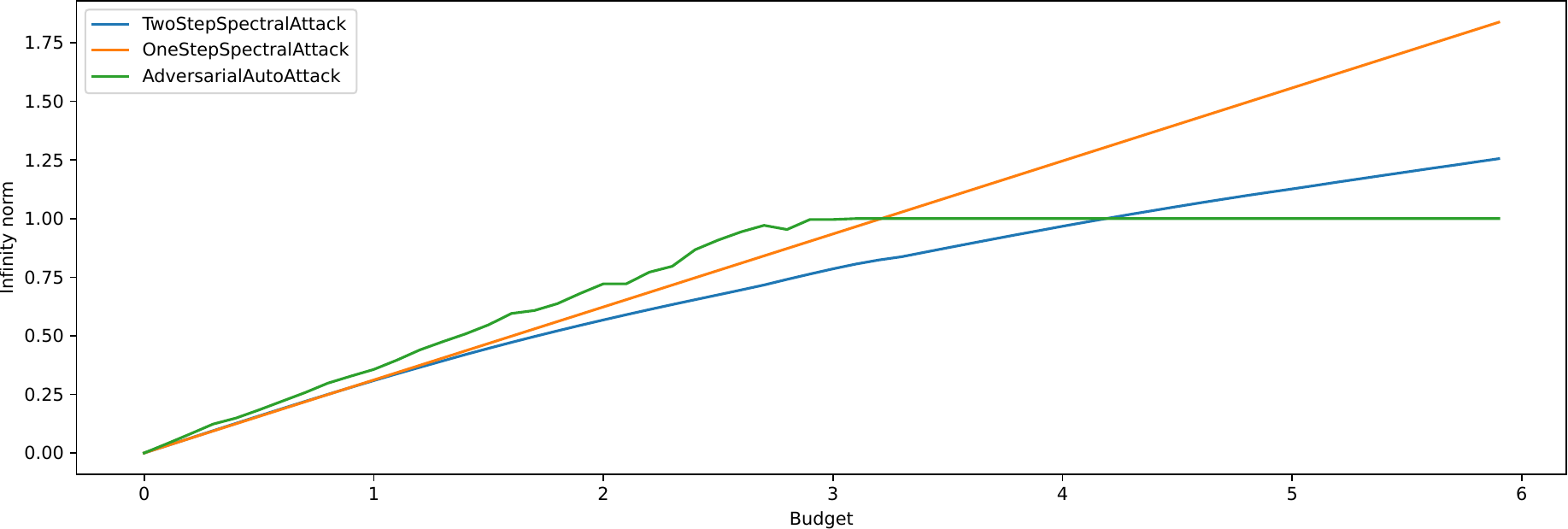}
	\caption{$\norm{\cdot}_\infty$ of the attack with respect to its Euclidean norm $\norm{\cdot}_2$ on MNIST.}
	\label{fig:MNIST-inf_norm_compared_TSSA-OSSA-AA}
\end{figure}

\begin{figure}
	\centering
	\includegraphics[width=0.7\textwidth]{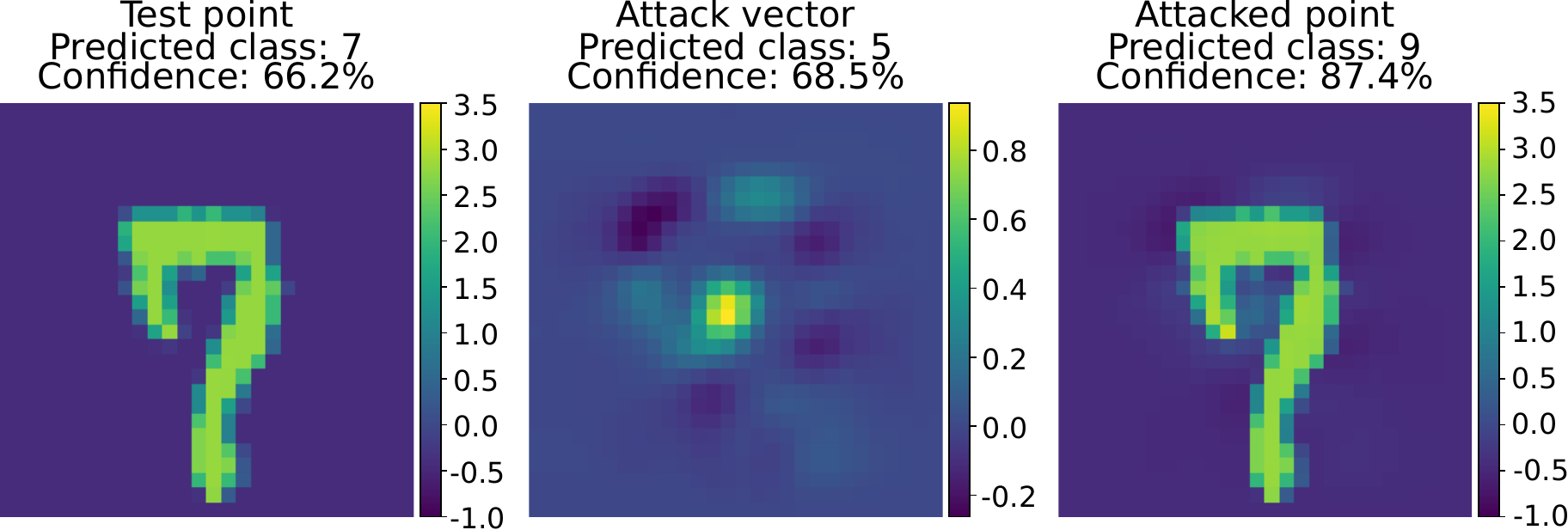}
	\caption{Result of the TSSA with a budget $\varepsilon=3$.}
	\label{fig:TSSA_bugdet_3}
\end{figure}

\begin{figure}
	\centering
	\includegraphics[width=0.7\textwidth]{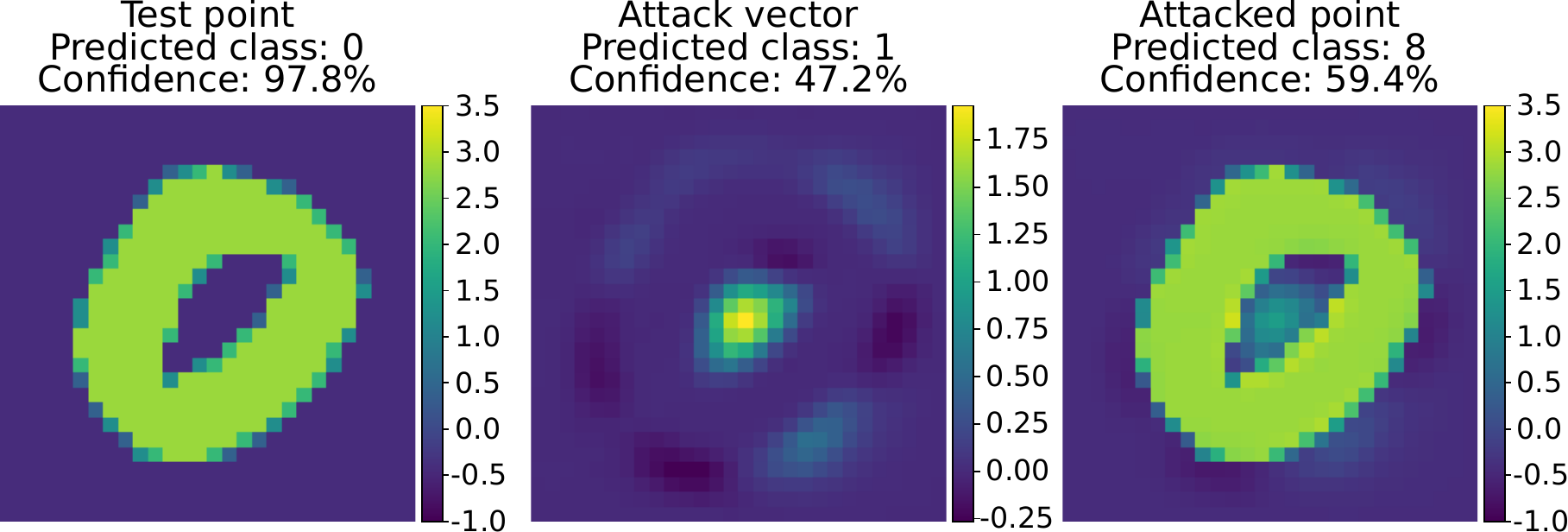}
	\caption{Result of the TSSA with a budget $\varepsilon=7$.}
	\label{fig:TSSA_bugdet_7}
\end{figure}

\minrevision{
The values of the fooling rates with respect to the Euclidean budget for the 3 examined adversarial attack procedures on MNIST are detailed in Table~\ref{tab:fr_mnist_sigmoid_table} for the Sigmoid network and in Table~\ref{tab:fr_mnist_relu_table} for the ReLU network.
\begin{table}
    \centering
    \caption{Fooling rates with respect to the Euclidean budget on all MNIST test-set (without input normalization, activation function: Sigmoid) for the 3 attack procedures.}
    \label{tab:fr_mnist_sigmoid_table}
\begin{tabular}{lccccccccccc}
\toprule
Budget ($\varepsilon$) & 0.50 & 1.00 & 1.50 & 2.00 & 2.50 & 3.00 & 3.50 & 4.00 & 4.50 & 5.00 \\
\midrule
OSSA & 0.126 & 0.326 & 0.599 & 0.805 & 0.915 & 0.963 & 0.983 & 0.992 & 0.997 & 0.998\\
TSSA & 0.128 & 0.339 & 0.643 & 0.845 & 0.945 & 0.981 & 0.994 & 0.998 & 1.000 & 1.000 \\
AA & 0.110 & 0.324 & 0.689 & 0.906 & 0.980 & 0.998 & 1.000 & 1.000 & 1.000 & 1.000\\
\bottomrule
\end{tabular}
\end{table}
\begin{table}
    \centering
    \caption{Fooling rates with respect to the Euclidean budget on all MNIST test-set (without input normalization, activation function: ReLU) for the 3 attack procedures.}
    \label{tab:fr_mnist_relu_table}
\begin{tabular}{lccccccccccc}
\toprule
Budget ($\varepsilon$) & 0.50 & 1.00 & 1.50 & 2.00 & 2.50 & 3.00 & 3.50 & 4.00 & 4.50 & 5.00\\
\midrule
OSSA & 0.088 & 0.309 & 0.614 & 0.820 & 0.923 & 0.970 & 0.988 & 0.996 & 0.998 & 0.999\\
TSSA & 0.093 & 0.349 & 0.666 & 0.862 & 0.953 & 0.985 & 0.996 & 0.999 & 1.000 & 1.000\\
AA & 0.090 & 0.386 & 0.716 & 0.909 & 0.979 & 0.997 & 1.000 & 1.000 & 1.000 & 1.000\\
\bottomrule
\end{tabular}
\end{table}

}

\revision{
\subsection{CIFAR10 dataset}
				We ran similar experiments with the CIFAR10 dataset as in Section~\ref{ssec:MNIST} with the MNIST dataset. The fooling rates for the One Step Spectral Attack and the Two Step Spectral Attack are represented on Figure~\ref{fig:fr_compared_CIFAR10} for different Euclidean budget ranging from 0 to 2. Figure~\ref{fig:fr_diff_CIFAR10_TSSA-OSSA} and Figure~\ref{fig:fr_diff_CIFAR10_TSSA-AA} show the difference between the fooling rate of the TSSA and respectively the fooling rate of OSSA and the fooling rate of AutoAttack. Figure~\ref{fig:CIFAR10-inf_norm_compared_TSSA-OSSA-AA} provides the comparison between the infinity norm of the attack of TSSA and AutoAttack with respect again to its Euclidean norm.}

\revision{Similar conclusions to the MNIST case can be drawn from the CIFAR10 results. It strengthens our confidence in the results and their interpretations stated in Section~\ref{ssec:MNIST}.
}

\begin{figure}
    \centering
		\includegraphics[width=0.7\textwidth]{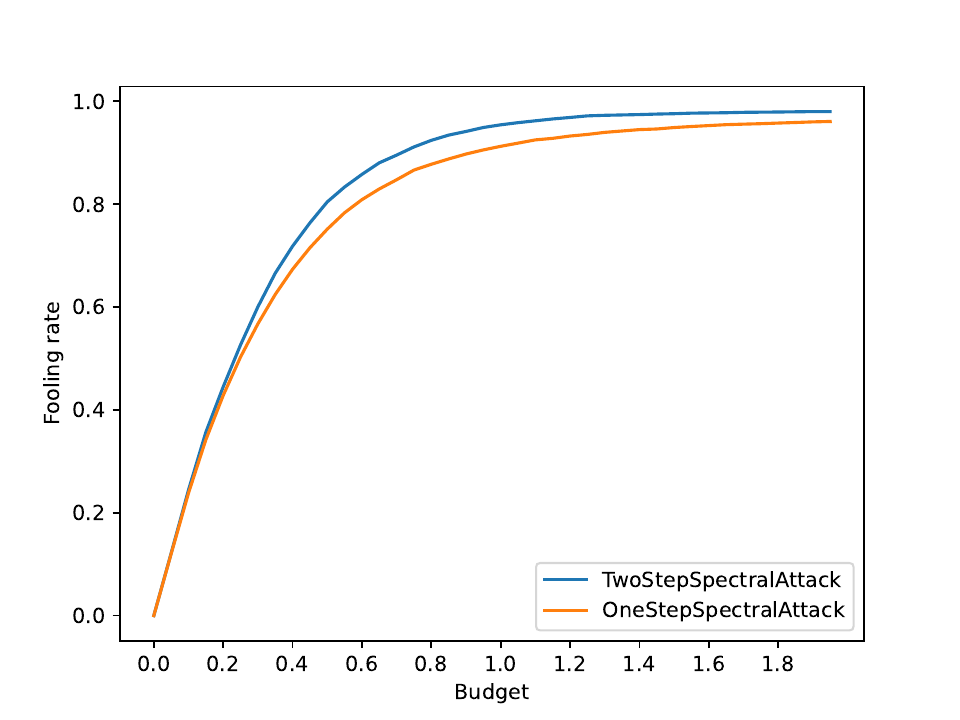}
    \caption{Fooling rates with respect to the Euclidean budget on all CIFAR10 test-set.}
    \label{fig:fr_compared_CIFAR10}
\end{figure}

\begin{figure}
    \centering
    \includegraphics[width=\textwidth]{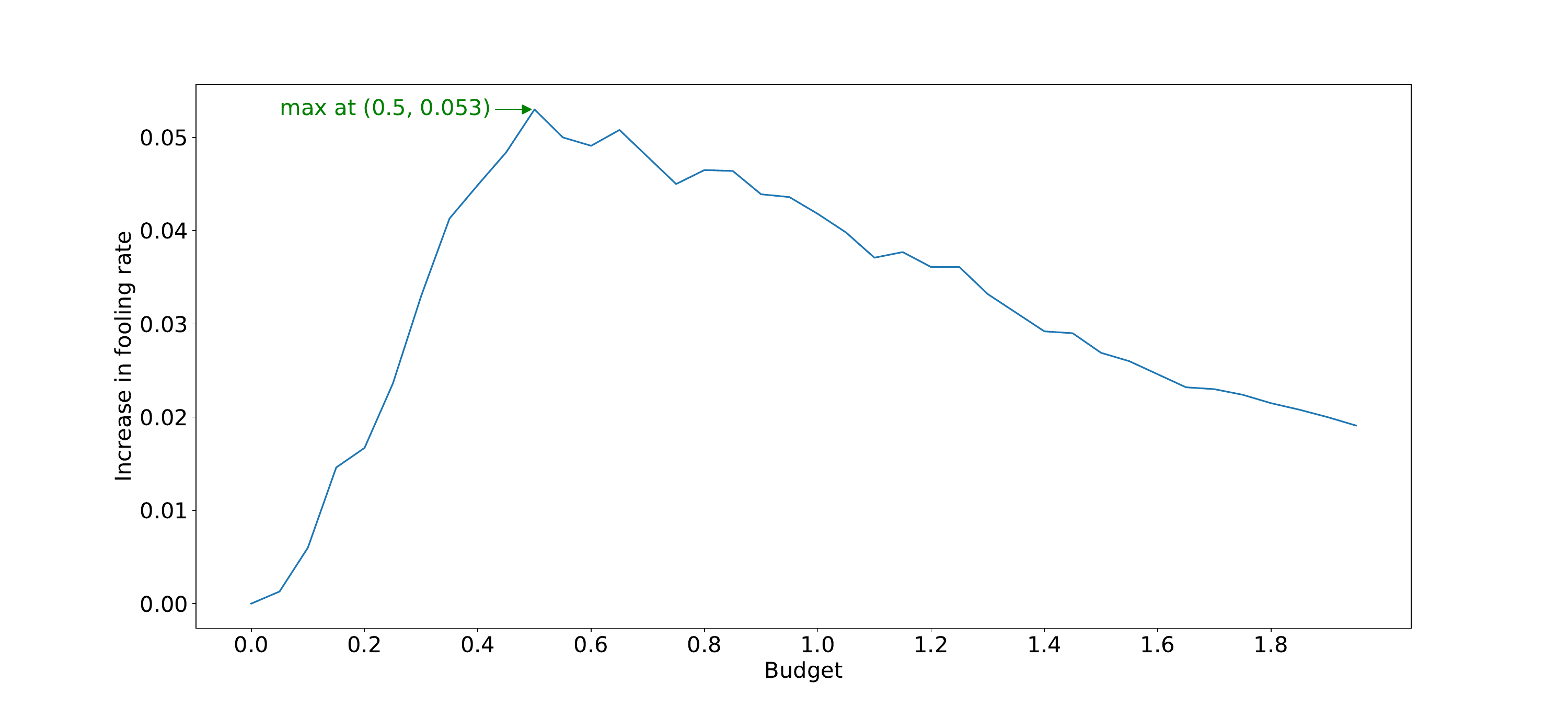}
    \caption{Difference between the fooling rate of the TSSA ($\text{fr}_{\text{TSSA}}$) and the one of the OSSA ($\text{fr}_{\text{OSSA}}$) with respect to the Euclidean budget ($\text{fr}_{\text{TSSA}} - \text{fr}_\text{OSSA}$) on CIFAR10.}
    \label{fig:fr_diff_CIFAR10_TSSA-OSSA}
\end{figure}

\begin{figure}
	\centering
				\includegraphics[width=\textwidth]{./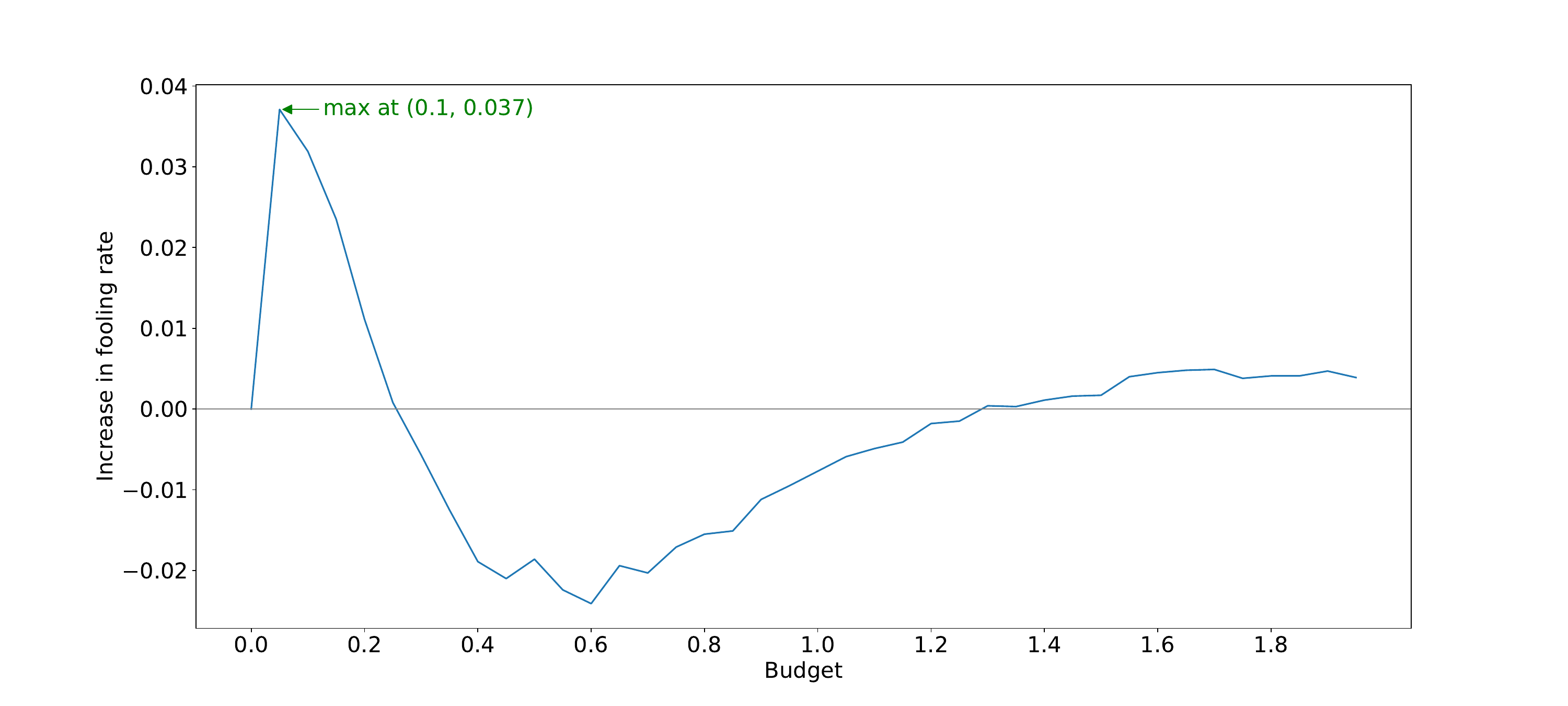}
	\caption{Difference between the fooling rate of the TSSA ($\text{fr}_{\text{TSSA}}$) and the one of AutoAttack ($\text{fr}_{\text{AA}}$) with respect to the Euclidean budget ($\text{fr}_{\text{TSSA}} - \text{fr}_\text{AA}$) on CIFAR10.}
	\label{fig:fr_diff_CIFAR10_TSSA-AA}
\end{figure}

\begin{figure}
	\centering
	\includegraphics[width=\textwidth]{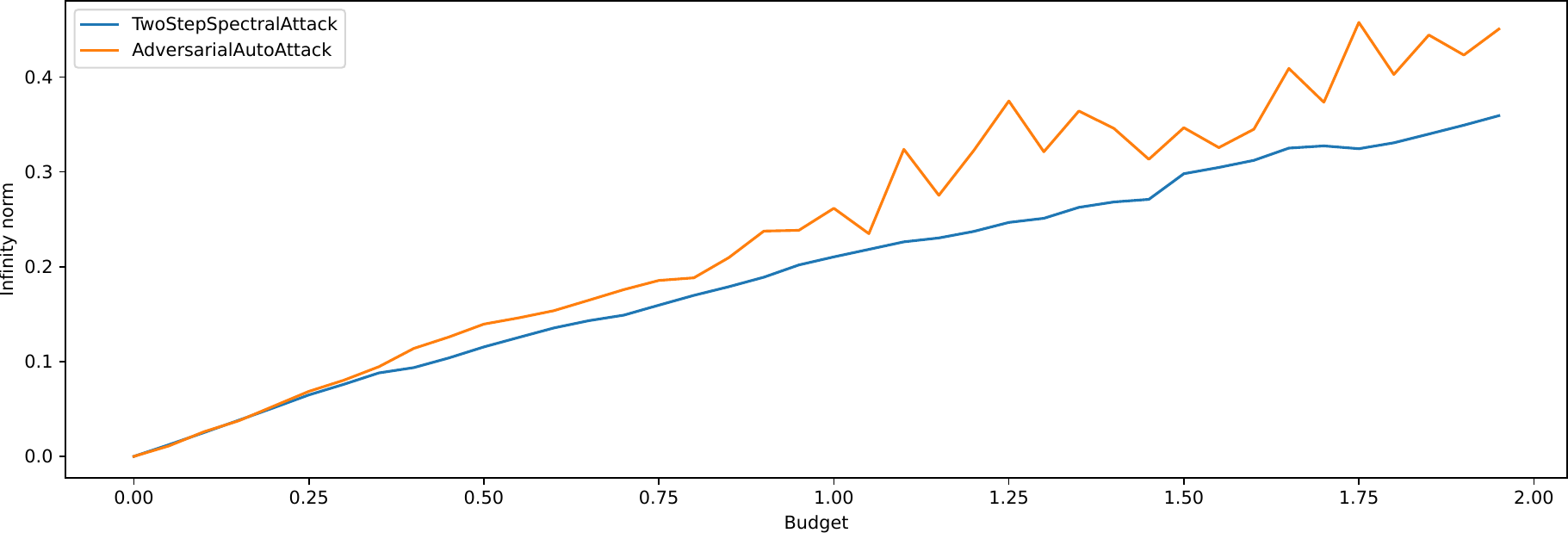}
	\caption{$\norm{\cdot}_\infty$ of the attack with respect to its Euclidean norm $\norm{\cdot}_2$ on CIFAR10.}
	\label{fig:CIFAR10-inf_norm_compared_TSSA-OSSA-AA}
\end{figure}

\minrevision{
The values of the fooling rates with respect to the Euclidean budget for the 3 examined adversarial attack procedures on CIFAR10 are detailed in Table~\ref{tab:fr_cifar10_table}.
\begin{table}
    \centering
    \caption{Fooling rates with respect to the Euclidean budget on all CIFAR10 test-set for the 3 attack procedures.}
    \label{tab:fr_cifar10_table}
\begin{tabular}{lccccccccc}
\toprule
Budget ($\varepsilon$) & 0.20 & 0.40 & 0.60 & 0.80 & 1.00 & 1.20 & 1.40 & 1.60 & 1.80\\
\midrule
OSSA & 0.428 & 0.673 & 0.809 & 0.877 & 0.912 & 0.932 & 0.945 & 0.953 & 0.958\\
TSSA & 0.445 & 0.718 & 0.858 & 0.924 & 0.954 & 0.969 & 0.974 & 0.977 & 0.979\\
AA & 0.434 & 0.737 & 0.882 & 0.939 & 0.962 & 0.970 & 0.973 & 0.973 & 0.975\\
\bottomrule
\end{tabular}

\end{table}
}

\section{Conclusion}
\label{sec:conclusion}

This paper explores the relationship between adversarial attacks and the curvature of the data space. \rerevision{Using the curvature information, we have proposed a Two Steps Attack that achieves better results than the One Step Spectral Attack presented by \cite{zhaoAdversarialAttackDetection2019} on relatively small architectures. As the computation of the attack involves the computation of the whole Jacobian matrix of the neural network, some limitations will be reached on larger architectures. The main goal of this study is the mathematical emphasis on the role of geometrical properties of the data space through the neural network Riemannian foliations rather than providing a new numerical attacking strategy that could outperform existing state-of-art attacks for various neural network architectures.} The analytical mathematical expression of the proposed attack explicitly uses the curvature tensor of the FIM kernel leaves. This emphasizes the importance of geometry in the construction of an efficient attack. Additionally, with simple experiments on a toy example, we have illustrated and confirmed that exploiting such geometrical information is relevant and actually outperforms a state of the art strategy. The mathematical construction of the proposed method opens also new opportunities for future research. Indeed, it is clear that, in the case of neural networks, the geometrical properties of the leaves of the kernel foliation is related to its robustness as explained
above but more generally to its power to separate data points. Therefore, the geometry of the foliation is directly linked to the complexity of the model (i.e. the neural network architecture). A deeper analysis of these aspects should help in gaining knowledge and some explainability of the underlying principles at play in neural network learning and more generally deep learning methods. This will be the focus of our future research.

\newpage
\begin{appendices}

\section{Notions of Riemannian geometry}
\label{app:intro_geo}
Let $\M = (M,g)$ be a real Riemannian manifold of dimension $n$.

\begin{definition}[length of a path]
Let $\gamma:\bra{0,1} \to M$ be a $C^1$ path. The \emph{length} of $\gamma$, denoted $l(\gamma)$ is defined by
\[l(\gamma) = \int_0^1 g\pa{\gamma\pa{t},\gamma'\pa{t},\gamma'\pa{t}}^{1/2} dt\]
\end{definition}

\begin{definition}[geodesic]
Let $p,~q$ be two points of $M$. The $C^1$ path $\gamma:\bra{0,1}\to M$ is said to be a \emph{geodesic} between $p$ and $q$ if:
\begin{align*}
&    \gamma(0) = p,~ \gamma(1) = q\\
&    l(\gamma) = \inf\set{l(\theta),~ \theta\in C^1\pa{[0,1),~M},~\theta(0)=p,~ \theta(1) = q
}
\end{align*}
\end{definition}

\begin{definition}[geodesic distance]
\label{def:geo_dist}
The length of a geodesic between $p$ and $q$ is called the \emph{geodesic distance}, denoted $d(p,q)$.
\end{definition}

In what follows, the Levi-Civita connexion of $\M$ will be denoted $\LC$.

\begin{definition}
Let $\gamma:\bra{0,1}\to M$ be a $C^2$ path. It it said to be a geodesic of $\LC$ is, for each $t\in]0,1[$ the following holds:
\begin{equation}
\label{eq:geodesic_LC}
\LC_{\dot{\gamma}(t)} \dot{\gamma}(t) = 0
\end{equation}
The differential equation \ref{eq:geodesic_LC} translates in local coordinates to:
\begin{equation}
\der{^2 \gamma^k}{t^2}(t) + \Gamma_{ij}^k \pa{\gamma\pa{t}}\der{\gamma^i}{t}(t) \der{\gamma^j}{t}(t) = 0
\end{equation}
\end{definition}

For an initial point $p=\gamma(0)$, Cauchy-Lipschitz theorem shows that ther is a unique local solution to \autoref{eq:geodesic_LC}.

\begin{prop}
Let $p\in M$. There exist $\varepsilon >0 $ such that for all $v\in T_pM$, $\norm{v} < \varepsilon$, there exist a unique geodesic $\gamma:\bra{0,1} \to M$ such that $\gamma(0) = p,~ \gamma'(0) = v$. The function which to such $v$ associates $\gamma(1)$ with $\gamma$ the geodesic of $\LC$ such that $\gamma(0) = p,~ \gamma'(0) =v$ is called the \emph{exponential map} and denoted $\exp_p$.
\end{prop}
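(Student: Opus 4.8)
The plan is to reduce the statement to the classical Cauchy–Lipschitz existence–uniqueness theorem for ODEs applied to the geodesic equation, and then to upgrade the resulting local-in-time existence to existence on the whole interval $\bra{0,1}$ for small initial velocities, using the homogeneity of geodesics under affine reparametrisation. First I would fix a coordinate chart around $p$, so that the equation $\LC_{\dot\gamma}\dot\gamma=0$ becomes $\der{^2\gamma^k}{t^2}+\Gamma_{ij}^k\pa{\gamma}\der{\gamma^i}{t}\der{\gamma^j}{t}=0$, and rewrite it as a first-order autonomous system on the corresponding chart of $TM$ with unknown $\pa{\gamma,u}$ by setting $\dot\gamma=u$ and $\dot u^k=-\Gamma_{ij}^k\pa{\gamma}u^iu^j$. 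The right-hand side is smooth because the Christoffel symbols are, so for each initial datum $\pa{q,v}$ Cauchy–Lipschitz gives a unique maximal solution $\gamma_{q,v}$ with $\gamma_{q,v}(0)=q$ and $\gamma_{q,v}'(0)=v$; since the field is locally Lipschitz, on any compact neighbourhood of $\pa{p,0}$ in $TM$ there is even a uniform $\delta>0$ such that all these solutions are defined on $]-\delta,\delta[$.

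\textbf{From local to uniform existence.} Next I would use that the unit sphere $S=\tci{w\in T_pM}{\norm{w}=1}$ is compact: applying the previous step to a compact neighbourhood of $\set{p}\times S$ yields a single $\delta_1>0$ such that $\gamma_{p,w}$ is defined on $]-\delta_1,\delta_1[$ for every $w\in S$. Then I would invoke the rescaling property of the geodesic equation: for $c>0$, if $\gamma$ is a solution then so is $t\mapsto\gamma(ct)$, with initial velocity multiplied by $c$, whence by uniqueness $\gamma_{p,cv}(t)=\gamma_{p,v}(ct)$. Consequently, for $v\neq 0$ with $\norm{v}<\delta_1$, writing $v=\norm{v}w$ with $w\in S$, the curve $\gamma_{p,v}(t)=\gamma_{p,w}\pa{\norm{v}t}$ is defined for $\abs{t}<\delta_1/\norm{v}$, an interval strictly containing $\bra{0,1}$; for $v=0$ the constant curve works. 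Taking $\varepsilon:=\delta_1$ then produces, for every $v$ with $\norm{v}<\varepsilon$, a geodesic $\gamma:\bra{0,1}\to M$ with $\gamma(0)=p$ and $\gamma'(0)=v$, its uniqueness being precisely the uniqueness clause of Cauchy–Lipschitz. Hence $v\mapsto\gamma_{p,v}(1)$ is well defined on the ball of radius $\varepsilon$ in $T_pM$, and this map is by definition $\exp_p$.

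\textbf{Main obstacle.} Everything except one point is routine bookkeeping inside a chart. The one genuinely non-trivial step is passing from the purely local-in-time existence handed by Cauchy–Lipschitz to a time interval that is uniform over a whole ball of initial velocities; this is exactly where the compactness of the unit sphere in $T_pM$ must be combined with the scaling invariance $\gamma_{p,cv}(t)=\gamma_{p,v}(ct)$ of the geodesic equation. (The smoothness of $\exp_p$, which the statement does not claim, would follow along the same lines from smooth dependence of ODE solutions on their initial data, but I would not need it here.)
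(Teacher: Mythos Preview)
Your argument is correct and is the standard route to this result. The paper, however, does not actually supply a proof of this proposition: it is stated in the appendix as a background fact, preceded only by the sentence ``For an initial point $p=\gamma(0)$, Cauchy--Lipschitz theorem shows that there is a unique local solution to \autoref{eq:geodesic_LC}.'' In other words, the paper simply invokes Cauchy--Lipschitz and leaves the passage from local-in-time existence to existence on all of $\bra{0,1}$ for small $v$ implicit. Your write-up makes that passage explicit via the rescaling identity $\gamma_{p,cv}(t)=\gamma_{p,v}(ct)$ together with compactness of the unit sphere in $T_pM$, which is exactly the missing ingredient; so your proposal is a genuine proof where the paper offers only a reference to a standard theorem.
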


\begin{prop}
Let $v\in T_pM$, $\norm{v} < \varepsilon$ and let $q = \exp_p(v)$. Then $\gamma:t\in\bra{0,1} \mapsto \exp_p(tv)$ is a geodesic between $p$ and $q$. Besides, $l(\gamma) = \norm{v}$.
\end{prop}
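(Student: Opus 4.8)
The plan is to identify $\gamma$ with the $\LC$-geodesic issued from $p$ with initial velocity $v$, read off its length from constant speed, and then upgrade this to the minimality required by the definition of geodesic via Gauss's lemma. For the first point, let $\sigma$ be the unique $\LC$-geodesic with $\sigma(0)=p$ and $\dot\sigma(0)=v$ (Cauchy--Lipschitz, as in the proposition just above). For fixed $t\in\bra{0,1}$, the reparametrized curve $s\mapsto\sigma(ts)$ has covariant acceleration equal to $t^2$ times that of $\sigma$, hence vanishes, so it is again an $\LC$-geodesic; it starts at $p$ with velocity $tv$, so by uniqueness it is the geodesic of initial velocity $tv$, and evaluating at $s=1$ yields $\sigma(t)=\exp_p(tv)=\gamma(t)$. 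Thus $\gamma$ is itself an $\LC$-geodesic with $\gamma(0)=p$ and $\gamma(1)=\exp_p(v)=q$. Since $\LC$ is compatible with $g$, $\frac{d}{dt}g\pa{\dot\gamma,\dot\gamma}=2\,g\pa{\LC_{\dot\gamma}\dot\gamma,\dot\gamma}=0$, so $\norm{\dot\gamma(t)}$ is constant, equal to $\norm{v}$, and therefore $l(\gamma)=\int_0^1\norm{\dot\gamma(t)}\,dt=\norm{v}$.

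It remains to show that $\gamma$ minimizes length among all $C^1$ paths from $p$ to $q$, so that $l(\gamma)=\norm{v}$ equals the infimum defining the geodesic distance. Shrinking $\varepsilon$ if needed, $\exp_p$ is a diffeomorphism of $B(0,\varepsilon)\subset T_pM$ onto a normal neighborhood $U$ of $p$, and I would work in geodesic polar coordinates $(r,\xi)$ on $U\setminus\set{p}$. The key input is Gauss's lemma: the radial geodesics $r\mapsto\exp_p(r\xi)$ are unit speed and orthogonal to the geodesic spheres $\set{r=\text{const}}$, so the metric splits as $g=dr^2+h_r$ with $h_r$ positive semidefinite on the sphere directions. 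For a $C^1$ path $\theta$ from $p$ to $q$ that stays in $U$, writing $\theta(t)=\exp_p\pa{r(t)\xi(t)}$ gives $\norm{\dot\theta}^2=\dot r^2+h_r(\dot\xi,\dot\xi)\ge\dot r^2$, whence $l(\theta)\ge\int_0^1\abs{\dot r}\,dt\ge r(1)=\norm{v}$; for a path that leaves $U$, applying this estimate up to the first time it meets the sphere of radius $\rho\in(\norm{v},\varepsilon]$ already gives $l(\theta)\ge\rho>\norm{v}$. In every case $l(\theta)\ge\norm{v}=l(\gamma)$, so $\gamma$ attains the infimum and is a geodesic between $p$ and $q$.

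The substantive obstacle is this last step, and inside it Gauss's lemma itself --- proved by differentiating the energy of a variation through radial geodesics, equivalently by a Jacobi-field computation --- together with the mild care needed near $t=0$ where polar coordinates degenerate. Both are standard, and I would cite a Riemannian geometry reference such as \cite{1988riemannian} rather than reproduce them; the first two assertions (that $\gamma$ solves the geodesic ODE and that $l(\gamma)=\norm{v}$) are then immediate from metric compatibility and the homogeneity of the exponential map.
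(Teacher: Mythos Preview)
Your argument is correct and complete. Note, however, that the paper does not actually supply a proof for this proposition: it is stated without proof in Appendix~\ref{app:intro_geo} as a standard fact of Riemannian geometry, alongside the other definitions and propositions recalled there. Your write-up therefore goes well beyond what the paper does.

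That said, your approach is exactly the right one given the paper's conventions. The paper defines a \emph{geodesic between $p$ and $q$} as a length-minimizing $C^1$ path (not merely a solution of $\LC_{\dot\gamma}\dot\gamma=0$), so the minimality step via Gauss's lemma and geodesic polar coordinates is genuinely required and not optional. Your decomposition --- homogeneity of $\exp_p$ to identify $\gamma$ with the $\LC$-geodesic, metric compatibility for constant speed and hence $l(\gamma)=\norm{v}$, then Gauss's lemma for minimality --- is the standard textbook route and is what one would cite from \cite{1988riemannian} or any comparable reference, which is presumably why the paper omits it.
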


\begin{rem}
Note that $\norm{v} = g\pa{p; v,v}^{1/2}$ is the riemannian norm and not the Euclidean norm.
\end{rem}

\begin{definition}
\label{def:normal_coord}
$d\exp_p(0) = Id$ and thus the exponential map is a local diffeomorphism. Around each point, $\exp_p$ defines a chart of $M$. The local coordinates we get are called the \emph{normal coordinates} at $p$.
\end{definition}

\begin{definition}[logarithm map]
\label{def:log_map}
Let $p\in M$ and $\varepsilon>0$ such that the exponential map is defined in $B(0,\varepsilon)$. For all $q\in M$ such that $d(p,q) < \varepsilon$ we set:
\[\log_p(q) = v,~ \exp_p v = q.\]
\end{definition}

\begin{rem}
One can compute the logarithm by solving the following differential system:
\begin{equation}
\cas{
\der{^2 \gamma^k}{t^2}(t) + \Gamma_{ij}^k\pa{\gamma(t)} \der{\gamma^i}{t}(t) \der{\gamma^j}{t}(t) = 0 \\
\gamma(0) = p,~ \gamma(1) = q
}
\end{equation}
\end{rem}

\begin{prop}
In normal coordinates at $p\in M$, geodesics with origin $p$ are strait lines going through the origin.
\end{prop}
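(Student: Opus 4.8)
In normal coordinates at $p\in M$, geodesics with origin $p$ are straight lines through the origin.

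The plan is to exploit the definition of the exponential map together with the way normal coordinates are constructed. Recall that the normal coordinate chart around $p$ is precisely the map $\exp_p^{-1}$, so a point $q$ near $p$ has normal coordinates $\overline{q} = \log_p(q) \in T_pM \cong \R^n$. First I would fix an arbitrary tangent vector $v \in T_pM$ with $\norm{v}$ small enough, and consider the unique geodesic $\gamma_v$ of $\LC$ with $\gamma_v(0)=p$ and $\gamma_v'(0)=v$, so that by definition $\exp_p(tv) = \gamma_v(t)$ for $t\in[0,1]$. In normal coordinates this curve reads $t \mapsto \overline{\gamma_v(t)} = \exp_p^{-1}\pa{\exp_p(tv)} = tv$, which is exactly the line segment through the origin with direction $v$ and constant speed. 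Since every geodesic issuing from $p$ is of this form $\gamma_v$ for some $v$ (by the existence-uniqueness statement in the proposition defining $\exp_p$), this shows that every such geodesic is a straight line through the origin in the normal chart.

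The second thing I would check is the converse-flavoured fact implicit in the statement, namely that these straight lines really are the geodesics (parametrised affinely), not merely that geodesics happen to be straight. Here I would invoke the homogeneity property of geodesics: for a scalar $a>0$, the curve $s\mapsto \gamma_v(as)$ is the geodesic with initial velocity $av$, i.e. $\exp_p(s\,(av)) = \exp_p(as\,v)$. This is what guarantees that the ray $s \mapsto s w$ in the chart, for any fixed $w\in T_pM$ (small), is the image under $\exp_p^{-1}$ of the geodesic $\gamma_w$ restricted to an appropriate interval, so the straight lines through the origin are exactly the geodesics through $p$.

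The main obstacle — really the only subtlety — is purely a domain/uniqueness bookkeeping issue: one must stay inside the radius $\varepsilon>0$ on which $\exp_p$ is a diffeomorphism (as supplied by Definition~\ref{def:normal_coord} and the preceding proposition), so that $\exp_p^{-1}$ is well defined along the whole segment $\{tv : t\in[0,1]\}$ and the identification $\overline{\gamma_v(t)} = tv$ is legitimate. Everything else is immediate from unwinding the definitions of $\exp_p$, $\log_p$, and the normal chart; no Christoffel-symbol computation is needed, although one could alternatively verify the claim by plugging the line $\gamma^k(t) = t v^k$ into the geodesic equation $\ddot\gamma^k + \Gamma_{ij}^k(\gamma)\dot\gamma^i\dot\gamma^j = 0$ and reading off that $\Gamma_{ij}^k(p)v^iv^j = 0$ for all $v$, hence the symmetric part of $\Gamma_{ij}^k$ vanishes at the origin in normal coordinates — but the definitional argument above is cleaner and is the one I would present.
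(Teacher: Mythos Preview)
Your argument is correct and is exactly the standard one: unwind the definition of the normal chart as $\exp_p^{-1}$, so that the geodesic $t\mapsto\exp_p(tv)$ reads $t\mapsto tv$ in those coordinates. The domain caveat you flag is the right one, and the optional Christoffel remark at the end is a pleasant bonus.

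There is nothing to compare against, however: the paper states this proposition in Appendix~\ref{app:intro_geo} as a standard background fact and gives no proof at all. Your write-up would serve perfectly well as the missing justification; the second paragraph on the converse via homogeneity is not required by the statement as phrased (which only asserts that geodesics from $p$ are straight lines, not the converse), so you could drop it if brevity is wanted.
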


\begin{definition}[parallel transport]
Let $v\in T_p M$ and $q= \exp_p x$. The geodesic between $p$ and $q$ is $\gamma: t \in\bra{0,1} \mapsto tx$ in normal coordinates. Besides, the linear differential equation:
\[\LC_{\dot{\gamma}(t)}X(t) = 0,~ X(0) = v\]
has a solution on $\bra{0,1}$ called the \emph{parallel transport} of $v$.
\end{definition}

Parallel transport allows to go from a tangent vector at $q=\exp_p x$ to a tangent vector at $p$.

\section{Fisher Information Metric}
\label{app:FIM}
An important question arising when dealing with Fisher information metric is to know when going in the converse direction is feasible: given a Riemannian manifold $(\X,g)$, is it possible to find a probability family such that $g$ is exactly its Fisher information? This is exactly what is behind the next definition.
\begin{definition}[Statistical model]
\label{def:statistical_model}
A statistical model for a Riemannian manifold $(\X,g)$ is a probability space $(\Omega,\mathcal{T},P)$ such that:
\begin{itemize}
\item It exists a family of probabilities $p_x, \, x \in \X$, absolutely continuous with respect to $P$.
\item For any $x \in X$:
\[
g_{ij}(x) = E_{p_x}\left[\partial_i \ln p_x \partial_j \ln p_x\right]
\]
\end{itemize}
\end{definition}
\begin{rem}
When $p_x$ is $C^2$ with support not depending on $x$ and the conditions for exchanging derivative and expectation are satisfied, then:
\[
g_{ij}(x) = - E_{p_x}\left[\partial_{ij} \ln p_x \right]
\]
In such a case, the metric $g$ is Hessian.
\end{rem}
In nearly all cases considered in machine learning, the metric $g$ is only semi-definite. It thus makes sense to consider its kernel.
\begin{definition}
\label{def:metric_kernel}
Let $g$ be a semi-definite metric on a manifold $\X$. A tangent vector $X \in T_x\X$ is said to belong to the kernel $\ker_x g$ of $g_x$ if for any $Y \in T_x\X$, $g(X,Y)=0$. 
\end{definition}
\begin{prop}
\label{prop:integrability_kerg}
Let $(\X,g)$ be a connected manifold with $g$ a semi-definite metric. If it exists a torsionless connection  $\nabla$ on $T\X$ such that $\nabla g = 0$, then the mapping $x\in \X \to \ker_x g$ defines an integrable distribution, denoted by $\ker g$.
\end{prop}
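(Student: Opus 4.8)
The plan is to verify the two hypotheses of the Frobenius theorem for the candidate distribution $x\mapsto\ker_x g$: that it has locally constant rank (so that it is genuinely a smooth subbundle of $T\X$), and that it is involutive. The torsion-freeness and the compatibility $\nabla g=0$ are exactly the two ingredients needed, one for each hypothesis.

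\textbf{Step 1: constant rank.} I would first show that parallel transport for $\nabla$ is an isometry of $g$, even though $g$ is degenerate. If $\gamma$ is a path from $p$ to $q$ and $X(t),Y(t)$ are parallel along $\gamma$, then $\tfrac{d}{dt}g(X(t),Y(t))=(\nabla_{\dot\gamma}g)(X,Y)+g(\nabla_{\dot\gamma}X,Y)+g(X,\nabla_{\dot\gamma}Y)=0$, so the parallel transport $P_\gamma\colon T_p\X\to T_q\X$ satisfies $g_q(P_\gamma u,P_\gamma v)=g_p(u,v)$ for all $u,v$. Since $P_\gamma$ is a linear isomorphism, it carries $\ker_p g$ isomorphically onto $\ker_q g$; as a connected manifold is path-connected, $\dim\ker_x g$ is the same for all $x$. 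A bundle morphism $T\X\to T^*\X$ of locally constant rank has smooth kernel, so $\ker g$ is a distribution in the classical sense.

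\textbf{Step 2: involutivity.} Let $X,Y$ be local sections of $\ker g$ and let $Z$ be an arbitrary vector field. Torsion-freeness gives $[X,Y]=\nabla_XY-\nabla_YX$, so it suffices to show $g(\nabla_XY,Z)=0$ and $g(\nabla_YX,Z)=0$ for every $Z$. Expanding $X\cdot g(Y,Z)=g(\nabla_XY,Z)+g(Y,\nabla_XZ)$ via $\nabla g=0$: the left-hand side vanishes because $g(Y,\cdot)\equiv 0$, and the last term vanishes for the same reason, whence $g(\nabla_XY,Z)=0$; by symmetry $g(\nabla_YX,Z)=0$. Therefore $g([X,Y],Z)=0$ for all $Z$, i.e. $[X,Y]$ is again a section of $\ker g$, and Frobenius' theorem produces the integrating foliation.

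\textbf{Main obstacle.} The delicate point is Step 1, not the bracket computation: if the rank of $g$ were allowed to jump, $\ker g$ would be only a singular (Stefan--Sussmann) distribution and the classical Frobenius statement would not apply directly, which is consistent with the singular-leaf phenomenon noted for the \texttt{Xor} example (there no torsion-free $\nabla$ with $\nabla g=0$ can exist near the singularity). Metric compatibility of $\nabla$ is exactly what excludes rank jumps here, and it is also precisely what makes the involutivity calculation collapse to the two lines above; torsion-freeness is what licenses replacing $[X,Y]$ by $\nabla_XY-\nabla_YX$ in the first place.
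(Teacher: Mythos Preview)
Your proof is correct and follows essentially the same route as the paper: constancy of $\dim\ker_x g$ via parallel transport (the paper first shows $\nabla_X Y\in\ker g$ for any $X$ and infers preservation under parallel transport, while you equivalently show parallel transport is a $g$-isometry), then involutivity from $[X,Y]=\nabla_XY-\nabla_YX$ combined with the identical computation $X\!\cdot\! g(Y,Z)=g(\nabla_XY,Z)+g(Y,\nabla_XZ)$. Your identification of the constant-rank step as the delicate point matches the paper's remark that without it one is left with a singular foliation.
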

\begin{proof}
It is clear that for any $x\in \X$, $\ker_x g$ is a linear subspace of $T_x\X$. 
Let $X,Y,Z$ be vector fields such that $Y \in \ker g $. Then, since $\nabla g = 0$ by assumption:
\[
X\left(g(Y,Z)\right) = g\left(\nabla_X Y,Z\right) + g\left(Y,\nabla_X Z\right)
\]
Since $Y \in \ker g$:
\[
X\left(g(Y,Z)\right) = 0 = g\left(\nabla_X Y,Z\right)
\]
and so, for any $X$, $\nabla_X Y \in \ker g$. This proves that the parallel transport of a vector in $\ker g$ is a vector in $\ker g$. The dimension of $\ker g$ is thus constant. Now, if $X,Y \in \ker g$, by the above result and since $\nabla$ has vanishing torsion: $[X,Y] = \nabla_X Y -\nabla_Y X \in \ker g$, proving that $\ker g$ is an integrable distribution.
\end{proof}
\begin{rem}
If the dimension of $\ker g$ is not constant, then no torsionless connection $\nabla$ can be such that $\nabla g = 0$. However, there is still a singular foliation associated with $\ker g$, with a canonical stratification by the dimension of $\ker g$.
\end{rem}
\begin{prop}
\label{prop:transverse_metric}
Under the assumptions of prop. \ref{prop:integrability_kerg}, $g$ defines a transverse metric for the $\ker g$ foliation.
\end{prop}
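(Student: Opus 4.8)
The plan is to produce the transverse metric explicitly on the normal bundle of the $\ker g$ foliation and then check the one property that makes it \emph{transverse}, namely holonomy invariance. Write $\nu = T\X/\ker g$ for the normal bundle with projection $\pi\colon T\X\to\nu$, and for a vector field $W$ write $\bar W=\pi(W)$. First I would set, for $x\in\X$ and $\bar X,\bar Y\in\nu_x$ with arbitrary representatives $X,Y\in T_x\X$,
\[
\bar g_x(\bar X,\bar Y) := g_x(X,Y).
\]
By Definition~\ref{def:metric_kernel} the number $g_x(X,Y)$ is unchanged when $X$ (resp.\ $Y$) is altered by an element of $\ker_x g$, so $\bar g$ is well defined; it is symmetric since $g$ is, and it is positive definite on $\nu_x$ precisely because $\ker_x g$ is the null space of $g_x$. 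Hence $\bar g$ is a genuine fibre metric on $\nu$, and by Proposition~\ref{prop:integrability_kerg} the dimension of $\nu$ is constant so this is a smooth bundle metric.

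It remains to show that $\bar g$ is invariant under the holonomy of the foliation, equivalently that $\mathcal{L}_V\bar g=0$ for every smooth section $V$ of $\ker g$, where the Lie derivative on sections of $S^2\nu^*$ is taken through the Bott partial connection $\nabla^{B}_V\bar W:=\pi\big([V,W]\big)$, $W$ any lift of $\bar W$. This is legitimate because $\ker g$ is involutive (Proposition~\ref{prop:integrability_kerg}), so changing the lift $W$ by a section of $\ker g$ changes $[V,W]$ by a section of $\ker g$ and leaves $\pi([V,W])$ unchanged. With this,
\[
(\mathcal{L}_V\bar g)(\bar W_1,\bar W_2)=V\big(\bar g(\bar W_1,\bar W_2)\big)-\bar g(\nabla^{B}_V\bar W_1,\bar W_2)-\bar g(\bar W_1,\nabla^{B}_V\bar W_2).
\]

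The computational core is the identity already used inside the proof of Proposition~\ref{prop:integrability_kerg}. Fix $V\in\Gamma(\ker g)$ and arbitrary vector fields $W_1,W_2$. Since $\nabla g=0$,
\[
V\big(g(W_1,W_2)\big)=g(\nabla_V W_1,W_2)+g(W_1,\nabla_V W_2).
\]
Because $\nabla$ is torsionless, $\nabla_V W_i=\nabla_{W_i}V+[V,W_i]$, and $\nabla_{W_i}V\in\ker g$ by the parallel-transport step in the proof of Proposition~\ref{prop:integrability_kerg} (a covariant derivative of a section of $\ker g$ remains in $\ker g$). Therefore $g(\nabla_{W_i}V,\cdot)\equiv 0$ and
\[
V\big(g(W_1,W_2)\big)=g([V,W_1],W_2)+g(W_1,[V,W_2]).
\]
Passing to the quotient, $g(W_1,W_2)=\bar g(\bar W_1,\bar W_2)$ and $g([V,W_i],W_j)=\bar g\big(\pi([V,W_i]),\bar W_j\big)=\bar g(\nabla^{B}_V\bar W_i,\bar W_j)$; substituting into the displayed formula for $\mathcal{L}_V\bar g$ gives $(\mathcal{L}_V\bar g)(\bar W_1,\bar W_2)=0$ for all $\bar W_1,\bar W_2$. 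Hence $\mathcal{L}_V\bar g=0$ for every $V\in\Gamma(\ker g)$, so $\bar g$ is a holonomy-invariant fibre metric on the normal bundle, i.e.\ a transverse metric, and $\ker g$ is a Riemannian foliation.

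I expect the delicate points to be bookkeeping rather than substance: checking that $\bar g$ and the Bott connection genuinely descend to $\nu$ (which follows respectively from $\ker g$ being the null space of $g$ and from its involutivity), and recalling that $\nabla_W V\in\ker g$ whenever $V\in\Gamma(\ker g)$ — the one place where the hypotheses $\nabla g=0$ and vanishing torsion are actually used, and which is exactly the claim established mid-proof of Proposition~\ref{prop:integrability_kerg}. Everything else is a direct transcription of the Leibniz rule for $\nabla g=0$ down to the quotient bundle.
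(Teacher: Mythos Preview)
Your argument is correct. You construct the induced fibre metric $\bar g$ on the normal bundle $\nu=T\X/\ker g$, observe it is well defined and positive definite because $\ker g$ is precisely the null space of $g$, and then verify holonomy invariance by showing $\mathcal{L}_V g=0$ for every $V\in\Gamma(\ker g)$. The key step---using $\nabla g=0$ together with torsion-freeness to replace $\nabla_V W_i$ by $[V,W_i]$ modulo $\ker g$---is exactly right, and your appeal to the intermediate claim of Proposition~\ref{prop:integrability_kerg} (that $\nabla_W V\in\ker g$ whenever $V\in\Gamma(\ker g)$) is legitimate.

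By contrast, the paper does not give a proof at all: it simply records that this is ``essentially prop.\ 3.2, p.\ 78'' in Molino's \emph{Riemannian Foliations} \cite{1988riemannian}. What you have written is, in effect, the content behind that citation, specialised to the present hypotheses. So your route is not so much \emph{different} from the paper's as it is a self-contained unpacking of the reference the paper invokes; the benefit is that a reader sees exactly where each hypothesis ($\nabla g=0$, zero torsion, constant rank of $\ker g$) enters, which the bare citation hides.
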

\begin{proof}
This is essentially prop 3.2, p. 78 in \cite{1988riemannian}.
\end{proof}
The leaves of the $\ker g$ foliation are neutral submanifolds for the fisher information metric, that is moving along them will not modify the output distribution. On the other hand, the transverse metric is a measure of output variation when moving in a direction normal to the leaves. 

\revision{
\section{Proofs}
\label{app:proofs}

\begin{proof}[Proof of Proposition~\ref{prop:ossa_vector}]

To maximize this quadratic form under the constraint $\norm{v}_2^2 = \varepsilon^2$, Karush-Kuhn-Tucker's optimality conditions\footnote{See \cite{kuhn2014nonlinear} for more details.} ensure that there exists a scalar $\lambda \in\R$ such that the optimum attack $\hat{v}$ satisfies:
\begin{align*}
				& \nabla_v \pa{v^T G_x v} - \lambda \nabla_v\pa{\norm{v}_2^2 - \varepsilon^2} = 0 \\
				\implies & \pa{G_x + G_x^T}{\hat{v}} = \lambda 2 {\hat{v}} \\
				\implies & G_x {\hat{v}} = \lambda {\hat{v}}
.\end{align*}

Otherwise said, the optimum ${\hat{v}}$ is in the set of $G_x$'s eigenvectors. In this case:

\begin{equation*}
				\norm{{\hat{v}}}_\X^2 = {\hat{v}}^T G_x {\hat{v}}
										= {\hat{v}}^T \lambda {\hat{v}}
										= \lambda \norm{{\hat{v}}}_2^2
										= \lambda \varepsilon^2
.\end{equation*}

Thus, ${\hat{v}}$ corresponding to the largest eigenvalue maximizes the Riemannian norm.

\end{proof}
}
\end{appendices}

\bibliography{main}
\end{document}